\newcommand{\AMF}[1]{\ifthenelse{\boolean{showcomments}}{\textcolor{blue}{AMF: #1}}{}}
\newcommand{\amin}[1]{\ifthenelse{\boolean{showcomments}}{\textcolor{red}{AR: #1}}{}}
\newcommand{\andrew}[1]{\ifthenelse{\boolean{showcomments}}{\textcolor{cyan}{AW: #1}}{}}
\newcommand{\revised}[2]{#2}
\renewcommand{\AA}{{\mathcal{A}}}
\newcommand{\XX}{{\mathcal{X}}}
\newcommand{\XA}{\XX\times\AA}
\newcommand{\beq}{\begin{equation}}
\newcommand{\eeq}{\end{equation}}
\newcommand{\beqa}{\begin{eqnarray}}
\newcommand{\eeqa}{\end{eqnarray}}
\newcommand{\beqan}{\begin{eqnarray*}}
\newcommand{\eeqan}{\end{eqnarray*}}
\newcommand{\ben}{\begin{eqnarray*}}
\newcommand{\een}{\end{eqnarray*}}
\newcommand{\norm}[1]{\left\Vert#1\right\Vert}
\newcommand{\smallnorm}[1]{\Vert#1\Vert}
\newcommand{\abs}[1]{\left\vert#1\right\vert}
\newcommand{\Real}{\mathbb R}
\newcommand{\real}{\mathbb R}
\newcommand{\eps}{\varepsilon}
\newcommand{\ra}{\rightarrow}
\newcommand{\argmax}{\mathop{\textrm{argmax}}}
\newcommand{\Vmax}{V_{\textrm{max}}}
\newcommand{\eqdef}{\triangleq}
\newcommand{\MM}{\mathcal{M}}
\newcommand{\Qpi}{Q^\pi}
\newcommand{\Tpi}{T^\pi}
\newcommand{\Topt}{{T^*}}
\newcommand{\Id}{\mathbf{I}}
\newcommand{\EEX}[2]{{\mathbb E}_{#1}\left[#2\right]}
\newcommand{\Qopt}{Q^*}
\newcommand{\Vpi}{V^\pi}
\newcommand{\Vopt}{V^*}
\newcommand{\Vhat}{\hat{V}}
\newcommand{\PKernel}{\mathcal{P}}
\newcommand{\RKernel}{\mathcal{R}}
\newcommand{\pihat}{{\hat{\pi}}}
\newcommand{\piopt}{{\pi^*}}
\newcommand{\KL}{\textsf{KL} }
\newcommand{\PKernelhat}{\hat{\PKernel} }
\newcommand{\PKernelpihat}{\hat{\PKernel}^{\pi} }
\newcommand{\dx}{\mathrm{d}x}
\newcommand{\dy}{\mathrm{d}y}
\newcommand{\dz}{\mathrm{d}z}
\newcommand{\drho}{\mathrm{d}\rho}
\newcommand{\PKernelpi}{{\PKernel}^{\pi}}
\newcommand{\Gpi}{G^\pi}
\newcommand{\rpi}{r^\pi}
\newcommand{\BB}{\mathcal{B}}
\newcommand{\pigreedy}{{\pi}_{g}}
\def\balign#1\ealign{\begin{align}#1\end{align}}
\def\baligns#1\ealigns{\begin{align*}#1\end{align*}}
\def\balignat#1\ealign{\begin{alignat}#1\end{alignat}}
\def\balignats#1\ealigns{\begin{alignat*}#1\end{alignat*}}
\def\bitemize#1\eitemize{\begin{itemize}#1\end{itemize}}
\def\benumerate#1\eenumerate{\begin{enumerate}#1\end{enumerate}}
\newenvironment{talign*}
 {\csname align*\endcsname}
 {\endalign}
\newenvironment{talign}
 {\csname align\endcsname}
 {\endalign}
\def\balignst#1\ealignst{\begin{talign*}#1\end{talign*}}
\def\balignt#1\ealignt{\begin{talign}#1\end{talign}}
\let\originalleft\left
\let\originalright\right
\renewcommand{\left}{\mathopen{}\mathclose\bgroup\originalleft}
\renewcommand{\right}{\aftergroup\egroup\originalright}
\def\tinycitep*#1{{\tiny\citep*{#1}}}
\def\tinycitealt*#1{{\tiny\citealt*{#1}}}
\def\tinycite*#1{{\tiny\cite*{#1}}}
\def\smallcitep*#1{{\scriptsize\citep*{#1}}}
\def\smallcitealt*#1{{\scriptsize\citealt*{#1}}}
\def\smallcite*#1{{\scriptsize\cite*{#1}}}
\def\reals{\mathbb{R}} 
\def\<{\left\langle} 
\def\>{\right\rangle}
\def\defeq{\triangleq} 
\def\norm#1{\left\|{#1}\right\|} 
\newcommand{\vecle}{\preccurlyeq}
\newcommand{\vecge}{\succcurlyeq}
\newcommand{\grad}{\nabla}
\providecommand{\argmax}{\mathop\mathrm{arg max}} 
\newtheorem{theorem}{Theorem}
\newtheorem{lemma}[theorem]{Lemma}
\renewenvironment{proof}{\noindent\textbf{Proof.}\hspace*{.3em}}{\qed\\}
\newenvironment{proof-sketch}{\noindent\textbf{Proof Sketch}
  \hspace*{1em}}{\qed\bigskip\\}
\newenvironment{proof-idea}{\noindent\textbf{Proof Idea}
  \hspace*{1em}}{\qed\bigskip\\}
\newenvironment{proof-of-lemma}[1][{}]{\noindent\textbf{Proof of Lemma {#1}}
  \hspace*{1em}}{\qed\\}
\newenvironment{proof-of-theorem}[1][{}]{\noindent\textbf{Proof of Theorem {#1}}
  \hspace*{1em}}{\qed\\}
\newenvironment{proof-attempt}{\noindent\textbf{Proof Attempt}
  \hspace*{1em}}{\qed\bigskip\\}
\newenvironment{remark}{\noindent\textbf{Remark.}
  \hspace*{0em}}{\smallskip}
\newtheorem{proposition}[theorem]{Proposition}
\numberwithin{equation}{section}
\renewcommand{\Pr}[1]{\mathbb{P}\left( #1 \right)}
\renewcommand{\abs}[1]{\left|#1\right|}
\newcommand{\handout}[5]{
  \noindent
  \begin{center}
    \framebox{
      \vbox{
        \hbox to 5.78in { {\bf \title } \hfill #2 }
        \vspace{4mm}
        \hbox to 5.78in { {\Large \hfill #5  \hfill} }
        \vspace{2mm}
        \hbox to 5.78in { {\em #3 \hfill #4} }
      }
    }
  \end{center}
  \vspace*{4mm}
}
\newcommand{\etahatpi}{ {\hat \eta^\pi} }
\newcommand{\errvm}{ \epsilon^\text{value} }
\newcommand{\errpim}{ \epsilon^\text{policy} }
\newcommand{\errv}[1]{ {\epsilon_{#1}^\text{value} } }
\newcommand{\errpi}[1]{ {\epsilon_{#1}^\text{policy} } }
\newcommand{\normfrho}[1]{\norm{#1}_{4, \rho}}
\newif\ifconsiderlater
	\newcommand{\todo}[1]{{\color{cyan} \textbf{XXX [#1] XXX}}}
\newcommand{\todo}[1]{}
\newif\ifSupp
\title{Operator Splitting Value Iteration}
\author{
Amin Rakhsha$^{1,2}$
\And
Andrew Wang$^{1,2}$
\And Mohammad Ghavamzadeh$^3$ \And 
Amir-massoud Farahmand$^{2,1}$
\\ \\
$^1$Department of Computer Science, University of Toronto
\\ 
$^2$Vector Institute \quad
$^3$Google Research
}
\begin{document}
\maketitle

\begin{abstract}
We introduce new planning and reinforcement learning algorithms for discounted MDPs that utilize an approximate model of the environment to accelerate the convergence of the value function.
Inspired by the splitting approach in numerical linear algebra, we introduce \emph{Operator Splitting Value Iteration} (OS-VI) for both Policy Evaluation and Control problems. OS-VI achieves a much faster convergence rate when the model is accurate enough. We also introduce a sample-based version of the algorithm called OS-Dyna. Unlike the traditional Dyna architecture, OS-Dyna still converges to the correct value function in presence of model approximation error. 
\end{abstract}

\section{Introduction}
\label{sec:Introduction}


Consider a planning problem for a discounted MDP with dynamics $\PKernel$. Suppose that we have access to an approximate model $\PKernelhat \approx \PKernel$ as well.
For example, $\PKernel$ might be a high-fidelity, but slow, simulator, and $\PKernelhat$ is a lower-fidelity, but fast, simulator.
Or in the context of model-based reinforcement learning~(MBRL), $\PKernel$ is the unknown dynamics of a real-world system, from which we can only acquire expensive samples, and $\PKernelhat$ is a learned model, from which samples can be cheaply acquired.
%
Can we use this approximate model $\PKernelhat$ to \emph{accelerate} the computation of the value function of a policy $\pi$ (Policy Evaluation (PE) problem) or the optimal value function (Control problem)?

The Value Iteration (VI) algorithm and its approximate variant are fundamental algorithms in Dynamic Programming that can find the (approximate) value of a policy or the optimal value function. They are also the backbone of many reinforcement learning (RL) algorithms such as Temporal Difference Learning~\citep{Sutton1988}, Fitted Value Iteration~\citep{Gordon1995,Ernst05,Munos08JMLR}, and Deep Q Network~\citep{MnihKavukcuogluSilveretal2015}.
Value Iteration, however, can be slow when the discount factor is close to $1$, as its convergence rate is $O(\gamma^k)$. Moreover, even though we could use VI using $\PKernelhat$ instead of $\PKernel$ to avoid expensive queries to $\PKernel$, the obtained value function would converge to a solution different from the value function of the original MDP.
\todo{Mention PI and that it has the same convergence rate?}

This paper proposes an algorithm called \emph{Operator Splitting Value Iteration} (OS-VI) that benefits from an approximate model $\PKernelhat$ to potentially accelerate the convergence of the value function sequence to the value function with respect to (w.r.t.) the true model $\PKernel$ (Section~\ref{sec:OSVI}).
This algorithm is for both PE (Section~\ref{sec:OSVI-OSVI-PE}) and Control (Section~\ref{sec:OSVI-OSVI-Control}) problems.
The acceleration is not uniform though, and depends on how close $\PKernelhat$ is to $\PKernel$ (Section~\ref{sec:Theory}).

%
%
%
%
%

A key inspiration behind OS-VI is the (matrix) splitting approach in the numerical linear algebra, which is used to iteratively solve large linear systems of equations~\citep{Varga2000,Saad2003,GolubVanLoan2013}. With a proper choice of splitting, one may change the convergence rate of linear systems solvers.
We show that the conventional VI for PE can be seen as a particular choice of splitting. This observation suggests that one may choose other forms of splitting as well in order to change the convergence rate. It turns out that we can choose a splitting that benefits from having access to $\PKernelhat$ (Section~\ref{sec:VIandSplitting}).
The new splitting leads to OS-VI for PE.
For the Control problem, the connection between solving linear system of equations and VI is not as straightforward anymore, as the former is linear, while the latter is not, but we can still get inspired from the splitting approach to design OS-VI for Control. The key step of such an algorithm is a new \emph{policy improvement} step. 

The form of the OS-VI algorithm opens up a connection to MBRL where the approximate model $\PKernelhat$ is learned using data. This leads to the OS-Dyna algorithm, inspired by a generic Dyna architecture~\citep{Sutton1990}. OS-Dyna is a hybrid of model-free and model-based algorithms. It uses the learned model in its inner planning loop, alike Dyna, but uses samples from the true model $\PKernel$ in order to correct the effect of errors in the model.
Existing MBRL algorithms would converge to an incorrect solution if the approximate model $\PKernelhat$ does not converge to the true model $\PKernel$. This would be the case whenever model approximation error exists. On the other hand, OS-Dyna can still converge to the correct value function even when $\PKernelhat$ does not converge to $\PKernel$.
As far as we know, this is the first model-based RL algorithm with such property.\todo{Check Predictor-Corrector Policy Optimization (ICML 2019) though. It might be an exception. -AMF}

\section{From value iteration to splitting-based linear system of equations solvers and back}
\label{sec:VIandSplitting}

We briefly describe the VI algorithm and the splitting methods for solving linear system of equations, and explain their connections. We consider a discounted Markov Decision Process (MDP) $(\XX, \AA, \RKernel, \PKernel, \gamma)$~\citep{Bertsekas96,SzepesvariBook10,SuttonBarto2018}.
We defer formal definitions to the supplementary material. We only mention that for a policy $\pi$, we denote by $\PKernelpi$ its transition kernel, by $r^\pi: \XX \ra \Real$ the expected value of its reward distribution, and by $\Vpi = \Vpi(\RKernel, \PKernel)$ its state-value function. We also represent the optimal state-value function by $\Vopt = \Vopt(\RKernel, \PKernel)$ and the optimal policy by $\pi^* = \pi^*(\RKernel, \PKernel)$. The Bellman operator $\Tpi: \BB(\XX) \ra \BB(\XX)$ for policy $\pi$ and the Bellman optimality operator $\Topt: \BB(\XX) \ra \BB(\XX) $ are\footnote{For countable state and action spaces, the integrals are replaced by summations. \revised{}{We present OS-VI and its theoretical analysis for general state/action spaces, but limit our experiments to finite state/action problems.}
%
}
\begin{align*}
	(\Tpi V)(x) \eqdef r^\pi(x) + \gamma \int \PKernelpi(\dy|x) V(y);
	\quad
	(\Topt V)(x) \eqdef \max_{a \in \AA} \left\{ r(x,a) + \gamma \int \PKernel(\dy|x,a) V(y) \right\}.
\end{align*}
%
These operators can be written more compactly as 
$\Tpi: V \mapsto \rpi + \gamma \PKernelpi V$
and
$\Topt: V \mapsto \max_{\pi} \{ \rpi + \gamma \PKernelpi V \}$.
The \emph{greedy} policy at state $x \in \XX$ is
\begin{align}
\label{eq:greedy-policy}
	\pigreedy(x;V) \leftarrow \argmax_{a \in \AA} \left\{ r(x,a) + \gamma \int \PKernel(\dy|x,a) V(y) \right\},
\end{align}
or more compactly, $\pigreedy(V) \leftarrow \argmax_{\pi} \Tpi V$.
We have $\Topt V = T^{\pigreedy(V)} V$, that is,
the effect of the Bellman optimality operator $\Topt$ applied to a value function $V$ is the same as applying the Bellman operator of the \emph{greedy} policy w.r.t. $V$ to $V$.
%
\subsection{Value Iteration}
\label{sec:VIandSplitting-ValueIteration}
The value function $\Vpi$ and the optimal value function $\Vopt$ are the fixed points of the operators $\Tpi$ and $\Topt$, respectively, and satisfy the Bellman equation. For the PE problem, this means that
\begin{align}
\label{eq:BellmanEquation-PE}
	\Vpi = \rpi + \gamma \PKernelpi \Vpi \Rightarrow (\Id - \gamma \PKernelpi) \Vpi = \rpi.
\end{align}

There are several ways to compute the value function of a policy $\pi$ or the optimal value function $\Vopt$, including the iterative methods such as VI and Policy Iteration (PI) algorithms, or solving a linear system of equations (for PE) or linear programming (for Control). We focus on the VI algorithm in this work. 
%
VI repeatedly applies the Bellman operator to the most recent approximation of the value function: Given an initial value function $V_0$, it generates a sequence $(V_k)_{k \geq 0}$ as follows:
\begin{align}
\label{eq:VI}
	V_{k} \leftarrow 
		\begin{cases}
			\Tpi V_{k-1}, & \text{(Policy Evaluation)} \\
			\Topt V_{k-1}. & \text{(Control)}
		\end{cases}
\end{align}
VI for Control can be written in an equivalent form: At iteration $k$, we first compute the greedy policy $\pi_{k} \leftarrow \pigreedy(V_{k-1})$ (policy improvement step), and then $V_{k} \leftarrow T^{\pi_{k}} V_{k-1}$. 
Therefore, the policy improvement step is obtained through finding a policy that is greedy w.r.t. the last value function $V_{k-1}$, that is, the best policy if we only look one step ahead.
This form will be conductive for our later developments. 
As the Bellman operator is a $\gamma$-contraction w.r.t. the supremum norm, the convergence rate of $V_k$ to $\Vpi$ (or $\Vopt$) would be $O(\gamma^k)$. This rate can be slow when $\gamma$ is  close to $1$. 

\subsection{Matrix splitting for solving linear system of equations}
\label{sec:VIandSplitting-MatrixSplitting}
The VI for PE can be seen as a (matrix) splitting-based iterative method to solve the linear system of equations~\eqref{eq:BellmanEquation-PE}.
Consider the linear system
$
	A z = b
$,
with $A \in \Real^{d \times d}$ and $z, b \in \Real^d$. Suppose that $A$ is decomposed to $A = M - N$ for some choices of $M, N \in \Real^{d \times d}$ (more generally, $A$, $M$, and $N$ can be linear operators). Therefore, $z$ satisfies
$M z = N z + b$.
The splitting-based iterative approach defines the new approximation $z_{k}$ given the current  $z_{k-1}$ by solving
\begin{align*}
	M z_{k} = N z_{k-1} + b,
\end{align*}
or equivalently,
\begin{align}
\label{eq:MatrixSplittingIteration-InverseForm}
	z_{k} = M^{-1} (N z_{k-1} + b).
\end{align}
%
To analyze the convergence of this iterative method, consider the error $e_k \eqdef z_k - z$. As $z = M^{-1} (N z + b)$, we have $e_k = M^{-1} (N z_{k-1} + b) - M^{-1} (N z + b) = M^{-1} N (z_{k-1} - z)$, so the dynamics of the error is
%
\begin{align}
\label{eq:MatrixSplittingIteration-ErrorPropagation}
	e_{k} = M^{-1} N e_{k-1} = (M^{-1} N)^2 e_{k-2} = \cdots = (M^{-1} N)^k e_0.
\end{align}
Let $G \eqdef M^{-1} N$. 
The norm of the sequence $(e_k)_{k\geq1}$ can be upper bounded as
\begin{align}
\label{eq:MatrixSplittingIteration-ErrorNormUpperBounding}
	\norm{e_{k} } = \norm{ G^k e_0 } \leq \norm{G^k} \norm{e_0} \leq \norm{G}^k \norm{e_0}.
\end{align}
The errors are (norm-) convergent if $\smallnorm{G} = \smallnorm{M^{-1} N } < 1$, for some choice of norm. More generally, the necessary and sufficient condition for convergence is that the spectral radius $\rho(G)$, the maximum absolute value of eigenvalues of $G$, is smaller than one, see e.g., Theorem 4.1 of~\citet{Saad2003} or Theorem 11.2.1 of~\citet{GolubVanLoan2013}.%
\footnote{For any matrix norm, we have $\rho(G) \leq \norm{G}$, so the condition on the norm is sufficient, but not necessary. Our analysis will be norm-based.}
The convergence is faster if the spectral radius (or norm) is closer to zero.

The success of this iterative procedure depends on how we choose $M$ and $N$ such that the norm (or spectral radius) is as small as possible. Also we want to choose an $M$ such that solving $M z_{k} = N z_{k-1} + b$ is not very expensive. For example, if $M$ is an identity matrix $\Id$, we get that $N = \Id - A$, and the iteration becomes $z_{k} = (\Id - A) z_{k-1} + b$. 
This iteration is convergent if $\rho(\Id - A) < 1$.
Other commonly used choices lead to the Jacobi and Gauss-Seidel methods that are described in the supplementary material. 


We are now ready to make the connection between splitting-based iterative methods and VI for PE. If we choose $A = \Id - \gamma \PKernelpi$, we see that equation $A \Vpi = \rpi$ is indeed the Bellman equation for policy $\pi$~\eqref{eq:BellmanEquation-PE}.
The VI for PE, which is $V_{k} = \gamma \PKernelpi V_{k-1} + \rpi = (\Id - A) V_{k-1} + \rpi$, corresponds to the choice of $M = \Id$ and $N = \gamma \PKernelpi$.
This brings up the question of whether it is possible to split $A$ differently so that the resulting VI-like procedure has better convergence properties.
%
We next suggest a particular choice. 

\section{Operator splitting value iteration algorithm}
\label{sec:OSVI}
\newcommand{\Mpi}{{M^\pi}}
\newcommand{\Npi}{N^\pi}

\newcommand{\Spi}{S^\pi}
\newcommand{\Sopt}{{S^*}}

\newcommand{\rbar}{\bar{r}}
\newcommand{\Tbar}{\bar{T}}

We introduce the Operator Splitting Value Iteration (OS-VI) algorithm. We start from the PE problem and develop the Control version based on that. We also present a visualization of how OS-VI works.

\subsection{OS-VI for policy evaluation}
\label{sec:OSVI-OSVI-PE}

Given a policy $\pi$, true model $\PKernel$, and approximate model $\PKernelhat$, we split $\Id - \gamma \PKernelpi$ to $\Mpi$ and $\Npi$ as
\begin{align*}
	\Mpi = \Id - \gamma \PKernelhat^\pi \qquad , \qquad \Npi = \gamma (\PKernelpi - \PKernelhat^\pi).
\end{align*}
Following the recipe of~\eqref{eq:MatrixSplittingIteration-InverseForm}, the OS-VI algorithm for PE would be
\begin{align}
\label{eq:OSVI-PE-MandNForm}
	V_{k} \leftarrow
	(\Id - \gamma \PKernelhat^\pi)^{-1} \left[\rpi + \gamma (\PKernelpi - \PKernelhat^\pi) V_{k-1}\right],
\end{align}
starting from an initial $V_0$.%
\footnote{Although splitting is originally studied mostly in the context of linear algebra and matrices, we are applying the idea more generally. We are not assuming that the state space $\XX$ is finite, and we allow it to be more general, such as a subset of $\Real^d$. Consequently, $\Mpi$, $\Npi$, $\PKernelpi$, etc. are operators rather than matrices.}
To gain more intuition and prepare for further developments, we define a few notations. We define the
\textit{Varga operator} $\Spi: \BB(\XX) \ra \BB(\XX)$, named after Richard S Varga (1928 -- 2022) who has made significant contributions to matrix analysis, as the mapping between the space of all bounded functions over $\XX$ to the same space as
\begin{align*}
	\Spi: V \mapsto (\Id - \gamma \PKernelhat^\pi)^{-1} \left[\rpi + \gamma (\PKernelpi - \PKernelhat^\pi) V \right].
\end{align*}

Observe that~\eqref{eq:OSVI-PE-MandNForm} can be compactly written as
\begin{align}
\label{eq:OSVI-PE-SpiForm}
	V_{k} \leftarrow \Spi V_{k-1}.
\end{align}
It is not difficult to see that $\Spi \Vpi = \Vpi$, i.e., the value function of a policy $\pi$ is a fixed-point of the Varga operator $\Spi$. This and other properties of the Varga operator are shown in the supplementary material.

Given any value function $V$, define an auxiliary reward function $\rbar_V: \XA \ra \Real$ as 
\begin{align}
\label{eq:rbar-definition}
	\rbar_V(x,a) \eqdef 
	r(x,a) + \gamma \int \left( \PKernel(\dy | x,a) - \PKernelhat(\dy | x,a) \right) V(y).
\end{align}
Similar to $\rpi$, we define the notation $\rbar^\pi_V: \XX \ra \Real$ as $\rbar^\pi_V(x) = \rbar_V(x,\pi(x))$ for a deterministic policy $\pi$ (and similarly for a stochastic policy).
With this notation, the effect of applying $\Spi$ to $V$ is
\[
\Spi V = (\Id - \gamma \PKernelhat^\pi)^{-1} \rbar^\pi_V.
\]
This is the value function of following $\pi$ in an MDP with dynamics $\PKernelhat$ and reward $\rbar_V$. Therefore, at each iteration of OS-VI (PE), we evaluate the policy $\pi$ according to the approximate dynamics, and a reward function that consists of the original reward $r$ and the correction term $\gamma (\PKernel - \PKernelhat) V_{k-1}$.
The computation of this value function is a standard PE problem with the approximate model. For instance, we may use another VI (PE) with dynamics $\PKernelhat$ to solve it: We initialize $U_0 \leftarrow V$, and then for $i \geq 1$, we set
%
$	U_{i} \leftarrow \rbar^\pi_V + \gamma \PKernelhat^\pi U_{i-1}$.
This converges to $\Spi V = (\Id - \gamma \PKernelhat^\pi)^{-1} \rbar^\pi_V$ with the usual rate of $O(\gamma^i)$.
Note that aside from the computation of $\rbar^\pi_V$, which requires querying $\PKernel$ in order to compute the $\PKernelpi V$ term, this iterative process only uses the approximate model $\PKernelhat$, which is assumed to be cheap to access.

What is the benefit of this OS-VI procedure? If the approximate model $\PKernelhat$ is close to the true dynamics $\PKernel$, this leads to a faster convergence of $V_k$ to $\Vpi$, as shall be quantified soon.
The faster convergence is in terms of the number of queries to $\PKernel$, which is assumed to be expensive. To see this, consider the hypothetical case that $\PKernelhat$ is exactly the same as $\PKernel$, for example, if the cheap simulator happens to perfectly match the reality. Then, $\Spi V = (\Id - \gamma \PKernelpi)^{-1} (\rpi + 0 V) = \Vpi$, and the value function for the original MDP is obtained in one iteration of OS-VI.
Of course, we often can only hope for $\PKernelhat \approx \PKernel$. In Section~\ref{sec:Theory}, we study the impact of error in $\PKernelhat$ on the convergence rate of OS-VI in more details, and show that the convergence of OS-VI can be much faster than classic algorithms even if $\PKernelhat$ is only a close approximation of $\PKernel$.
\subsection{OS-VI for control}
\label{sec:OSVI-OSVI-Control}

The VI for Control can be seen as an iterative procedure that computes the greedy policy $\pi_{k} \leftarrow \pigreedy(V_{k-1}) = \argmax_\pi \Tpi V_{k-1}$ in its policy improvement step, and then uses one step of the Bellman operator w.r.t. the obtained policy $\pi_{k}$ to compute the new estimate of the value function $V_{k} \leftarrow T^{\pi_{k}} V_{k-1}$, as described after~\eqref{eq:VI}.
The OS-VI algorithm for Control follows a similar structure with the difference that {\bf 1)} the improved policy is the optimizer of the Varga operator, and not the Bellman operator, and {\bf 2)} the new value function is obtained by applying the Varga operator of the newly obtained policy. To be concrete, given a value function $V$, define the \emph{$S$-improved policy}
\begin{align}
\label{eq:S-improved-policy}
	\pi_V(V) \defeq \argmax_{\pi} \Spi V [=  (\Id - \gamma \PKernelhat^\pi)^{-1} \rbar^\pi_V].
\end{align}
 This policy is the \emph{optimal} policy for the auxiliary MDP $(\XX, \AA, \rbar_V, \PKernelhat, \gamma)$.
%
%
We also define the \textit{Varga optimality operator} $\Sopt: \BB(\XX) \ra \BB(\XX)$ as
\begin{align*}
	\Sopt: V \mapsto \max_\pi \Spi V. 
\end{align*}
The function $\Sopt V$ is equal to $S^{\pi_V(V)} V$, i.e., the Varga operator of the $S$-improved policy w.r.t. $V$ applied to a value function $V$ (compare it with $\Topt V = T^{\pigreedy(V)} V$).

The OS-VI (Control) is then simply
\begin{align}
\label{eq:OSVI-Control}
	V_{k} \leftarrow \Sopt V_{k-1},
\end{align}
which in its expanded form, consists of the following two steps:
\begin{align}
\label{eq:OSVI-Control-Expanded-PolicyImprovement}
	& \pi_{k} \leftarrow \pi_V(V_{k-1}),   \qquad \text{(policy improvement)}. \\
\label{eq:OSVI-Control-Expanded-PartialPE}	
	& V_{k} \leftarrow S^{\pi_{k}} V_{k-1} \; [= (\Id - \PKernelhat^{\pi_{k}})^{-1} (r^{\pi_{k}} + \gamma (\PKernel^{\pi_{k}} - \PKernelhat^{\pi_{k}}) V_{k-1})], \text{(partial policy evaluation)}.
\end{align}

Comparing the $S$-improved policy~\eqref{eq:S-improved-policy} used in the policy improvement step~\eqref{eq:OSVI-Control-Expanded-PolicyImprovement} of OS-VI with the conventional greedy policy~\eqref{eq:greedy-policy} is insightful. The greedy policy is $\argmax_\pi \Tpi V$. Expanding $\Tpi V$, we see that the greedy policy is the maximizer of $\rpi + \gamma \PKernelpi V$.
The function $\rpi + \gamma \PKernelpi V$ is a single-step bootstrapped estimate of the value of $\Vpi$, and its maximizer, the greedy policy, is in general different from the optimal policy, which maximizes $\Vpi$.
On the other hand, the $S$-improved policy $\pi_V(V)$ solves a full MDP with an approximate model $\PKernelhat$ and a reward function that has both the original reward $r$ and the correction term $\gamma (\PKernel - \PKernelhat) V$. In the special case that  $\PKernelhat = \PKernel$, the correction term is zero, and $\pi_V(V)$ would be the optimal policy $\piopt$ for the original MDP.
As often $\PKernelhat \approx \PKernel$, the value function of policy $\pi_V(V)$ is not exactly the optimal value. The partial policy evaluation step~\eqref{eq:OSVI-Control-Expanded-PartialPE} updates the value function to a value that is closer to the optimal value function.

\begin{remark}
The use of matrix splitting-based ideas, either explicitly or implicitly, in the context of dynamic programming is not completely novel to this work.
\citet{KushnerKleinman1971} is one of the earliest paper that mentions the Jacobi and Gauss-Seidel procedures for computing the value function.
\citet{Porteus1975} proposes several transformations to the reward and probability transition matrix with the goal of improving the computational cost of solving the transformed MDP. One of the transformations, called \emph{pre-inverse transform}, has some similarities with the operator splitting of this work. The end result, however, is different.
\citet{BaconPrecup2016} offer a matrix splitting perspective on planning with options.
The connection between multi-step models and matrix splitting is developed in Chapter 4 of \citet{Bacon2018}.
Refer to the supplementary material for more discussion.
\end{remark}

\subsection{Visualizing how OS-VI works}
\label{sec:Algorithms-ExtendedDiscussion-Visualization}
To present some intuition on how OS-VI works, we visualize the value function trajectories of several algorithms, including OS-VI, on a 2-state MDP, in Figure~\ref{fig:OSVI-PE-Visualiziation}.
We consider the policy evaluation for the dynamics $\PKernelpi = \left[ \begin{smallmatrix} 0.9 & 0.1 \\ 0.1 & 0.9 \end{smallmatrix} \right]$ with the reward $\rpi = \left( \begin{smallmatrix} 1 \\ -0.5 \end{smallmatrix} \right)$ and $\gamma = 0.9$.
We consider two approximate models: a relatively accurate $\PKernelhat_\text{accurate}^\pi = \left[ \begin{smallmatrix} 0.85 & 0.15 \\ 0.05 & 0.95 \end{smallmatrix} \right]$, and an inaccurate $\PKernelhat_\text{inaccurate}^\pi = \left[ \begin{smallmatrix} 0.6 & 0.4 \\ 0.3 & 0.7 \end{smallmatrix} \right]$.


In addition to OS-VI (PE), the first algorithm is the conventional VI (PE), which repeatedly applies the Bellman operator according to the true model $\PKernelpi$ to the most recent approximation of the value function. We use $\Tpi_\PKernel$ to refer to its Bellman operator and to label the corresponding trajectory in the value space. This algorithm converges to the true value function $\Vpi$. The second algorithm is VI (PE) that uses $\PKernelpihat$ as the model. This procedure is the basis of the Dyna architecture. 
We use $\Tpi_{\PKernelhat}$ to refer to its Bellman operator and to label the corresponding trajectory in the value space.
Due to the error of $\Tpi_{\PKernelhat}$ compared to $\Tpi_\PKernel$, the algorithm converges to a wrong value function $\hat{V}^\pi$, as both figures show. We observe that even when the model is relatively accurate as in Figure~\ref{fig:OSVI-PE-Visualiziation-MoreAccurate}, the converged value function is quite wrong. This illustrates one limitation of the conventional model-based RL algorithms where an inaccurate model may lead to significantly inaccurate estimate of the value function.

%
%

The OS-VI algorithm repeatedly applies the Varga operator $\Spi$ to the most recent approximation of the value function. As discussed earlier, each computation of $\Spi V_{k-1}$ corresponds to solving an auxiliary MDPs $(\XX, \AA, \rbar_{V_{k-1}}, \PKernelhat, \gamma)$. 
We denote the Bellman operator of this auxiliary MDP by $\Tbar_k^\pi$.
The figures show the trajectory generated by the iterative application of $\Spi$ on the most recent value function as well as the trajectory for solving each auxiliary MDP, indicated by $\Tbar_k^\pi$.
We observe that the OS-VI algorithm converges to the correct value function despite using an incorrect model.
When the model is more accurate, very few iteration of OS-VI gets a value close to $\Vpi$ (two iterations in Figure~\ref{fig:OSVI-PE-Visualiziation-MoreAccurate}); when the model is less accurate, a few more iterations are needed. Compared to VI, at least in these examples, the total number of iterations of OS-VI is significantly smaller.
%
%

When the initial value function is $V_0 = 0$, the result of the first iteration of OS-VI is the same value function computed by the VI with the wrong model $\PKernelpihat$. 
This is because $V_1 \leftarrow \Spi V_0 = (\Id - \gamma \PKernelpihat)^{-1} \rbar^\pi_{V_0}$ and $\rbar^\pi_{V_0} = \rpi$, so $V_1 = (\Id - \gamma \PKernelpihat)^{-1} \rpi$, the same solution as the value function obtained using the approximate model $\PKernelhat$.
In these figures, this shows itself by the overlapping of the red arrows followed by $\Tpi_{\PKernelhat}$ and the first segment of the orange arrows, which are generated by the repeated application of $\bar{T}_1^\pi$.
In further iterations of OS-VI, the auxiliary MDPs change and the path followed by 
$\bar{T}_{k}^\pi$ ($k \geq 2$) deviates from the solution of the VI with the wrong model.
%

\begin{figure}[t]
\centering
      \begin{subfigure}[b]{0.49 \linewidth}
        \centering
	\includegraphics[width= \textwidth]{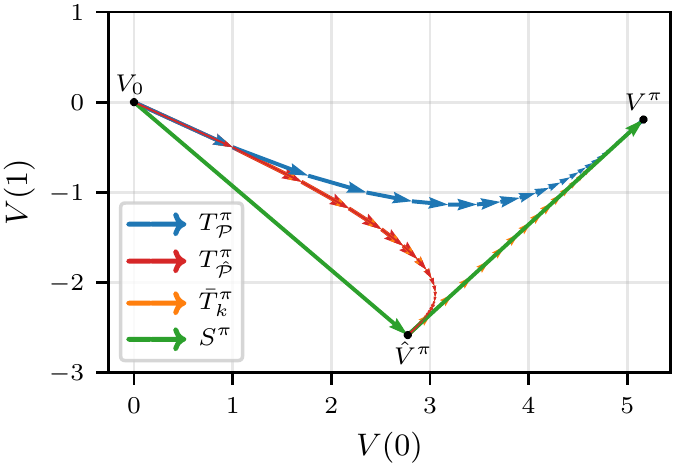}
        \caption{More accurate model $\PKernelhat_\text{accurate}$ } 
        \label{fig:OSVI-PE-Visualiziation-MoreAccurate} 
      \end{subfigure}
      \begin{subfigure}[b]{0.49 \linewidth}
        \centering
	\includegraphics[width= \textwidth]{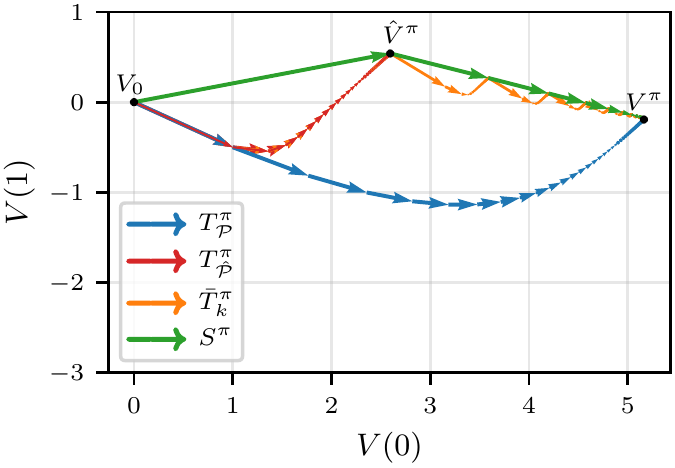}
        \caption{Less accurate model $\PKernelhat_\text{inaccurate}$ } 
        \label{fig:OSVI-PE-Visualiziation-LessAccurate} 
      \end{subfigure}
\caption[short]{The value function trajectories of VI (PE) with the true model ($\Tpi_\PKernel$), VI (PE) with approximate model $\Tpi_{\PKernelhat}$, and OS-VI (PE) ($\Spi$; and $\bar{T}_k$ for its inner loop) for a 2-state problem.}
\label{fig:OSVI-PE-Visualiziation}
\end{figure}

\section{Convergence analysis of operator splitting value iteration}
\label{sec:Theory}

	

In this section, we present the convergence analysis of OS-VI. Our results show that OS-VI has an $O(\gamma'^k)$ convergence rate for an effective discount factor $\gamma'$ that depends on the error between $\PKernelhat$ and $\PKernel$. For small enough error, $\gamma' < \gamma$ and OS-VI has a faster convergence rate compared to the classic VI, Policy Iteration (PI), and Modified Policy Iteration (MPI), which all have $O(\gamma^k)$ behaviour.
We provide results for both the $L_\infty$ and $L_p$ norms.



\subsection{Convergence of OS-VI for policy evaluation}
\label{sec:Theory-PE}

We study the convergence behaviour of OS-VI (PE) in presence of error in value updates.
Specifically, we consider that at each iteration $k$, the update \eqref{eq:OSVI-PE-SpiForm} has an error, i.e.,
\begin{align}
\label{eq:AOS-VI-PE}
	V_{k} = S^\pi V_{k-1} + \errv{k}
\end{align}
The error $\errv{k}$ encompasses various sources of errors that might occur in solving the auxiliary MDP $(\XX, \AA, \rbar_{V_{k-1}}, \PKernelhat, \gamma)$.
One source is the function approximation error due to using a function approximator to represent $V_k$, which is often required in large state spaces. Another is the estimation (i.e., statistical) error caused due to having a finite number of samples, instead of direct access to $\PKernel$, in the RL setting.
Refer to~\citet{Munos08JMLR,AntosSzepesvariML08,FarahmandGhavamzadehSzepesvariMannor2016,ChenJiang2019,FanWangXieYang2019} for the discussion of function approximation and estimation errors in the RL context.
In this work, we do not analyze how the number of samples, the function approximator, etc. affect  the errors $\errv{k}$. We offer error propagation results similar to~\citet{Munos07} (approximate VI),~\citet{Munos03} (approximate PI), and ~\citet{ScherrerGhavamzadehGabillonetal2015} (approximate modified PI) for approximate OS-VI.

To study the convergence behaviour of OS-VI (PE), let $\Gpi = (\Id - \gamma \PKernelhat^\pi)^{-1} \gamma (\PKernelpi - \PKernelhat^\pi) $. We use the fact that $S^\pi V^\pi = V^\pi$ and write
\begin{align}
\label{eq:OSVI-PE-ErrorNormUpperBounding-Exact}
\notag
\norm{\Vpi - V_{k}}_\infty &= 
\norm{\Spi\Vpi - \Spi V_{k-1} - \errv{k}}_\infty =
\norm{\Gpi (\Vpi - V_{k-1}) - \errv{k}}_\infty \\
&\le \norm{\Gpi}_\infty \norm{\Vpi - V_{k-1}}_\infty + \norm{\errv{k}}_\infty.
\end{align}
Now, we have that
\begin{align}
\label{eq:OSVI-PE-NormG-UB}
	\norm{\Gpi}_\infty = 
	\norm{ (\Id - \gamma \PKernelhat^\pi)^{-1} \gamma (\PKernelpi - \PKernelhat^\pi) }_\infty
	\leq
	\frac{\gamma}{1 - \gamma} \norm{ \PKernelpi - \PKernelhat^\pi }_\infty,
\end{align}
where we used the fact that for any square matrix $F$ with a matrix norm $\norm{ F }_p < 1$, it holds that
$\smallnorm{ (\Id - F)^{-1} }_p \leq \frac{1}{1 - \smallnorm{F}_p}$ (see Lemma 2.3.3 of~\citealt{GolubVanLoan2013}), and that the supremum norm of a stochastic matrix $\PKernelhat^\pi$ is $1$.
Assuming that $\smallnorm{\errv{k}}_\infty \le \errvm$ for all $k \ge 1$ and defining effective discount factor $\gamma' = \frac{\gamma}{1 - \gamma} \smallnorm{ \PKernelpi - \PKernelhat^\pi }_\infty$, the upper bounds~\eqref{eq:OSVI-PE-ErrorNormUpperBounding-Exact} and~\eqref{eq:OSVI-PE-NormG-UB} lead to 
$\smallnorm{\Vpi - V_{k} }_\infty \leq \gamma'^k \smallnorm{\Vpi - V_0 }_\infty
	+ \frac{1 - \gamma'^k}{1 - \gamma'}  \errvm$.

%

A few remarks are in order.
First, whenever $\gamma' < \gamma$, this is guaranteed to be faster than the convergence rate of the conventional VI, which is $O(\gamma^k)$. This happens if $\smallnorm{ \PKernelpi - \PKernelhat^\pi }_\infty < 1 - \gamma$.
If the model is very accurate, we obtain much faster rate than VI's.
Since each iteration $k$ corresponds to a query to the true model $\PKernel$, a faster rate entails that the algorithm requires fewer number of queries to the expensive model to reach the same level of accuracy.
%

Second, although the model error $\smallnorm{ \PKernelpi - \PKernelhat^\pi }_\infty$ is a reasonable choice to measure the distances between distributions (it is the maximum of the Total Variation distance between $\PKernelpi(\cdot|x)$ and $\PKernelhat^\pi(\cdot|x)$ over  $x$, which itself can be upper bounded by their $\KL$ divergence; see the supplementary material), it is somewhat conservative as it takes the supremum over the state space.
Likewise, requiring $\smallnorm{\errv{k}}_\infty$ to be small is also conservative, as approximating $\Spi V_{k-1}$ using a function approximator given samples (RL setting) often leads to an $L_p$-norm type of guarantee.
We now provide a different analysis to address these issues.

To present the $L_p$-norm result, we need to define some notations. First, we define the conditional discounted future-state distribution of policy $\pi$ under $\PKernelhat$ as the following probability distribution:
Given a measurable set $B$, we have
$\hat \eta^\pi (B | x) = (1 - \gamma) \sum_{t = 0}^{\infty} \gamma^t \Pr{X_t \in B | X_0 = x, \pi, \PKernelhat}$,
where the chain $(X_t)_{t \geq 0}$ starts from state $x$ and evolves by following policy $\pi$ under transitions $\PKernelhat$. 
%
For an arbitrary distribution $\rho$ over the state space, we define the \textit{discounted future-state distribution concentration coefficient of the approximate model} as
\begin{align}
	\hat C^\pi(\rho)^2 = \frac{1}{\gamma^2}\int \rho(\dx)  \norm{ \frac{\mathrm{d} \hat \eta^\pi (\cdot | x)}{\drho}}_\infty^3.
\end{align}
Here $\frac{\mathrm{d} \hat \eta^\pi (\cdot | x)}{\drho}$ is the Radon–Nikodym derivative of the distribution $\hat \eta^\pi (\cdot | x)$ w.r.t. the distribution $\rho$. It is assumed that for any $x \in \XX$, $\etahatpi(\cdot|x) \ll \rho$, i.e., $\etahatpi(\cdot|x)$ is absolutely continuous w.r.t. $\rho$ (otherwise, the coefficient would be set to infinity).
%
This coefficient measures how concentrated the distribution $\hat \eta^\pi (\cdot | x)$ is compared to $\rho$. This is weighted according to the state distribution $\rho$.
Similar concentrability coefficients, but not exactly this one, have appeared in the error propagation results~\citep{KakadeLangfordCPI2002,Munos03,Munos07,FarahmandMunosSzepesvari10,ScherrerGhavamzadehGabillonetal2015}.
%
%
Finally, we define the weighted $\chi^2$-divergence of $\PKernelhat^\pi$ and $\PKernel^\pi$ as 
\begin{align*}
\chi^2_{\rho}(\PKernel^\pi \;||\; \PKernelhat^\pi ) \defeq \int \rho(\dx) \chi^2
\left( \PKernelpi(\cdot | x)\;||\; \PKernelhat^\pi (\cdot | x) \right) =
\int \rho(\dx) 
 \int \frac{\left|\PKernelhat^\pi(\dy | x) - \PKernel^\pi (\dy | x) \right|^2}{\PKernelhat^\pi(\dy | x)}.
\end{align*}

This notion of model error is less strict in requiring accurate approximation $\PKernel$ in all states. Usually only a subset of the state space is important or even reachable in a problem. The above model error can focus on only specific areas of the state space through the choice of distribution $\rho$.

We are now ready to present the main theorem for the approximate OS-VI (PE).

\begin{theorem}
\label{theorem:ErrorPropagation-SupAndLp-PE}
	Consider the approximate OS-VI algorithm for PE~\eqref{eq:AOS-VI-PE}.
Let $\norm{\cdot}_\star$ be either the supremum norm $\norm{\cdot}_\infty$ ($\star = \infty$) or $\norm{\cdot}_{4, \rho}$ for $\rho$ being an arbitrary distribution over the state space ($\star = 4, \rho$).
Assume that $\smallnorm{\errv{k}}_\star \le \errvm$ for all $k \ge 1$.
Furthermore, define the effective discount factor as
\begin{align*}
	\gamma' =
		\frac{\gamma}{1 - \gamma}
		\begin{cases}
			\norm{ \PKernelpi - \PKernelhat^\pi }_\infty  & (\star = \infty),
			\\
			\sqrt{ \hat C^\pi(\rho)  \chi^2_{\rho}(\PKernel^\pi \;||\; \PKernelhat^\pi ) } & (\star = 4, \rho).
		\end{cases}
\end{align*}
	For any $k \geq 0$, we have
$
	\norm{\Vpi - V_{k} }_{\star} \leq \gamma'^k \norm{\Vpi - V_0 }_{\star}
	+ \frac{1 - \gamma'^k}{1 - \gamma'} \cdot \errvm$.	
%
\end{theorem}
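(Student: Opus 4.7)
The approach is to reduce both statements to a single contraction-like inequality $\norm{\Gpi h}_\star \le \gamma' \norm{h}_\star$ on the error-propagation operator $\Gpi \eqdef (\Id - \gamma \PKernelhat^\pi)^{-1}\gamma(\PKernelpi - \PKernelhat^\pi)$, and then iterate. Subtracting the fixed-point identity $\Spi \Vpi = \Vpi$ from~\eqref{eq:AOS-VI-PE} and using the affine-linear form of $\Spi$ (so that $\Spi V - \Spi V' = \Gpi(V - V')$) gives $\Vpi - V_k = \Gpi(\Vpi - V_{k-1}) - \errv{k}$; taking norms and applying the triangle inequality yields the recursion $\norm{\Vpi - V_k}_\star \le \gamma' \norm{\Vpi - V_{k-1}}_\star + \errvm$, which unrolls to exactly $\gamma'^k \norm{\Vpi - V_0}_\star + \frac{1-\gamma'^k}{1-\gamma'}\errvm$. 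For $\star = \infty$, the contraction $\norm{\Gpi}_\infty \le \gamma'$ is already derived in equations~\eqref{eq:OSVI-PE-ErrorNormUpperBounding-Exact}--\eqref{eq:OSVI-PE-NormG-UB} just above the theorem, via the Neumann-series bound $\norm{(\Id-\gamma\PKernelhat^\pi)^{-1}}_\infty \le 1/(1-\gamma)$ together with a standard stochastic-matrix argument.

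\textbf{The $L_{4,\rho}$ contraction: splitting.} Rewriting the Neumann series as an integral against the discounted future-state distribution yields $\Gpi h(x) = \tfrac{\gamma}{1-\gamma} \int \etahatpi(\dy|x)\, \Delta^\pi h(y)$, where $\Delta^\pi h(y) \eqdef \int [\PKernelpi(\dz|y) - \PKernelhat^\pi(\dz|y)] h(z)$. A pointwise Cauchy--Schwarz with respect to $\PKernelhat^\pi(\cdot|y)$ gives $|\Delta^\pi h(y)|^2 \le \chi^2(\PKernelpi(\cdot|y) \,\|\, \PKernelhat^\pi(\cdot|y)) \int h^2 \PKernelhat^\pi(\dz|y)$. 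The crucial step is a second Cauchy--Schwarz at the $\etahatpi$-level with the pairing $(\chi, \sqrt{\int h^2 \PKernelhat^\pi})$ that \emph{separates} the $\chi^2$-factor from the $h^2$-factor:
\[
	\Bigl[\int \etahatpi\, |\Delta^\pi h|\Bigr]^2 \le \Bigl[\int \etahatpi(\dy|x)\, \chi^2(\cdot|y)\Bigr] \cdot \Bigl[\int \etahatpi(\dy|x) \int h^2(z)\, \PKernelhat^\pi(\dz|y)\Bigr].
\]
This split is what ultimately makes $\hat C^\pi(\rho)$ and $\chi^2_\rho$ appear as factors of a product rather than entangled in a stronger $\chi^4$-type quantity.

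\textbf{Controlling each bracket.} For the first bracket, a Radon--Nikodym inequality gives $\int \etahatpi(\dy|x)\, \chi^2(\cdot|y) \le \norm{\mathrm{d}\etahatpi(\cdot|x)/\drho}_\infty \chi^2_\rho$, contributing two powers of $\norm{\mathrm{d}\etahatpi/\drho}_\infty$ after squaring. For the second bracket, use the Bellman-type identity $\etahatpi(\cdot|x) = (1-\gamma)\delta_x + \gamma \PKernelhat^\pi \etahatpi(\cdot|x)$ to shift one application of $\PKernelhat^\pi$ back into $\etahatpi$, giving $\int \etahatpi(\dy|x) \int h^2 \PKernelhat^\pi(\cdot|y) \le \tfrac{1}{\gamma} \int h^2 \mathrm{d}\etahatpi(\cdot|x)$. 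Jensen on the probability measure $\etahatpi$ then yields $[\int h^2 \mathrm{d}\etahatpi]^2 \le \int h^4 \mathrm{d}\etahatpi$ upon squaring, and a final Radon--Nikodym step bounds this by $\norm{\mathrm{d}\etahatpi/\drho}_\infty \norm{h}_{4,\rho}^4$, yielding a third power of $\norm{\mathrm{d}\etahatpi/\drho}_\infty$ together with a $1/\gamma^2$ factor. Multiplying the two bracket bounds, raising to the fourth power, and integrating over $\rho(\dx)$ gives $\norm{\Gpi h}_{4,\rho}^4 \le \tfrac{\gamma^2}{(1-\gamma)^4}[\chi^2_\rho]^2 \norm{h}_{4,\rho}^4 \int \rho(\dx)\,\norm{\mathrm{d}\etahatpi(\cdot|x)/\drho}_\infty^3$, which equals $[\gamma']^4 \norm{h}_{4,\rho}^4$ exactly, using $\int \rho(\dx)\,\norm{\mathrm{d}\etahatpi/\drho}_\infty^3 = \gamma^2 \hat C^\pi(\rho)^2$.

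\textbf{Main obstacle.} The delicate point is selecting the Cauchy--Schwarz split at the $\etahatpi$-level so that the model error appears as the ``first moment'' quantity $\chi^2_\rho$ rather than a stronger moment; other natural pairings produce $\chi^4$-type integrals that do not reduce to the theorem's clean form. The identity $\etahatpi = (1-\gamma)\delta_x + \gamma \PKernelhat^\pi \etahatpi$ is what converts the trailing $\PKernelhat^\pi$ back into $\etahatpi$ at the cost of exactly $1/\gamma$, and this is precisely absorbed by the $1/\gamma^2$ normalization built into the definition of $\hat C^\pi(\rho)^2$.
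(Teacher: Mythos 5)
Your proposal is correct and takes essentially the same route as the paper's proof: the identical reduction of both norms to the contraction bound $\norm{\Gpi h}_\star \le \gamma'\norm{h}_\star$ via $\Spi\Vpi=\Vpi$ and the affine form of $\Spi$, the same unrolled recursion, and for $\star=4,\rho$ the same key ingredients (the representation $(\Id-\gamma\PKernelhat^\pi)^{-1}=\tfrac{1}{1-\gamma}\etahatpi$, two Cauchy--Schwarz applications, and the shift bound $\int\etahatpi(\dy|x)\PKernelhat^\pi(\cdot|y)\le\tfrac{1}{\gamma}\etahatpi(\cdot|x)$, which is exactly the paper's Lemmas~\ref{lemma:mpi-equals-etahat} and~\ref{lemma:eta-p-bound}). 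The only substantive difference is internal to the contraction lemma: you apply Cauchy--Schwarz sequentially (first pointwise in $z$ against $\PKernelhat^\pi(\cdot|y)$ to extract the pointwise $\chi^2$, then in $y$ against $\etahatpi(\dy|x)$, paying for the change of measure to $\rho$ with two powers of $\smallnorm{\mathrm{d}\etahatpi(\cdot|x)/\drho}_\infty$ on the $\chi^2$ bracket and one on the $h$ bracket), whereas the paper performs a single joint Cauchy--Schwarz over $(y,z)$ with a $\sqrt{\rho(\dy)}$-weighted splitting that yields $\chi^2_\rho$ directly and puts all three powers of the Radon--Nikodym derivative on the $v$ side --- the two bookkeepings distribute the exponents differently but sum to the same total and produce the identical constant $\gamma'=\tfrac{\gamma}{1-\gamma}\sqrt{\hat C^\pi(\rho)\,\chi^2_{\rho}(\PKernel^\pi\;||\;\PKernelhat^\pi)}$.
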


\subsection{Convergence of OS-VI for control}
\label{sec:Theory-Control}

We turn to analyzing OS-VI for Control.
We consider two types of errors: The first is the error between the computed value function and the true optimal value function of the auxiliary MDP, i.e., $V_{k} - \Sopt V_{k-1}$. The second is the suboptimality of obtained policy compared to the optimal policy of the auxiliary MDP, i.e., $S^{\pi_k}V_{k-1} - \Sopt V_{k-1}$. Concretely, we have
\begin{align}
\label{eq:AOS-VI-Control1}
V_{k} &= \Sopt V_{k-1} + \errv{k},\\
\label{eq:AOS-VI-Control2}
S^{\pi_k}V_{k-1} &= \Sopt V_{k-1} + \errpi{k}.
\end{align}
We have the following result for the approximate OS-VI (Control).
%
%
%
%
%
%
%
\begin{theorem}
\label{theorem:ErrorPropagation-SupAndLp-Control}
Consider the approximate OS-VI algorithm for control ~\eqref{eq:AOS-VI-Control1}-\eqref{eq:AOS-VI-Control2}.
Let $\norm{\cdot}_\star$ be either the supremum norm $\norm{\cdot}_\infty$ ($\star = \infty$) or $\norm{\cdot}_{4, \rho}$ for $\rho$ being an arbitrary distribution over the state space ($\star = 4, \rho$).	
For any $k \geq 1$, let $\Pi_k = \{\pi^*, \pi_k\} \cup \{\pi_V(V_{i-1}) : 1 \le i < k\}$.
Assume that $\smallnorm{\errv{k}}_\star \le \errvm$ for all $k \ge 1$.
Furthermore, define the effective discount factor as
\begin{align*}
	\gamma' =
		\frac{\gamma}{1 - \gamma} 
		\begin{cases}
			 \max_{\pi \in \Pi_k} \norm{ \PKernelpi - \PKernelhat^\pi }_\infty  & (\star = \infty),
			\\
			\max_{\pi \in \Pi_k} \sqrt{ \sqrt{2} \, \hat C^\pi(\rho) \chi^2_{\rho}(\PKernel^\pi \;||\; \PKernelhat^\pi ) } & (\star = 4, \rho).
		\end{cases}
\end{align*}
We then have	
\begin{align*}
	\norm{V^{\pi_k} - \Vopt }_{\star} \leq \frac{2\gamma'^k}{1 - \gamma'} \norm{V_0 - V^*}_\star +
\frac{2\gamma' (1 - \gamma'^{k-1})}{(1 - \gamma')^2} \errvm +
\frac{1}{1 - \gamma'} \norm{\errpi{k}}_\star.
	\end{align*}
\end{theorem}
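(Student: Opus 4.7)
The plan is to mirror the PE analysis (Theorem~\ref{theorem:ErrorPropagation-SupAndLp-PE}) with two stages: first bound $\|\Vopt - V_k\|_\star$ via an error-propagation recursion, then transfer this into a bound on $\|\Vopt - V^{\pi_k}\|_\star$ that also picks up the policy error $\errpi{k}$. The crucial starting observation is the identity $\Sopt \Vopt = \Vopt$, which follows from the expansion $\Spi \Vopt = \Vopt + (\Id - \gamma \PKernelhat^\pi)^{-1}(\Tpi \Vopt - \Vopt)$: the correction term is nonpositive for every $\pi$ (since $\Tpi \Vopt \le \Topt \Vopt = \Vopt$) and vanishes at $\pi=\piopt$ because $T^{\piopt}\Vopt=\Vopt$, so $\max_\pi \Spi \Vopt = \Vopt$.

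Next I establish a pointwise sandwich on $\Vopt - \Sopt V_{k-1}$. The upper bound $\Vopt - \Sopt V_{k-1} \le G^{\piopt}(\Vopt - V_{k-1})$ follows from $\Sopt V_{k-1} \ge S^{\piopt} V_{k-1}$ together with $S^{\piopt}\Vopt = \Vopt$ and the affine identity $\Spi V - \Spi V' = \Gpi(V-V')$. Letting $\pi_V(V_{k-1})$ realize $\Sopt V_{k-1}$ and using $\Vopt \ge S^{\pi_V(V_{k-1})}\Vopt$ (a consequence of $\Sopt\Vopt=\Vopt$) gives a matching lower bound involving $G^{\pi_V(V_{k-1})}$. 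Combined with $V_k = \Sopt V_{k-1} + \errv{k}$, this yields pointwise $|(\Vopt - V_k)(x)| \le \max_{\pi \in \{\piopt, \pi_V(V_{k-1})\}} |\Gpi(\Vopt - V_{k-1})(x)| + |\errv{k}(x)|$. For $\star=\infty$, taking sup and invoking $\|\Gpi\|_\infty \le \tfrac{\gamma}{1-\gamma}\|\PKernelpi - \PKernelhat^\pi\|_\infty$ from the PE proof gives $\|\Vopt - V_k\|_\infty \le \gamma'\|\Vopt - V_{k-1}\|_\infty + \errvm$. For $\star=4,\rho$, applying $\max(a^4,b^4) \le a^4+b^4$ converts the two-policy max into a sum of two single-policy $L_4$ operator applications, producing an extra factor of $2^{1/4}$ that appears inside the square root defining $\gamma'$ as the $\sqrt{2}$. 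Iterating yields $\|\Vopt - V_k\|_\star \le \gamma'^k\|\Vopt - V_0\|_\star + \tfrac{1-\gamma'^k}{1-\gamma'}\errvm$.

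For the bound on $V^{\pi_k}$, the fixed-point relation $V^{\pi_k} = S^{\pi_k}V^{\pi_k}$ and the affine structure of $S^{\pi_k}$ yield $\Vopt - V^{\pi_k} = (\Id - G^{\pi_k})^{-1}(\Vopt - S^{\pi_k}\Vopt)$, and a Neumann-series expansion gives $\|(\Id - G^{\pi_k})^{-1}\|_\star \le (1-\gamma')^{-1}$ since $\|G^{\pi_k}\|_\star \le \gamma' < 1$. Decomposing $\Vopt - S^{\pi_k}\Vopt = (\Vopt - \Sopt V_{k-1}) - \errpi{k} + G^{\pi_k}(V_{k-1}-\Vopt)$ via $S^{\pi_k}V_{k-1} = \Sopt V_{k-1} + \errpi{k}$, taking $\star$-norms, and combining the sandwich-derived bound $\|\Vopt - \Sopt V_{k-1}\|_\star \le \gamma'\|\Vopt - V_{k-1}\|_\star$ with $\|G^{\pi_k}\|_\star \le \gamma'$ gives $\|\Vopt - S^{\pi_k}\Vopt\|_\star \le 2\gamma'\|\Vopt - V_{k-1}\|_\star + \|\errpi{k}\|_\star$. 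Substituting the previous recursion's bound for $\|\Vopt - V_{k-1}\|_\star$ and multiplying by $(1-\gamma')^{-1}$ produces the claimed expression.

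The main technical obstacle is the $L_{4,\rho}$ case: the sandwich step involves two distinct policies simultaneously and the Varga operator is not a norm contraction in the conventional sense, so standard Banach fixed-point reasoning does not apply directly. The workaround is to reduce everything to the single-policy $L_4$ operator bound already established in the PE proof and absorb the two-policy maximum through the $2^{1/4}$-inflated constant, which manifests as the $\sqrt{2}$ inside the square root in $\gamma'$. A secondary subtlety is ensuring that the uniform bound $\|\Gpi\|_\star \le \gamma'$ covers every policy appearing in the computation---$\piopt$, $\pi_k$, and all the intermediate S-improved policies $\pi_V(V_{i-1})$---which is precisely why the theorem's effective discount factor takes the maximum over $\Pi_k$.
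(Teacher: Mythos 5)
Your proposal follows essentially the same route as the paper's proof: the same key facts ($\Sopt \Vopt = S^{\piopt}\Vopt = \Vopt$, the affine identity $\Spi V_1 - \Spi V_2 = \Gpi(V_1 - V_2)$, the operator-norm bounds on $\Gpi$ in both norms), the same two-sided sandwich $G^{\pi_V(V_{i-1})}(\Vopt - V_{i-1}) \vecle \Vopt - \Sopt V_{i-1} \vecle G^{\piopt}(\Vopt - V_{i-1})$ driving the recursion on $\norm{\Vopt - V_i}_\star$, and the same explanation of the $\sqrt{2}$ in the $L_{4,\rho}$ effective discount factor (the two-policy componentwise max costs a factor $2$ at the fourth power). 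Your alternative derivation of $\Sopt\Vopt=\Vopt$ via $\Spi\Vopt = \Vopt + (\Id-\gamma\PKernelhat^\pi)^{-1}(\Tpi\Vopt - \Vopt)$ is correct, and replacing the paper's rearrangement of $x \le a + \gamma' x$ by the Neumann-series bound on $(\Id - G^{\pi_k})^{-1}$ is an equivalent, equally valid device.

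There is one loose end in the final stage. You bound $\norm{\Vopt - S^{\pi_k}\Vopt}_\star$ by the triangle inequality using the claim $\norm{\Vopt - \Sopt V_{k-1}}_\star \le \gamma' \norm{\Vopt - V_{k-1}}_\star$, which you justify by the sandwich; but the lower half of that sandwich involves $G^{\pi_V(V_{k-1})}$, and $\pi_V(V_{k-1})$ is \emph{not} in $\Pi_k$ (the set only contains $\pi_V(V_{i-1})$ for $i<k$, plus $\piopt$ and $\pi_k$), so the stated $\gamma'$ does not control that operator. The paper avoids this entirely: it drops the nonpositive term $S^{\piopt}V_{k-1} - \Sopt V_{k-1}$ from an upper bound on the nonnegative quantity $\Vopt - V^{\pi_k}$, so only $\piopt$ and $\pi_k$ appear in the last step. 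Your argument is repaired the same way with what you already have: since $\Vopt - S^{\pi_k}\Vopt \vecge 0$ (a consequence of $\Sopt\Vopt = \Vopt$) and $\Vopt - \Sopt V_{k-1} \vecle G^{\piopt}(\Vopt - V_{k-1})$, one gets $0 \vecle \Vopt - S^{\pi_k}\Vopt \vecle G^{\piopt}(\Vopt - V_{k-1}) - \errpi{k} + G^{\pi_k}(V_{k-1}-\Vopt)$, and taking $\star$-norms of the upper envelope yields $2\gamma'\norm{\Vopt - V_{k-1}}_\star + \norm{\errpi{k}}_\star$ using only policies in $\Pi_k$. Without this fix, your route either requires enlarging $\Pi_k$ to include $\pi_V(V_{k-1})$ or, if one substitutes the $\pi_k$-based lower bound (which carries an extra $\errpi{k}$), degrades the last term to $\tfrac{2}{1-\gamma'}\norm{\errpi{k}}_\star$.
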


We can compare this result with the convergence result of VI. For VI with the supremum norm, following the proof of Equation (2.2) by~\citet{Munos07}, we can show that $\smallnorm{\Vopt - V^{\pi_k} }_\infty \leq \frac{2 \gamma^k}{1 - \gamma} \smallnorm{\Vopt - V_0}_\infty + \frac{2 \gamma (1 - \gamma^{k-1}) \errvm}{(1 - \gamma)^2}$, with $\norm{V_i - \Topt V_{i-1}}_\infty \leq \errvm$ for all $i < k$ (similar result for the $L_p$-norm also holds, see Theorem~5.2 in~\citealt{Munos07}).
For the approximate VI, the initial error $\smallnorm{\Vopt - V_0}_\infty$ decays with the rate of $O(\gamma^k)$. This should be compared with $O(\gamma'^k)$ rate of OS-VI. The effect of error at each step $\errvm$ is also similar: approximate VI has $(1 - \gamma)^{-2}$ dependence while approximate OS-VI has $(1 - \gamma')^{-2}$.
What is remarkable is that as opposed to $\gamma$, which is a fixed parameter of the problem and can be close to $1$, $\gamma'$ can be made arbitrary close to zero when the approximate model $\PKernelhat$ becomes more accurate. The additional information given by $\PKernelhat$ allows us to get much faster rate than VI. Of course, this requires the model to be accurate. An inaccurate model might be detrimental to the convergence rate, and may even lead to divergence.
Similar conclusions can be made in comparing OS-VI with Policy Iteration and Modified Policy Iteration, as discussed in the supplementary material.
\todo{Remark about the randomness of $\Pi$.}

\if0
XXX

XXX ORIGINAL XXX

XXX

In this section, we present the convergence analysis of OS-VI. Our results show that OS-VI has an $O(\gamma'^k)$ convergence rate for an effective discount factor $\gamma'$ that depends on the error between $\PKernelhat$ and $\PKernel$. When the error is small enough, $\gamma' < \gamma$ and OS-VI has a faster convergence rate compared to the classic VI, Policy Iteration (PI), and Modified Policy Iteration (MPI), which all have $O(\gamma^k)$ behaviour.
We provide results for both the $L_\infty$ and $L_p$ norms.



\subsection{Convergence of OS-VI for policy evaluation}

We study convergence behaviour of OS-VI (PE) in presence of error in value updates. More specifically, we consider the setting that in each iteration $k$, the update \eqref{eq:OSVI-PE-SpiForm} has an error, i.e.,
\begin{align}
\label{eq:AOS-VI-PE}
	V_{k} = S^\pi V_{k-1} + \errv{k}
\end{align}

Considering this error is necessary to understand the behaviour of the algorithm in practice. This error might have two sources: function approximation error and estimation (i.e., statistical) error.
For problems with large state space, we often need to use a function approximator to represent the value function. If the function class does not include $\Spi V_{k-1}$, we will have a function approximation error.
Moreover, we may use samples from $\PKernel$ (and even $\PKernelhat$) in order to estimate the expectations (integrals) in the definition of the Varga operator because we are either in the RL setting, where the model of the world is not directly accessible, or the model is known (Planning) but the exact computation of the expectation is computationally infeasible (this latter is an issue for large state spaces).
Here, we have abstracted the error of one iteration of OS-VI in $\errv{k}$ for simplicity. A more detailed analysis of $\errv{k}$ is possible by considering the algorithm used to solve the PE problem for $\Spi$ operator.


To study convergence behaviour of OS-VI (PE), let $\Gpi = (\Id - \gamma \PKernelhat^\pi)^{-1} \gamma (\PKernelpi - \PKernelhat^\pi) $. We use the fact that $S^\pi V^\pi = V^\pi$ (Lemma~\ref{lemma:operator-stationary} in Appendix~\ref{sec:ProofsAndMoreTheory-BasicProperties}) and write
\begin{align}
\label{eq:OSVI-PE-ErrorNormUpperBounding-Exact}
\notag
\norm{\Vpi - V_{k}}_\infty &= 
\norm{\Spi\Vpi - \Spi V_{k-1} - \errv{k}}_\infty =
\norm{\Gpi (\Vpi - V_{k-1}) - \errv{k}}_\infty \\
&\le \norm{\Gpi}_\infty \norm{\Vpi - V_{k-1}}_\infty + \norm{\errv{k}}_\infty.
\end{align}
Now, we have that
\begin{align}
\label{eq:OSVI-PE-NormG-UB}
	\norm{\Gpi}_\infty = 
	\norm{ (\Id - \gamma \PKernelhat^\pi)^{-1} \gamma (\PKernelpi - \PKernelhat^\pi) }_\infty
	\leq
	\frac{\gamma}{1 - \gamma} \norm{ \PKernelpi - \PKernelhat^\pi }_\infty,
\end{align}
where we used the fact that for any square matrix $F$ with a matrix norm $\norm{ F }_p < 1$, it holds that
$\norm{ (\Id - F)^{-1} }_p \leq \frac{1}{1 - \norm{F}_p}$ (e.g., Lemma 2.3.3 of~\citealt{GolubVanLoan2013}), and that the supremum norm of a stochastic matrix $\PKernelhat^\pi$ is $1$.
The upper bounds~\eqref{eq:OSVI-PE-NormG-UB} and~\eqref{eq:OSVI-PE-ErrorNormUpperBounding-Exact} lead to the following proposition:

\begin{proposition}
	\label{prop:PE-infnorm}
Consider the approximate OS-VI algorithm for PE~\eqref{eq:AOS-VI-PE}. Assume that $\smallnorm{\errv{k}}_\infty \le \errvm$ for all $k \ge 1$. Define the effective discount factor $\gamma' = \frac{\gamma}{1 - \gamma} \smallnorm{ \PKernelpi - \PKernelhat^\pi }_\infty$.
For any $k \geq 0$, we have
\begin{align*}
	\norm{\Vpi - V_{k} }_\infty \leq \gamma'^k \norm{\Vpi - V_0 }_\infty
	+ \frac{1 - \gamma'^k}{1 - \gamma'} \cdot \errvm.
\end{align*}

\end{proposition}

A few remarks are in order.
First, whenever $\gamma' < \gamma$, this is guaranteed to be faster than the convergence rate of the conventional VI. This is the case if $\smallnorm{ \PKernelpi - \PKernelhat^\pi }_\infty < 1 - \gamma$.
If the model is very accurate, we obtain much faster rate than VI's.
Since each iteration $k$ corresponds to a query to the true model $\PKernel$, a faster rate entails that the algorithm requires fewer total number of queries to the expensive model to reach the same level of accuracy.
Therefore, if the approximate model $\PKernelhat$ is accurate enough, we can benefit from this new procedure.

\todo{We may need to remove this part, if we have limited space. -AMF}
Second, the model error $\smallnorm{ \PKernelpi - \PKernelhat^\pi }_\infty$ is a reasonable choice to measure the distances between distributions.
For countable state spaces, the norm $\smallnorm{ \PKernelpi - \PKernelhat^\pi }_\infty$ is the maximum over states of the Total Variation distance between $\PKernelpi(\cdot|x)$ and $\PKernelhat^\pi(\cdot|x)$, which itself can be upper bounded by the $\KL$-divergence between them:
\begin{align*}
	\norm{ \PKernelpi - \PKernelhat^\pi }_\infty =
	\max_{x \in \XX} \norm{ \PKernelpi(\cdot | x) - \PKernelhat^\pi (\cdot | x) }_1
	\leq
	\max_{x \in \XX} \sqrt{ 2 \KL\left( \PKernelpi(\cdot | x) || \PKernelhat^\pi (\cdot | x) \right) }.
\end{align*}
The KL-divergence is the population version of the negative-logarithm loss used in the Maximum Likelihood Estimation (MLE). Admittedly, the maximum over $\XX$ in $\smallnorm{\PKernel^\pi-\PKernelhat^\pi}_\infty$ is perhaps strict. Usually there are states in the state space that are irrelevant to the problem or even unreachable. If the model is learned from samples, it will be very inaccurate in these states. Also most of regression methods used in approximate value iteration to find $V_k$ provide guarantees in the $L_p$ norm. This makes dependence on $\smallnorm{\errv{k}}_\infty$ conservative. We now provide a different analysis to address these issues.
\todo{Maybe mention that we actually need an IterVAML-like procedure for model learning. Or maybe we can have a combination of VAML/VaGraM and OS-Dyna for the next paper? -AMF}

To present the $L_p$ norm result, we need to define some notations. First, we define the conditional discounted state distribution of policy $\pi$ under $\PKernelhat$ as the following \todo{There may be many inaccuracies in $\dy$ notations. -AR}
\todo{I need to check how this propagates, but we need to define the probability of a set $B$ instead of a point $y$, so something like this: $\eta^\pi (B | x) $. The RHS changes similarly. -AMF}
\todo{Maybe it is better to consider a countable state space for the $L_p$-based results, and use summations instead. I am afraid we may make some mistake in these definitions in our limited time. -AMF}
\begin{align*}
	\hat \eta^\pi (y | x) = (1 - \gamma) \sum_{t = 0}^{\infty} \gamma^t \cdot \Pr{X_t = y | X_0 = x, \pi, \PKernelhat}
\end{align*}
where the condition is to start from state $x$ and follow policy $\pi$ under $\PKernelhat$ transitions. For an arbitrary distribution $\rho$ over the state space, we define the \textit{discounted state distribution concentration coefficient} as
\begin{align}
	\hat C^\pi(\rho)^2 = \frac{1}{\gamma^2}\int \rho(\dx) \left(\max_y \frac{	\hat \eta^\pi (\dy | x)}{\rho(\dy)}\right)^3.
\end{align}
Starting from state $x$ covered in $\rho$, this coefficient is about how well the states reachable under $\PKernelhat^\pi$ (measured by $	\hat \eta^\pi (\dy | x)$) are also covered in $\rho$ (measured by $\rho(\dy)$). If $\rho(\dx) > 0$ and $	\hat \eta^\pi (\dy | x) > 0$, i.e. $y$ is reachable from $x$ under $\PKernelhat^\pi$, implies $\rho(\dy) > 0$ this coefficient is finite.

Finally, we define the weighted $\chi^2$ divergence of $\PKernelhat^\pi$ and $\PKernel^\pi$ as the following:
\begin{align*}
\chi^2_{\rho}(\PKernel^\pi \;||\; \PKernelhat^\pi ) \defeq \int \rho(\dx) \chi^2
\left( \PKernelpi(\cdot | x)\;||\; \PKernelhat^\pi (\cdot | x) \right) =
\int \rho(\dx) 
 \int \frac{\left(\PKernelhat^\pi(\dy | x) - \PKernel^\pi (\dy | x) \right)^2}{\PKernelhat^\pi(\dy | x)}.
\end{align*}

This notion of model error is less strict in requiring accurate approximation $\PKernel$ in all states. Usually only a subset of state space is important or even reachable in a problem. The above model error can only focus on specific areas of state space through the choice of distribution $\rho$.

\begin{theorem}
	\label{theorem:Lp-PE}
	Consider the approximate OS-VI algorithm for PE~\eqref{eq:AOS-VI-PE}. Let $\rho$ be an arbitrary distribution over state space. Assume $\norm{\errv{k}}_{4, \rho} \le \errvm$ for all $k \ge 1$. Define the effective discount factor $\gamma' = \frac{\gamma}{1 - \gamma} \sqrt{ \hat C^\pi(\rho) \cdot \chi^2_{\rho}(\PKernel^\pi \;||\; \PKernelhat^\pi ) }$.
	For any $k \geq 0$, we have
	\begin{align*}
	\norm{\Vpi - V_{k} }_{4, \rho} \leq \gamma'^k \norm{\Vpi - V_0 }_{4, \rho}
	+ \frac{1 - \gamma'^k}{1 - \gamma'} \cdot \errvm
	\end{align*}
	
\end{theorem}

\todo{Add discussion -AR}
\todo{Add definition of $L_4$ norm somewhere}

\subsection{Convergence of OS-VI for control}

Now we turn to analyzing OS-VI for Control. Similar to the analysis of OS-VI for PE, we assume some error in each step of OS-VI. Each iteration of OS-VI for PE was solving the PE problem in the auxiliary MDP $(\XX, \AA, \rbar_V, \PKernelhat, \gamma)$. The error of this solution was the error of the calculated value function $V_{k+1}$ w.r.t. the true value function in the MDP $\Spi V_k$ leading to \eqref{eq:AOS-VI-PE}. In control, each iteration is solving the control problem in the same auxiliary MDP. The difference is that we have two types of error. First, the error of value function w.r.t. true solution, i.e., $V_{k} - \Sopt V_{k-1}$. Second is the suboptimality of policy compared to the optimal policy of auxiliary MDP, i.e., $S^{\pi_k}V_{k-1} - \Sopt V_{k-1}$. More specifically
\begin{align}
\label{eq:AOS-VI-Control1}
V_{k} &= \Sopt V_{k-1} + \errv{k}\\
\label{eq:AOS-VI-Control2}
S^{\pi_k}V_{k-1} &= \Sopt V_{k-1} + \errpi{k}
\end{align}
we can now present the results for Control.
\begin{proposition}
		\label{prop:Control-infnorm}
	Consider the approximate OS-VI algorithm for control ~\eqref{eq:AOS-VI-Control1},\eqref{eq:AOS-VI-Control2}. For any $k \geq 1$, let $\Pi_k = \{\pi^*, \pi_k\} \cup \{\pi_V(V_{i-1}) : 0 \le i < k\}$. Assume $\norm{\errv{i}}_\infty \le \errvm$ for all $i \ge 1$. Define the effective discount factor $\gamma' = \max_{\pi \in \Pi_k}\frac{\gamma}{1 - \gamma} \smallnorm{ \PKernelpi - \PKernelhat^\pi }_\infty$.
     we have
	\begin{align*}
	\norm{V^{\pi_k} - V^* }_\infty \leq 
	\frac{2\gamma'^k}{1 - \gamma'} \norm{V_0 - V^*}_\infty +
\frac{2\gamma' (1 - \gamma'^{k-1})}{(1 - \gamma')^2} \cdot \errvm +
\frac{1}{1 - \gamma'} \cdot \norm{\errpi{k}}_\infty
	\end{align*}
\end{proposition}

\begin{theorem}
	\label{theorem:Lp-Control}
	Consider the approximate OS-VI algorithm for control ~\eqref{eq:AOS-VI-Control1},\eqref{eq:AOS-VI-Control2}. Let $\rho$ be an arbitrary distribution over state space. For any $k \geq 1$, let $\Pi_k = \{\pi^*, \pi_k\} \cup \{\pi_V(V_{i-1}) : 0 \le i < k\}$. Assume $\norm{\errv{i}}_{4, \rho} \le \errvm$ for all $i \ge 0$. Define the effective discount factor $\gamma' = \max_{\pi \in \Pi_k} \frac{\gamma}{1 - \gamma} \sqrt{ \sqrt{2} \cdot \hat C^\pi(\rho) \cdot \chi^2_{\rho}(\PKernel^\pi \;||\; \PKernelhat^\pi ) }$.
	We have
	\begin{align*}
	\norm{V^{\pi_K} - V^* }_{4, \rho} \leq \frac{2\gamma'^k}{1 - \gamma'} \normfrho{V_0 - V^*} +
\frac{2\gamma' (1 - \gamma'^{k-1})}{(1 - \gamma')^2} \cdot \errvm +
\frac{1}{1 - \gamma'} \cdot \normfrho{\errpi{k}}
	\end{align*}
	
\end{theorem}
\todo{Add discussion -AR}

\fi

\section{Operator splitting Dyna}
\label{sec:OSVI-Dyna}

%
%
%
%
%
%
%
%
%

In the RL setting, we only have access to samples from $\PKernel$. MBRL algorithms, such as  variants of the Dyna architecture, learn $\PKernelhat$ from those samples, and use it to find the value function or policy.
The learned model $\PKernelhat$ is generally different from $\PKernel$ due to the finiteness of the samples as well as the possibility of model approximation error: the true dynamics $\PKernel$ may not be representable with the function approximator used to represent $\PKernelhat$. This is another way to say that the world may be too big to be represented by our models.
A MBRL algorithm that uses $\PKernelhat$ in lieu of $\PKernel$ does not find the true value of the true MDP.
Based on OS-VI, we propose OS-Dyna, as a hybrid model-based and model-free RL algorithm, that takes advantage of both the true environment and the model in its updates and can converge to the true value function despite using inaccurate $\PKernelhat$.

Learning $\PKernelhat$ in OS-Dyna is similar to other MBRL algorithms~\citep{MoerlandBroekensetal2022}: one can use various model learning approaches, either based on maximum likelihood estimate or a decision-aware model learning approach, to learn the model.
Given a learned $\PKernelhat$, we can compute $V_{k}$ from the auxiliary reward function $\rbar_{k} \defeq \rbar_{V_{k-1}}$ by solving the PE or the Control problem in the auxiliary MDP $(\XX, \AA, \rbar_{k}, \PKernelhat)$, as discussed in Section~\ref{sec:OSVI}. 

As $V_k$ is a function of $\rbar_k$, we focus on how $\rbar_k$ should be estimated. The update rule of $\rbar_k$ in OS-VI entails that for every $(x, a)$, we have
\begin{align}
\label{eq:rbar-recurssive-PE}
\rbar_{k}(x, a) &= r(x, a) + \gamma \left(\PKernel(\cdot | x,a) - \PKernelhat( \cdot | x,a)\right) V^\pi(\rbar_{k-1}, \PKernelhat), &&\quad \text{(Policy Evaluation)}\\
\label{eq:rbar-recurssive-Control}
\rbar_{k}(x, a) &= r(x, a) + \gamma \left(\PKernel(\cdot | x,a) - \PKernelhat( \cdot | x,a)\right)  V^*(\rbar_{k-1}, \PKernelhat). &&\quad \text{(Control)}
\end{align}

We update our estimation of $\rbar$ using samples, as shall be discussed soon, and then the value function is updated to $V^\pi(\rbar, \PKernelhat)$ (PE) or $V^*(\rbar, \PKernelhat)$ (Control) with most recent estimate of $\rbar$. The challenge is that the above update rules need access to distribution $\PKernel(\cdot | x,a)$  for every $(x, a)$, while we only have samples from $\PKernel$ at some $(x, a)$ pairs. Fortunately, this challenge has been tackled in developing sample-based algorithms based on the classic VI:
\begin{align}
\label{eq:VI-Q-values}
	\forall (x,a): \qquad Q_k(x, a) = r(x, a) + \gamma \PKernel( \cdot | x,a) V_{k-1},
\end{align}
where $V_{k-1} = Q_{k-1}(x, \pi(x))$ in PE and $V_{k-1} = \max_{a'} Q_{k-1}(x, a)$ in Control. There are multiple approaches to develop sample-based algorithms based on \eqref{eq:VI-Q-values} such as Fitted Value Iteration and Stochastic Approximation (SA)~\citep{Borkar2008}. In this paper we use SA to develop OS-Dyna, but we point out that other algorithms and techniques can also be applied to develop other versions of OS-Dyna. The key step in SA is to use samples to form an unbiased estimate of the intended update value. For a step in the true environment leading to $(X_t, A_t, R_t, X_t')$ tuple, we can have the estimate
$Y_t = R_t + \gamma V(X_t')  - \gamma \EEX{X' \sim \PKernelhat(\cdot | X_t,A_t)} {V(X')}$,
where the expectation can also be estimated by samples from $\PKernelhat(\cdot | X_t,A_t)$. 
This estimate $Y_t$ of the update rule can then be used to update $\rbar$. As an example, for a finite state-action problem, the update rule is
\begin{align}
	\rbar(X_t, A_t) \leftarrow \rbar(X_t, A_t) + \alpha_t \left(
			Y_t - \rbar(X_t, A_t)  \right),
\end{align}
where $\alpha_t$ is the learning rate.
The final procedure of OS-Dyna is presented in Algorithm~\ref{alg:OS-Dyna}.

\todo{Most of the description of OS-Dyna in this section isn't specialized to discrete state/action spaces. The rebuttal portrays it as otherwise.}

\if0
XXX PREVIOUS VERSION 

In this section, we generalize the operator splitting approach to an RL problem in which only samples from $\PKernel$ are available. In this scenario, the approximate model $\PKernelhat$ may not be given, as assumed in previous sections. However, one can learn such an approximate model using samples from the environment as done in MBRL algorithms. 
Unlike MBRL approaches that solely rely on the model, our approach takes advantage of both the true environment and the model in its updates. Therefore, one can think of the operator splitting approach as a hybrid of MBRL and model-free RL.


Using a learned $\PKernelhat$, we can calculate $V_{k}$ from the auxiliary reward function $\rbar_{k} \defeq \rbar_{V_{k-1}}$ by solving the PE or the control problem in the auxiliary MDP $(\XX, \AA, \rbar_{k}, \PKernelhat)$, as discussed in Section~\ref{sec:OSVI}. A possible challenge of using a learned model is that in many cases, we can only take samples from it and do not have access to the full matrices in calculations. Fortunately, solving PE or control problems with such a model is a very common practice in RL and is also done in Dyna architecture. Therefore, we do not focus on this procedure.

The ability of calculating $V_k$ from $\rbar_k$ lets us see the value function as a function of $\rbar_k$. Based on this view, we shift our attention to $\rbar_k$. Note that since there is a fixed connection between $\rbar_k$ and $V_k$ as the solution of PE or control in $(\XX, \AA, \rbar_{k}, \PKernelhat)$, the convergence of $\rbar_k$ and $V_k$ are tied together. Therefore, focusing on updating $\rbar_k$ will have the same convergence properties. The following update rules can be written for $\rbar_k$ itself. For every $(x, a)$,
\begin{align}
\label{eq:rbar-recurssive-PE}
\rbar_{k}(x, a) &= r(x, a) + \gamma \left(\PKernel(\cdot | x,a) - \PKernelhat( \cdot | x,a)\right)\cdot V^\pi(\rbar_{k-1}, \PKernelhat) &&\quad \text{(Policy Evaluation)}\\
\label{eq:rbar-recurssive-Control}
\rbar_{k}(x, a) &= r(x, a) + \gamma \left(\PKernel(\cdot | x,a) - \PKernelhat( \cdot | x,a)\right) \cdot V^*(\rbar_{k-1}, \PKernelhat) &&\quad \text{(Control)}
\end{align}

In OS-Dyna, we maintain a vector (or a function approximation) for $\rbar$, and as we will show how, update it by samples. The value function is updated to $V^\pi(\rbar, \PKernelhat)$ (for PE) or $V^*(\rbar, \PKernelhat)$ (for control) with each update of $\rbar$. This way, we have the value functions in update rules \eqref{eq:rbar-recurssive-PE} and \eqref{eq:rbar-recurssive-Control}. The only challenge is that the above update rules need access to distributions $\PKernel(\cdot | x,a)$ and $\PKernelhat( \cdot | x,a)$ for every $(x, a)$, while we only have samples from these distributions in some $(x, a)$ pairs. Fortunately, this challenge has been tackled in developing sample-based algorithms based on the classic VI:
\begin{align}
\label{eq:VI-Q-values}
	\forall (x,a): \qquad Q_k(x, a) = r(x, a) + \gamma \PKernel( \cdot | x,a) V_{k-1},
\end{align}
where $V_{k-1} = Q_{k-1}(x, \pi(x))$ in PE and $V_{k-1} = \max_{a'} Q_{k-1}(x, a)$ in Control. There are multiple approaches to develop sample-based algorithms based on \eqref{eq:VI-Q-values} such as Fitted Q-Iteration and Stochastic Approximation (SA). In this paper we use SA to develop OS-Dyna, but we point out that other algorithms and techniques can also be applied to develop other versions of OS-Dyna. The key step in SA is to use samples to find an unbiased estimate of the intended update value. For a step $(X_t, A_t, R_t, X_t')$ in the true environment, we can have the following estimate
$	Z = R_t + \gamma V(X_t')  - \gamma \EEX{X' \sim \PKernelhat(\cdot | X_t,A_t)} {V(X')}$,
where the expectation can also be estimated by samples from $\PKernelhat(\cdot | X_t,A_t)$. Finally, we make the following update to $\rbar$ with learning rate $\alpha_t$
\begin{align}
	\rbar(X_t, A_t) \leftarrow \rbar(X_t, A_t) + \alpha_t \cdot \left(
			Z - \rbar(X_t, A_t)  \right).
\end{align}
The final procedure of OS-Dyna is presented in Algorithm~\ref{alg:OS-Dyna}.

\fi 

\todo{Most of the description of OS-Dyna in this section isn't specialized to discrete state/action spaces. The rebuttal portrays it as otherwise.}

\begin{algorithm}[t]
\begin{small}
	\caption{OS-Dyna}
	\begin{algorithmic}[1]
		\State Initialize $V_0, \rbar = 0$, and the approximate model $\PKernelhat$.
		\For{$t = 1, 2, \ldots$}
		\State Sample $(X_t, A_t, R_t, X_t')$ from environment.
		\State Update the model $\PKernelhat$ with $(X_t, A_t, R_t, X_t')$.
%
		\State  $\rbar(X_t, A_t) \leftarrow \rbar(X_t, A_t) + \alpha_t  \left(
		R_t + \gamma V_{t-1}(X_t')  - \gamma \EEX{X' \sim \PKernelhat( \cdot | X_t,A_t)}{V_{t-1}(X')} - \rbar(X_t, A_t) 
		\right)$.		
		\State $V_t \leftarrow V^{\pi}(\rbar, \PKernelhat)$ (For PE)\quad or \quad $V_t \leftarrow V^*(\rbar, \PKernelhat)\;,\;\pi_t \leftarrow \pi^*(\rbar, \PKernelhat)$ (For Control).
		\EndFor
	\end{algorithmic}
\label{alg:OS-Dyna}
\end{small}
\end{algorithm}

\section{Experiments}
\label{sec:Experiments}

We evaluate both OS-VI and OS-Dyna in a finite MDP and compare them with existing methods. Here we present the results for the Control problem on a modified cliffwalk environment in a $6 \times 6$ grid with 4 actions (UP, DOWN, LEFT, RIGHT). We postpone studying the PE problem, the results for other environments, and other relevant details to the supplementary material.
Our convergence analysis shows that the convergence rates of our algorithms depend on the accuracy of $\PKernelhat$. To test OS-VI and OS-Dyna with models of different accuracies, we introduce the smoothed model $\PKernelhat$ of transitions $\PKernel$ with smoothing parameter $\lambda$ as
\begin{equation}\label{eq:exp_model}
\hat{\mathcal{P}}(\cdot | x, a; \mathcal{P}, \lambda) =  (1 - \lambda) \mathcal{P}( \cdot | x,a) +  \lambda U\big( \{x^\prime | \mathcal{P}( x^\prime | x,a) > 0\}\big),
\end{equation} 
where $U(A)$ for some set $A$ is the uniform distribution over $A$. Here, $\lambda$ allows making adjustments to the amount of error introduced in $\PKernelhat$ w.r.t. $\PKernel$. If $\lambda = 0$, $\PKernelhat = \PKernel$ will be the accurate model, and if $\lambda = 1$, $\PKernelhat$ will be uniform over possible next states in $\PKernel$.

\begin{figure}[t]
\centering
\includegraphics[width=1\textwidth]{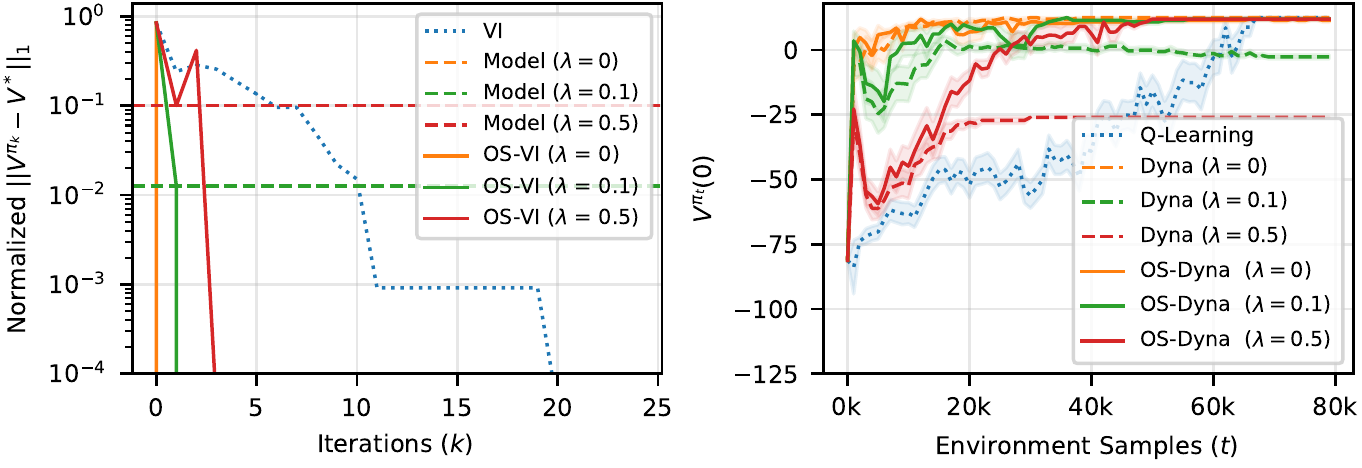}
\caption[short]{\textit{(Left)} 
Normalized error comparisons of OS-VI, VI, and 
the optimal policy of model $\PKernelhat$. \textit{(Right)} Comparison of OS-Dyna with Dyna and Q-Learning in the RL setting.}
\label{fig:main}
\vspace{-1em}
\end{figure}

The left plot in Figure~\ref{fig:main} shows the convergence of OS-VI compared to VI and the solutions the model itself would lead to. The plot shows normalized error of $V^{\pi_k}$ w.r.t $V^*$, i.e., $\norm{V^{\pi_k} - V^*}_1 / \norm{V^*}_1$. It can be seen that OS-VI has faster convergence with more accurate models and introduces acceleration compared to VI across different model errors. Note that the convergence of OS-VI has been achieved despite the error in the model. The dashed lines show how a fully model-based algorithm, which only uses $\PKernelhat$, would obtain a suboptimal solution. 


We also compare OS-Dyna with Dyna and Q-Learning in the RL setting. At each iteration $t$, the algorithms are given a sample $(X_t, A_t, R_t, X_t')$ where $X_t, A_t$ are selected uniformly at random. For OS-Dyna and Dyna we use the smoothed Maximum-likelihood Estimation (MLE) model. If $\mathcal{P}_{\text{MLE}}$ is the current MLE estimation of the environment transitions, OS-Dyna and Dyna use $\PKernelhat(\mathcal{P}_{\text{MLE}}, \lambda)$ defined in \eqref{eq:exp_model} as their models. The learning rates are constant $\alpha$ for iterations $t\le N$ and then decay in the form of $\alpha_t = \alpha / (t-N)$ afterwards. We have fine-tuned the learning rate schedule for \textit{each} algorithm separately for the best results.

The right plot in Figure~\ref{fig:main} shows the results for the RL setting. We evaluate the expected return of the policy at iteration $t$ in the initial state of the environment, i.e., \revised{$V^{\pi_k}(0)$}{$V^{\pi_t}(0)$}. Again, OS-Dyna has converged to the optimal policy much faster than Q-Learning. \revised{for all model errors.}{} Unlike OS-Dyna, \revised{classic}{} Dyna has failed to find the optimal policy in presence of model error. The results show that OS-Dyna can effectively converge faster than Q-Learning without introducing bias to the final solution due to model error.

\section{Conclusion}
\label{sec:Conclusion}

This paper introduced the Operator Splitting Value Iteration (OS-VI) algorithm, which can benefit from an approximate model $\PKernelhat \approx \PKernel$ to accelerate the convergence of the approximate value to the true value function in terms of the number of queries to the true model $\PKernel$.
With a small model error, its convergence rate is exponentially faster compared to well-known dynamical programming algorithms such as Value Iteration and Policy Iteration.
We also proposed OS-Dyna as a hybrid model-based/model-free algorithm that can bring in the benefits of a model-based RL algorithm without converging to a biased solution, as Dyna or any other purely model-based RL algorithm does.
This paper opens up several future directions.
Empirically studying the algorithms on problems with large state spaces, for which a function approximator such as a DNN is required, is an obvious one. This is postponed to a future work as our aim was to build the mathematical foundation and conducting experiments without worrying about challenges such as the optimization of a DNN.
There are other algorithmic and theoretical directions to be pursued.
One is exploring the space of splittings of $\Id - \gamma \PKernelpi$.
%
The other is whether we can design Operator Splitting variants of other DP algorithms such as Policy Iteration and Modified Policy Iteration, and study their convergence behaviour.


\ifSupp

\clearpage
\appendix 

\section{List of appendices}
We include a table of content in order to make the navigation through the appendices easier.
\begin{itemize}
    \item Appendix~\ref{sec:Background}: Background, including MDPs (\ref{sec:Background-MDP}), Norms (\ref{sec:Background-Norms}), commonly used matrix splitting in numerical linear algebra (\ref{sec:Background-MatrixSplitting}), and the relation between $\smallnorm{ \PKernelpi - \PKernelhat^\pi }_\infty$, TV, and $\KL$ (\ref{sec:Background-SupModelError-TV-KL}).
    \item Appendix~\ref{sec:AlgsDetails}: Details of the OS-Dyna algorithm.
    \item Appendix~\ref{sec:ProofsAndMoreTheory}: Basic properties of the Varga operator (\ref{sec:ProofsAndMoreTheory-BasicProperties}) and the proofs of the theoretical results for OS-VI (PE) (\ref{sec:ProofsAndMoreTheory-OSVI-PE}) and OS-VI (Control) (\ref{sec:ProofsAndMoreTheory-OSVI-PE}).
    \item Appendix~\ref{sec:AdditionalExperiments}: Description of the environments (\ref{sec:exp-envs}), and additional experiments including the study of convergence rate of OS-VI (\ref{sec:exp-osvi-convergence}), effect of model error (\ref{sec:exp-osvi-modelerror}), and further study of OS-Dyna (\ref{sec:exp-osdyna}).
    \item Appendix~\ref{sec:RelatedWork-Extended}: Extended related work, including detailed comparison of the convergence rate of OS-VI with VI, PI, and MPI (\ref{sec:RelatedWork-Extended-Convergence-Comparison}) and other examples of matrix splitting in dynamic programming and RL (\ref{sec:RelatedWork-Extended-OtherMS-Method}).
\end{itemize}


\section{Background}
\label{sec:Background}

\subsection{Markov Decision Processes}
\label{sec:Background-MDP}

We consider a discounted Markov Decision Process (MDP) $(\XX, \AA, \RKernel, \PKernel, \gamma)$~\citep{SzepesvariBook10,BertsekasShreve96,Bertsekas96,SuttonBarto2018}. Our notation is most similar to~\citet{SzepesvariBook10}'s.\footnote{This appendix closely follows Appendix~A of~\citet{FarahmandGhavamzadeh2021}.}
Here $\XX$ is the state space, $\AA$ is the action space, $\RKernel: \XA \ra \MM(\Real)$ is the reward distribution, $\PKernel:  \XA \ra \MM(\XX)$ is the transition probability kernel, and $ 0 \leq \gamma < 1$ is the discount factor.\footnote{For a set $\Omega$, the space of bounded functions is denoted by $\BB(\Omega)$, and the space of probability distributions is denoted by $\MM(\Omega)$. Here we do not go into the measurability issues, so we omit the detail of the necessary $\sigma$-algebra.}

The policy $\pi: \XX \ra \MM(\AA)$ (stochastic policy) or $\pi: \XX \ra \AA$ (deterministic) is a Markov stationary policy. 
Given a policy, we can define $\PKernelpi$ as the transition probability kernel of following $\pi$, and it would be
\[
	\PKernelpi(\cdot|x) = \int \PKernel(\cdot|x,a) \pi(\mathrm{d} a|x).
\]
We can define ${\PKernelpi}^{(m)}: \XX \ra \MM(\XX)$ for $m \geq 0$ recursively. For $m = 0$, we use the convention that it is equal to $\Id$, the identity operator (or matrix). 
For $m \geq 1$, we have
\[
	{\PKernelpi}^{(m)}(\cdot|x) = \int \PKernelpi(\dy|x) {\PKernelpi}^{(m-1)}(\cdot|y).
\]
%

The discounted future-state distribution $\eta^\pi$ is defined based on $m$-step transition as
\begin{align*}
    	\eta^\pi (\cdot | x) = (1 - \gamma) \sum_{m = 0}^{\infty} \gamma^m  {\PKernelpi}^{(m)}(\cdot | x).
\end{align*}
We can define $\RKernel^\pi: \XX \ra \MM(\Real)$ in a similar fashion.
The functions $r: \XA \ra \Real$ and $r^\pi: \XX \ra \Real$ are the expected value of the reward distribution.

We use $\Vpi$ and $\Qpi$ to denote the state-value and action-value functions for a policy $\pi$. We use $\Vopt$ and $\Qopt$ to denote the optimal value and action-value functions.
In this work, we mostly use the state-value function, which we simply call the value function.

The Bellman operator $\Tpi: \BB(\XX) \ra \BB(\XX)$ for policy $\pi$ and the Bellman optimality operator $\Topt: \BB(\XX) \ra \BB(\XX) $ are defined as
\begin{align*}
	& (\Tpi V)(x) \eqdef \rpi(x) + \gamma \int \PKernelpi(\dy|x) V(y),
	\\
	&
	(\Topt V)(x) \eqdef \max_{a \in \AA} \left \{ r(x,a) + \gamma \int \PKernel(\dy|x,a) V(y) \right \}.
\end{align*}
%
For countable state and action spaces, the integrals are replaced by summations.
The Bellman operators $\Tpi: \BB(\XA) \ra \BB(\XA)$ and $\Topt: \BB(\XA) \ra \BB(\XA)$ (both applied on the action-value function) are defined similarly. We do not use them in the paper, so we do not explicitly define them here.

We denote $\pigreedy(\cdot;V)$ as the greedy policy w.r.t. $V$, i.e., at each state $x$, we have
\[
	\pigreedy(x;V) \leftarrow \argmax_{a \in \AA} \left\{ r(x,a) + \gamma \int \PKernel(\dy|x,a) V(y) \right\}.
\]
%

\subsection{Norms and metrics}
\label{sec:Background-Norms}

For function $f: \XX \ra \reals$, the $L_p(\rho)$-norm with respect to distribution $\rho \in \MM(\XX)$ over set $\XX$ is defined as
\begin{align*}
    \norm{f}_{p, \rho}^p \defeq \int f(x)^p \rho(\dx).
\end{align*}
If $\rho$ is the uniform distribution (or a Lebesgue measure), we may drop it in the notation for simplicity and write $\norm{f}_p$. In the special case of $p = \infty$, we have
\begin{align*}
   \norm{f}_{\infty} \defeq \sup_{x} | f(x) |.
\end{align*}

For two distributions $p , q \in \MM(\XX)$, the $\chi^2$-divergence is defined as
\begin{align*}
    \chi^2 (p \;||\; q) \defeq
    \int \frac{(p(\dx) - q(\dx))^2}{q(\dx)}.
\end{align*}

\subsection{Other examples of matrix splitting}
\label{sec:Background-MatrixSplitting}

In addition to the example of $M = \Id$ and $N = \Id - A$ in Section~\ref{sec:VIandSplitting-MatrixSplitting}, there are several other commonly used choices for matrix splitting.

If we decompose $A$ by its diagonal part $D$, its strictly lower triangular part $-L$, and its strictly upper triangular part $-U$ (so $A = D - L - U$), the choice of $M = D$ and $N = L + U$ leads to the Jacobi iteration. Clearly, the computation of $M^{-1} = D^{-1}$ is easy.

If we select 
$M = D - L$ and $N = U$ (or $M = D - U$ and \revised{}{$N = L$}), we get the forward (or backward) Gauss-Seidel iteration.

In all these cases, solving $M z_{k} = N z_{k-1} + b$ is easy.
The convergence of these methods can be established too. For instance, if $A$ is strictly diagonally dominated, the Jacobi iteration is convergent (Theorem 11.2.2 of~\citet{GolubVanLoan2013}).
These examples, as well as other choices available in the numerical linear algebra literature such as the Successive Over Relaxation method, show that there are multiple ways to split $A$ to $M$ and $N$, each with their own convergence properties.

\subsection{Relation between $\smallnorm{ \PKernelpi - \PKernelhat^\pi }_\infty$, the Total Variation error, and the $\KL$ divergence}
\label{sec:Background-SupModelError-TV-KL}

The model error $\smallnorm{ \PKernelpi - \PKernelhat^\pi }_\infty$ appeared in Section~\ref{sec:Theory}, and we argued that it is a reasonable choice to measure the distances between distributions. We expand on it here.

For countable state spaces, the norm $\smallnorm{ \PKernelpi - \PKernelhat^\pi }_\infty$ is the maximum over states of the Total Variation (TV) distance between $\PKernelpi(\cdot|x)$ and $\PKernelhat^\pi(\cdot|x)$. The TV distance itself can be upper bounded by the $\KL$-divergence between the distributions by Pinsker's inequality. So we get
\begin{align*}
	\norm{ \PKernelpi - \PKernelhat^\pi }_\infty =
	\max_{x \in \XX} \norm{ \PKernelpi(\cdot | x) - \PKernelhat^\pi (\cdot | x) }_1
	\leq
	\max_{x \in \XX} \sqrt{ 2 \KL\left( \PKernelpi(\cdot | x) || \PKernelhat^\pi (\cdot | x) \right) }.
\end{align*}
The KL-divergence is the population version of the negative-logarithm loss used in the Maximum Likelihood Estimation (MLE). Admittedly, the maximum over $\XX$ in $\smallnorm{\PKernel^\pi-\PKernelhat^\pi}_\infty$ is perhaps strict. Usually there are states in the state space that are irrelevant to the problem or even unreachable. If the model is learned from samples, it will be very inaccurate in these states.
\todo{Maybe mention that we actually need an IterVAML-like procedure for model learning. Or maybe we can have a combination of VAML/VaGraM and OS-Dyna for the next paper? -AMF}

\section{More details on OS-Dyna}
\label{sec:AlgsDetails}

Algorithm~\ref{alg:OS-Dyna-Detailed} shows a detailed version of Algorithm~\ref{alg:OS-Dyna} for the special case of finite MDPs and model learning based on MLE.
It is important to note that this is only a particular instantiation, and other variations are possible too.
For example, here we assume that samples are coming from a fixed distribution $\rho$, but it is also possible that they come from a trajectory of the agent's interaction with the environment.
Also many other techniques in RL such as Fitted Value Iteration with a replay buffer can be used instead of the stochastic approximation method we used in line 10 of Algorithm~\ref{alg:OS-Dyna-Detailed}.

Algorithm~\ref{alg:OS-Dyna-Detailed} uses MLE for model learning. In finite MDPs, where $\PKernelhat(\cdot | x,a)$ is a $|\XX|$-dimensional vector of probabilities for every $x,a$, MLE takes the form
\begin{align*}
    \PKernelhat(x' | x,a) = \frac{N(x, a, x')}{\sum_{x''} N(x, a, x'')}.
\end{align*}
Here, $N(x, a, x')$ is the number of times $x'$ is reached from $x$ and action $a$. This gives the model update in Algorithm~\ref{alg:OS-Dyna-Detailed}. In the general case, $\PKernelhat$ may be represented by some parametric distribution $p_\theta$. Nonetheless, the same principle of MLE can be applied to define the loss function for model learning. For example, the model can be updated incrementally by moving towards the gradient of log-likelihood:
\begin{align*}
    \hat \theta \leftarrow \hat \theta + \alpha \grad_{\theta} \sum_{i=1}^t \log p_\theta(X_i'|X_i, A_i) \rvert_{\theta = \hat{\theta}}.
\end{align*}
In addition to using the conventional approach of using MLE for model learning~\citep{Sutton1990,HaSchmidhuber2018NIPS}, more recent research has studied decision-aware model learning, in which the loss function incorporates some aspects of the decision problem itself. Some examples are
\citet{JosephGeramifardRobertsHowRoy2013,FarahmandVAML2017,SilvervanHasseltetal2017,OhSinghLee2017,Farahmand2018,GrimmBarretoSinghSilver2020,SchrittwieserAntonoglou2019,DOroMetelliTirinzoniPapiniRestelli2020,AbachiGhavamzadehFarahmand2020,LambertAmos2020,AyoubJiaSzepesvariWang2020,NikishinAbachi2022,VoelckerLiaoGargGarahmand2022}.

\begin{algorithm}[t]
	\caption{OS-Dyna (Detailed Version)}
	\begin{algorithmic}[1]
	    \State \textbf{Input:} {Sampling distribution $\rho$ over $\XX \times \AA$. Learning rate schedule $\left(\alpha_t \right)_{t=0}^{\infty}$}. Policy $\pi$ (for PE). Inner loop iterations $L$. Environment steps $T$.
		\State Initialize 
		\begin{itemize}
		    \item Value function $V\colon \XX \to \reals$ with $V(x) = 0$ for all $x$,
		    \item Auxiliary reward function $\rbar \colon \XX \times \AA \to \reals$ with $\rbar(x, a) = 0$ for all $x, a$,
		    \item Visitation counts $N \colon \XX \times \AA \times \XX \to \mathbb{Z}$ with $N(x, a, x') = 0$ for all $x, a, x'$.
		    \item Transition Model $\PKernelhat \colon \XX \times \AA \times \XX \to [0,1]$ with $\PKernelhat(x' | x, a) = \frac{1}{|\XX|}$ for all $x, a, x'$.
		\end{itemize}
		\For{$t = 1, 2, \ldots, T$}
		\State Sample $(X_t, A_t)$ from sampling distribution $\rho$.
		\State Take action $A_t$ at $X_t$ in the environment.
		\State Observe $X_t' \sim \PKernel(\cdot | X_t, A_t)$ and $R_t \sim \mathcal{R}(\cdot | X_t, A_t)$.
		\State Set $N(X_t, A_t, X_t') \leftarrow N(X_t, A_t, X_t') + 1$.
		\State Update the model $\forall x' \in \XX: \quad \PKernelhat(x' | X_t, A_t) \leftarrow  \frac{N(X_t, A_t, x')}{\sum_{x''\in \XX} N(X_t, A_t, x'')}$.
		\State Let 
		\begin{align*}
		    Y_t = R_t + \gamma V(X_t')  - \gamma \sum_{x' \in \XX} \PKernelhat(x' | X_t, A_t) V(x').
		\end{align*}
		\State  Set $\rbar(X_t, A_t) \leftarrow \rbar(X_t, A_t) + \alpha_t \left( Y_t - \rbar(X_t, A_t)
		\right)$.		
		\State Set $U_0 \leftarrow V$.
		\For{$i = 1, 2, \ldots, L$}
		\State  For every $x \in \XX$, set 
		\begin{align*}
		U_i(x) &\leftarrow 
		\begin{cases}
		\rbar(x, \pi(x)) + \gamma \sum_{x' \in \XX} \PKernelhat(x' | x, \pi(x)) U_{i-1}(x'), & \text{(for PE)}\\
		\max_{a \in \AA}\left[\rbar(x, a) + \gamma \sum_{x' \in \XX} \PKernelhat(x' | x, a) U_{i-1}(x')\right], & \text{(for Control)}\\
		\end{cases}\\
		\hat{\pi}^*(x) &\leftarrow \argmax_{a \in \AA}\left[\rbar(x, a) + \gamma \sum_{x' \in \XX} \PKernelhat(x' | x, a) U_{i-1}(x')\right]. \qquad \text{(for Control)}
		\end{align*}
		\EndFor
		\State Set $V \leftarrow U_L$.
		\EndFor
		\State For PE, output $V$. For control, output $V$ and $\hat{\pi}^*$.
	\end{algorithmic}
\label{alg:OS-Dyna-Detailed}
\end{algorithm}

\section{Proofs and other theoretical results}
\label{sec:ProofsAndMoreTheory}

\subsection{Basic properties of the Varga operators $\Spi$ and $\Sopt$ and MDPs}
\label{sec:ProofsAndMoreTheory-BasicProperties}

\begin{lemma}
	\label{lemma:operator-stationary}
For any policy $\pi$, we have
\begin{align*}
\Spi V^\pi = V^\pi.
\end{align*}
Also for the optimal value function $V^*$ and the optimal policy $\pi^*$, we have
\begin{align*}
\Sopt V^* = S^{\pi^*}V^* = V^*.
\end{align*}
\end{lemma}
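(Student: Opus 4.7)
The lemma is essentially an algebraic verification, so the plan is to substitute the Bellman equation into the definition of $\Spi$ and watch the approximate-model terms cancel. For the first identity, starting from the Bellman equation $\Vpi = \rpi + \gamma \PKernelpi \Vpi$, I rewrite $\rpi = (\Id - \gamma \PKernelpi)\Vpi$. Plugging into the definition of $\Spi$ gives
\[
\Spi \Vpi = (\Id - \gamma \PKernelhat^\pi)^{-1}\bigl[(\Id - \gamma \PKernelpi)\Vpi + \gamma(\PKernelpi - \PKernelhat^\pi)\Vpi\bigr] = (\Id - \gamma \PKernelhat^\pi)^{-1}(\Id - \gamma \PKernelhat^\pi)\Vpi = \Vpi,
\]
since the $\gamma \PKernelpi \Vpi$ terms cancel and only the $\PKernelhat^\pi$ contribution survives. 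This is the only real content of part one.

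For the second identity, the equation $S^{\pi^*} V^* = V^*$ is just the first part applied to $\pi = \pi^*$, using $V^{\pi^*} = V^*$. To obtain $\Sopt V^* = V^*$, I will prove two inequalities. The lower bound $\Sopt V^* = \max_\pi \Spi V^* \geq S^{\pi^*} V^* = V^*$ is immediate. For the upper bound, I need to show $\Spi V^* \leq V^*$ for every policy $\pi$. Rearranging, this is equivalent to
\[
\rpi + \gamma (\PKernelpi - \PKernelhat^\pi) V^* \leq (\Id - \gamma \PKernelhat^\pi) V^*,
\]
i.e., $\rpi + \gamma \PKernelpi V^* \leq V^*$, which is exactly $T^\pi V^* \leq T^* V^* = V^*$, holding by the Bellman optimality equation.

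The one subtlety is that passing from the inequality on $\rbar^\pi_{V^*} = \rpi + \gamma (\PKernelpi - \PKernelhat^\pi) V^*$ to the inequality on $(\Id - \gamma \PKernelhat^\pi)^{-1} \rbar^\pi_{V^*}$ requires monotonicity of $(\Id - \gamma \PKernelhat^\pi)^{-1}$. I will justify this via the Neumann series $(\Id - \gamma \PKernelhat^\pi)^{-1} = \sum_{t \geq 0} \gamma^t (\PKernelhat^\pi)^t$, which converges since $\gamma < 1$ and $\PKernelhat^\pi$ is a stochastic kernel (supremum norm $1$); each $(\PKernelhat^\pi)^t$ is a positive operator, hence so is the sum, so the operator is monotone. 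This is the only potentially delicate step in an otherwise routine proof.
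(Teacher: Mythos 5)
Your proof is correct. The first identity is established exactly as in the paper: both arguments substitute the Bellman equation $\Vpi = \rpi + \gamma \PKernelpi \Vpi$ into the definition of $\Spi$ and let the $\gamma\PKernelpi\Vpi$ terms cancel, leaving $(\Id - \gamma \PKernelhat^\pi)^{-1}(\Id - \gamma \PKernelhat^\pi)\Vpi$. For the identity $\Sopt V^* = V^*$, however, you take a genuinely different route. The paper verifies that $V^*$ satisfies the Bellman optimality equation of the auxiliary MDP $(\XX, \AA, \rbar_{V^*}, \PKernelhat, \gamma)$ and then invokes the standard fact that such a function is the optimal value of that MDP, which equals $\max_\pi \Spi V^* = \Sopt V^*$; this is short but leans on MDP theory for the auxiliary problem. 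You instead prove the two inequalities directly: the lower bound $\Sopt V^* \vecge S^{\pi^*} V^* = V^*$ is immediate from part one, and the upper bound $\Spi V^* \vecle V^*$ for every $\pi$ reduces, after multiplying through by $(\Id - \gamma \PKernelhat^\pi)$, to $\Tpi V^* \vecle \Topt V^* = V^*$, with the reduction justified by the monotonicity of $(\Id - \gamma \PKernelhat^\pi)^{-1}$ via its Neumann series. Your version is slightly more self-contained (it needs only positivity of the resolvent and the original Bellman optimality equation, not the characterization of optimal values in the auxiliary MDP), and the Neumann-series positivity argument you supply is the same device the paper uses later in its Lemma on $(\Mpi)^{-1}$ and $\etahatpi$; the paper's version is marginally shorter because it delegates the maximization over policies to known theory. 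Both are complete and rigorous.
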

\begin{proof}
For the first part, we write
\begin{align*}
S^\pi V^\pi 
&= (\Id - \gamma \PKernelhat^\pi)^{-1} (r^{\pi} + \gamma (\PKernelpi - \PKernelhat^\pi) V^{\pi})\\
&= (\Id - \gamma \PKernelhat^\pi)^{-1} (r^{\pi} + \gamma \PKernelpi V^{\pi} - \gamma \PKernelhat^\pi V^{\pi})\\
&= (\Id - \gamma \PKernelhat^\pi)^{-1} (V^{\pi} - \gamma \PKernelhat^\pi V^{\pi})\\
&= (\Id - \gamma \PKernelhat^\pi)^{-1} (\Id - \gamma \PKernelhat^\pi )V^{\pi} \\
&= V^{\pi},
\end{align*}
where we used the Bellman equation $V^{\pi} = r^{\pi} + \gamma \PKernelpi V^{\pi}$ in the third equality.

For the second part, note that the second equality is a consequence of the first part and $V^* = V^{\pi^*}$. For the first equality, we prove $S^*V^* = V^*$ by showing that $V^*$ is the optimal value function in MDP $(\XX, \AA, \rbar_{V^*}, \PKernelhat)$. To see this, we show that it satisfies the optimal Bellman equation for this MDP. For any state $x$,
\begin{align*}
\max_a \rbar_{V^*}(x, a) + \gamma \PKernelhat(\cdot | x, a)V^*
&=\max_a r(x, a) + \gamma \PKernel(\cdot | x, a)V^* - \gamma \PKernelhat(\cdot | x, a)V^* + \gamma \PKernelhat(\cdot | x, a)V^*\\
&=\max_a r(x, a) + \gamma \PKernel(\cdot | x, a)V^*\\
&= V^*(x),
\end{align*}
where in the last step we used the optimal Bellman equation in the original MDP.
\end{proof}

\begin{lemma}
	\label{lemma:spi-diff}
	Let $\Gpi = (\Mpi)^{-1}\Npi = (\Id - \gamma \PKernelhat^\pi)^{-1} \gamma (\PKernel - \PKernelhat)$. For any two functions $V_1$ and $V_2$ and policy $\pi$ we have
	\begin{align*}
	\Spi V_1 - \Spi V_2 = \Gpi(V_1 - V_2)
	\end{align*}
\end{lemma}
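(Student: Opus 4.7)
The plan is to proceed by direct computation, exploiting the fact that the Varga operator $\Spi$ is an affine function of $V$, so the constant (reward) part cancels when we take a difference. Concretely, I would start by writing out both $\Spi V_1$ and $\Spi V_2$ using the definition
\[
\Spi V = (\Id - \gamma \PKernelhat^\pi)^{-1}\bigl[\rpi + \gamma (\PKernelpi - \PKernelhat^\pi) V\bigr],
\]
and then subtract. The $(\Id - \gamma \PKernelhat^\pi)^{-1} \rpi$ terms are identical in the two expressions and cancel immediately.

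What remains is
\[
\Spi V_1 - \Spi V_2 = (\Id - \gamma \PKernelhat^\pi)^{-1} \gamma (\PKernelpi - \PKernelhat^\pi) V_1 - (\Id - \gamma \PKernelhat^\pi)^{-1} \gamma (\PKernelpi - \PKernelhat^\pi) V_2,
\]
and since $(\Id - \gamma \PKernelhat^\pi)^{-1}$ and $(\PKernelpi - \PKernelhat^\pi)$ are linear operators, I can factor them out of the subtraction to obtain
\[
\Spi V_1 - \Spi V_2 = (\Id - \gamma \PKernelhat^\pi)^{-1} \gamma (\PKernelpi - \PKernelhat^\pi) (V_1 - V_2) = \Gpi (V_1 - V_2),
\]
which is exactly the claim.

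There is essentially no obstacle here: the result is a purely mechanical consequence of the affine-linear structure of $\Spi$ as a function of $V$. The only point worth noting for rigour is that the inverse $(\Id - \gamma \PKernelhat^\pi)^{-1}$ exists as a bounded linear operator on $\BB(\XX)$ (because $\gamma \PKernelhat^\pi$ has operator norm at most $\gamma < 1$ in the supremum norm, so a Neumann series argument applies), which justifies treating it as a well-defined linear map; this is already implicit in the definition of $\Spi$ earlier in the paper.
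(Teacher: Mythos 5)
Your proof is correct and is essentially identical to the paper's: both write out $\Spi V_1$ and $\Spi V_2$ from the definition, cancel the common $\rpi$ term, and factor the linear operators through the difference. The added remark on the existence of $(\Id - \gamma \PKernelhat^\pi)^{-1}$ via a Neumann series is a reasonable extra note on rigour but does not change the argument.
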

\begin{proof}
We have 
\begin{align*}
\Spi V_1 - \Spi V_2 &= (\Id - \gamma \PKernelhat^\pi)^{-1} (r^{\pi} + \gamma (\PKernelpi - \PKernelhat^\pi) V_1) - (\Id - \gamma \PKernelhat^\pi)^{-1} (r^{\pi} + \gamma (\PKernelpi - \PKernelhat^\pi) V_2)\\
&=
(\Id - \gamma \PKernelhat^\pi)^{-1} (r^{\pi} + \gamma (\PKernelpi - \PKernelhat^\pi) V_1 - r^{\pi}- \gamma (\PKernelpi - \PKernelhat^\pi) V_2)\\
&=
(\Id - \gamma \PKernelhat^\pi)^{-1} \gamma (\PKernelpi - \PKernelhat^\pi) (V_1 - V_2)\\
&= \Gpi (V_1 - V_2).
\end{align*}
\end{proof}

\begin{lemma}
	\label{lemma:sopt-spi-ineq}
 For any value function $V$ and policy $\pi$, we have $\Sopt V \vecge \Spi V$ where $\vecge$ is componentwise inequality.
\end{lemma}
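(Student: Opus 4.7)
The plan is to interpret both sides as value functions in the auxiliary MDP $\MDP_V \defeq (\XX, \AA, \rbar_V, \PKernelhat, \gamma)$ and then invoke the standard MDP fact that the optimal value function dominates the value function of every policy. I would carry out three short steps.

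First, I would recall from the discussion around \eqref{eq:rbar-definition} that for any fixed value function $V$ and policy $\pi$,
\[
\Spi V = (\Id - \gamma \PKernelhat^\pi)^{-1} \rbar^\pi_V
\]
is exactly $V^\pi(\rbar_V, \PKernelhat)$, the value function of $\pi$ in the auxiliary MDP $\MDP_V$. This is just solving the Bellman equation for $\pi$ in $\MDP_V$, which the excerpt already notes.

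Second, I would invoke the interpretation of the Varga optimality operator: by definition $\Sopt V = \max_\pi \Spi V$, and the $S$-improved policy $\pi_V(V) = \argmax_\pi \Spi V$ coincides with the optimal policy $\pi^*(\rbar_V, \PKernelhat)$ of the auxiliary MDP $\MDP_V$ (this is precisely the content of the definition following \eqref{eq:S-improved-policy}). Hence $\Sopt V = V^{\pi_V(V)}(\rbar_V, \PKernelhat) = V^*(\rbar_V, \PKernelhat)$ is the optimal value function of $\MDP_V$.

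Third, I would apply the fundamental property of discounted MDPs that the optimal state-value function dominates the value function of every Markov stationary policy, componentwise. Applied to the MDP $\MDP_V$ with the arbitrary policy $\pi$, this yields $V^*(\rbar_V, \PKernelhat) \vecge V^\pi(\rbar_V, \PKernelhat)$, which by the two identifications above is exactly $\Sopt V \vecge \Spi V$. There is no substantial obstacle; the only care needed is to verify that the auxiliary MDP $\MDP_V$ is a well-defined discounted MDP whose optimal policy exists, which is immediate since $\rbar_V$ is a bounded reward (for bounded $V$) and $\PKernelhat$ is a valid transition kernel, so the standard existence results for optimal Markov stationary policies apply.
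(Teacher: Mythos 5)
Your proposal is correct, but it takes a more elaborate route than the paper. The paper's own proof is a single sentence: the inequality is a direct consequence of the definition $\Sopt V = \max_\pi \Spi V$, since a componentwise maximum over policies trivially dominates the value of any particular policy. Your argument instead identifies $\Spi V$ with $V^\pi(\rbar_V, \PKernelhat)$ and $\Sopt V$ with $V^*(\rbar_V, \PKernelhat)$ in the auxiliary MDP $(\XX, \AA, \rbar_V, \PKernelhat, \gamma)$ and then invokes the standard fact that the optimal value function dominates every policy's value function. Both are valid; what your version buys is that it makes explicit why the pointwise supremum over policies is simultaneously attained by a \emph{single} policy $\pi_V(V)$ at all states (so that $\Sopt V = S^{\pi_V(V)} V$ is coherent), a point the one-line proof leaves implicit and which the paper only justifies in the surrounding discussion of the $S$-improved policy. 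For the inequality itself, however, the definitional argument suffices, so your extra machinery is not needed.
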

\begin{proof}
This is direct consequence of definition $\Sopt V= \max_\pi \Spi V$.
\end{proof}

\begin{lemma}
\label{lemma:mpi-equals-etahat}
For any policy $\pi$, initial state $x \in \XX$, and a measurable set $B$, we have that
\[
	(\Mpi)^{-1}(B | x) = \frac{1}{1 - \gamma} \etahatpi(B | x).
\]
%
\end{lemma}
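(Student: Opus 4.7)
The plan is to recognize $(\Mpi)^{-1}$ as a Neumann series in $\gamma \PKernelhat^\pi$ and then match it term by term with the series definition of $\etahatpi$.

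First, I would verify that the inverse $(\Id - \gamma \PKernelhat^\pi)^{-1}$ exists as a bounded operator on $\BB(\XX)$. Since $\PKernelhat^\pi$ is a Markov transition kernel, it acts as a contraction with operator norm at most $1$ on the space of bounded measurable functions equipped with the supremum norm; hence $\gamma \PKernelhat^\pi$ has operator norm at most $\gamma < 1$. Standard Banach space theory then gives the Neumann series representation
\begin{equation*}
(\Id - \gamma \PKernelhat^\pi)^{-1} = \sum_{m=0}^{\infty} (\gamma \PKernelhat^\pi)^m = \sum_{m=0}^{\infty} \gamma^m (\PKernelhat^\pi)^{(m)},
\end{equation*}
where the last equality uses the convention ${(\PKernelhat^\pi)}^{(0)} = \Id$ and the recursive definition of iterated kernels given in Appendix~\ref{sec:Background-MDP}.

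Next, I would interpret the identity above at the level of measures. Applying the operator $(\Mpi)^{-1}$ to the indicator function $\mathbf{1}_B$ and evaluating at state $x$ yields $(\Mpi)^{-1}(B | x)$ on the left-hand side, while the right-hand side becomes $\sum_{m=0}^{\infty} \gamma^m (\PKernelhat^\pi)^{(m)}(B | x)$. Comparing with the definition
\begin{equation*}
\etahatpi(B | x) = (1 - \gamma) \sum_{m=0}^{\infty} \gamma^m (\PKernelhat^\pi)^{(m)}(B | x),
\end{equation*}
we immediately obtain $(\Mpi)^{-1}(B | x) = \frac{1}{1 - \gamma} \etahatpi(B | x)$, which is the claim.

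No real obstacle is expected: the only subtlety is making sure the Neumann series is applied to measures rather than just to bounded functions, and that the exchange of summation and set-evaluation is justified, which follows from the monotone convergence theorem since all terms $\gamma^m (\PKernelhat^\pi)^{(m)}(B | x)$ are nonnegative.
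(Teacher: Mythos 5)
Your proposal is correct and follows essentially the same route as the paper's proof: expand $(\Id - \gamma \PKernelhat^\pi)^{-1}$ as the Neumann series $\sum_{m \geq 0} \gamma^m (\PKernelhat^\pi)^{(m)}$ (valid since $\smallnorm{\gamma \PKernelhat^\pi}_\infty = \gamma < 1$) and match it term by term against the series defining $\etahatpi$. Your added remarks on evaluating the series against indicator functions and justifying the interchange via monotone convergence are fine but not a departure from the paper's argument.
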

\begin{proof}
Recall that $(\Mpi)^{-1} = (\Id - \gamma \PKernelhat^\pi)^{-1}$. As $\smallnorm{\gamma \PKernelhat^\pi}_\infty = \gamma < 1$, we can use the Neumann expansion  
\[
(\Id - \gamma \PKernelhat^\pi)^{-1} = \sum_{m \geq 0} (\gamma \PKernelhat^\pi)^{(m)}.
\]
Therefore, the probability of starting from state $x$ and reaching a measurable set $B$ is
$
\sum_{m \geq 0} (\gamma \PKernelhat^\pi)^{(m)}(B | x)$, which is $\frac{1}{1 - \gamma} \etahatpi(B | x)$ by the definition of $\etahatpi$.
\end{proof}

\begin{lemma}
	\label{lemma:eta-p-bound}
 For any policy $\pi$, initial state $x \in \XX$, and a measurable set $B$, we have
 \begin{align*}
\int_y \etahatpi(\dy | x)\PKernelhat^\pi(B|y) \le \frac{1}{\gamma} \etahatpi(B | x).
 \end{align*}
\end{lemma}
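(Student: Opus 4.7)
The plan is to unfold both sides using the Neumann expansion of $(\Mpi)^{-1}$ that was already established in Lemma~\ref{lemma:mpi-equals-etahat}, reducing everything to a telescoping identity between geometric series in the powers of the kernel $\PKernelhat^\pi$. Concretely, starting from the definition
\[
\etahatpi(\cdot|x) = (1-\gamma)\sum_{m=0}^{\infty}\gamma^{m}(\PKernelhat^\pi)^{(m)}(\cdot|x),
\]
I would substitute into the left-hand side and use Fubini (justified by non-negativity) to interchange the integral with the sum.

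After the interchange, the inner integral becomes a Chapman--Kolmogorov-type composition:
\[
\int_y (\PKernelhat^\pi)^{(m)}(\dy|x)\,\PKernelhat^\pi(B|y) = (\PKernelhat^\pi)^{(m+1)}(B|x).
\]
Hence
\[
\int_y \etahatpi(\dy|x)\,\PKernelhat^\pi(B|y) = (1-\gamma)\sum_{m=0}^{\infty}\gamma^{m}(\PKernelhat^\pi)^{(m+1)}(B|x) = \frac{1-\gamma}{\gamma}\sum_{n=1}^{\infty}\gamma^{n}(\PKernelhat^\pi)^{(n)}(B|x),
\]
after a reindexing $n = m+1$ and pulling out a factor of $1/\gamma$.

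The key observation is that the tail sum $\sum_{n=1}^{\infty}\gamma^{n}(\PKernelhat^\pi)^{(n)}(B|x)$ is precisely the full series for $\etahatpi(B|x)/(1-\gamma)$ minus the missing $n=0$ term, which is the indicator $\mathbb{I}\{x \in B\}$. Substituting gives
\[
\int_y \etahatpi(\dy|x)\,\PKernelhat^\pi(B|y) = \frac{1}{\gamma}\etahatpi(B|x) - \frac{1-\gamma}{\gamma}\mathbb{I}\{x \in B\} \;\le\; \frac{1}{\gamma}\etahatpi(B|x),
\]
where the inequality uses non-negativity of the discarded indicator term. The proof is essentially bookkeeping on the geometric series; no obstacle beyond making sure the index-shift and the Fubini interchange are explicit. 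I expect no substantive difficulty.
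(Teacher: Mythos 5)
Your proposal is correct and is essentially identical to the paper's own proof: both substitute the series definition of $\etahatpi$, interchange sum and integral, apply Chapman--Kolmogorov to shift the index, and then recognize the resulting tail sum as $\frac{1}{\gamma}\etahatpi(B|x)$ minus the non-negative $m=0$ term $(1-\gamma)\,(\PKernelhat^\pi)^{(0)}(B|x)$. No gaps; the bookkeeping is exactly as the paper does it.
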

\begin{proof}
We use the definition of $\etahatpi$ to get that
	\begin{align*}
\int_y \etahatpi(\dy|x) \PKernelhat^\pi(B|y)
&= (1-\gamma) \int_y \left(\sum_{t=0}^{\infty} \gamma^t 	{{} \PKernelpihat}^{(t)}(\dy|x) \right)\PKernelhat^\pi(B|y)\\
&= (1-\gamma) \sum_{t=0}^{\infty} \gamma^t \int_y  {{} \PKernelpihat}^{(t)}(\dy|x) \PKernelhat^\pi(B|y)\\
&= (1-\gamma) \sum_{t=0}^{\infty} \gamma^t 	{{} \PKernelpihat}^{(t+1)}(B|x)\\
&= \frac{(1-\gamma)}{\gamma} \sum_{t=0}^{\infty} \gamma^{t+1}	{{} \PKernelpihat}^{(t+1)}(B|x)\\
&= \frac{1}{\gamma} (\etahatpi(B|x) - (1-\gamma) {{} \PKernelpihat}^{(0)}(B|x))\\
&\le \frac{1}{\gamma} \etahatpi(B|x),
\end{align*}
where the inequality is due to the non-negativity of ${{} \PKernelpihat}^{(0)} = \Id$.
\end{proof}


\newcommand{\DeltaPpi}{ {\Delta\PKernel^\pi }}

\subsection{Proofs for convergence of OS-VI for policy evaluation}
\label{sec:ProofsAndMoreTheory-OSVI-PE}

\textbf{Proof of Theorem~\ref{theorem:ErrorPropagation-SupAndLp-PE} for $\star = \infty$}

\begin{proof}
From Lemma~\ref{lemma:operator-stationary}, we have $\Spi V^\pi = V^\pi$. By definition $V_k = \Spi V_{k-1} + \errv{k}$. Using Lemma~\ref{lemma:spi-diff}, we get
\begin{align*}
\norm{\Vpi - V_{k}}_\infty 
&=  \norm{\Spi\Vpi - \Spi V_{k-1} - \errv{k}}_\infty\\
&= \norm{\Gpi (\Vpi - V_{k-1}) - \errv{k}}_\infty \\
&\le \norm{\Gpi}_\infty \norm{\Vpi - V_{k-1}}_\infty + \norm{\errv{k}}_\infty.
\end{align*}
Now, we have that
\begin{align}
\label{eq:OSVI-Proof-G-UpperBound}
\norm{\Gpi}_\infty = 
\norm{ (\Id - \gamma \PKernelhat^\pi)^{-1} \gamma (\PKernelpi - \PKernelhat^\pi) }_\infty
\leq
\frac{\gamma}{1 - \gamma} \norm{ \PKernelpi - \PKernelhat^\pi }_\infty,
\end{align}
where we used the fact that for a linear operator $F$ with $\norm{ F } < 1$, it holds that
$\norm{ (\Id - F)^{-1} } \leq \frac{1}{1 - \norm{F}}$ (when $F$ is a square matrix, this is Lemma 2.3.3 of~\citealt{GolubVanLoan2013}). As $\PKernelhat^\pi$ is a transition kernel, we can choose the supremum norm, which has the property that $\smallnorm{\PKernelhat^\pi}_\infty = 1$.

By defining $\gamma' = \frac{\gamma}{1 - \gamma} \norm{ \PKernelpi - \PKernelhat^\pi }_\infty$ and combining the above two inequalities, we get
\begin{align*}
\norm{\Vpi - V_{k}}_\infty 
\le
\gamma' 
\norm{\Vpi - V_{k-1}}_\infty + \norm{\errv{k}}_\infty.
\end{align*}
Expanding this recursive inequality gives
\begin{align*}
\norm{\Vpi - V_{k}}_\infty 
&\le
\gamma'^k 
\norm{\Vpi - V_{0}}_\infty + \sum_{i = 1}^{k} \gamma'^{k-i} \norm{\errv{i}}_\infty\\
&\le
\gamma'^k 
\norm{\Vpi - V_{0}}_\infty + \errvm \sum_{i = 1}^{k} \gamma'^{k-i} \\
& = 
\gamma'^k 
\norm{\Vpi - V_{0}}_\infty + \frac{1 - \gamma'^k}{1 - \gamma'} \errvm,\\
\end{align*}
which completes the proof.
\end{proof}

Before proving the $L_p$ norm result, we present a key lemma.
\begin{lemma}
	\label{lemma:Gpiv-bound}
	Let $\rho$ be an arbitrary distribution over state space. Assume that for any $x \in \XX$, $\etahatpi(\cdot|x) \ll \rho$, i.e., $\etahatpi(\cdot|x)$ is absolutely continuous w.r.t. $\rho$.
	For any policy $\pi$ and a function $v\colon \XX \ra \real$, we have
	\begin{align*}
		\norm{\Gpi v}_{4, \rho} \le \frac{\gamma}{1 - \gamma} \sqrt{ \hat C^\pi(\rho) \chi^2_{\rho}(\PKernel^\pi \;||\; \PKernelhat^\pi ) } \, \norm{v}_{4, \rho}.
	\end{align*}
\end{lemma}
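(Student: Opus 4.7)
The plan is to invoke Lemma~\ref{lemma:mpi-equals-etahat} to rewrite the resolvent $(\Id - \gamma \PKernelhat^\pi)^{-1}$ as $\frac{1}{1-\gamma}$ times the kernel $\etahatpi$, so that
\[
\Gpi v(x) \;=\; \frac{\gamma}{1-\gamma} \int \etahatpi(\dz \mid x)\, \Delta(z), \qquad \Delta(z) \defeq \int \bigl(\PKernelpi(\dy \mid z) - \PKernelhat^\pi(\dy \mid z)\bigr)\, v(y).
\]
The first workhorse is the pointwise Cauchy--Schwarz bound $\Delta(z)^2 \le \chi^2(\PKernelpi(\cdot\mid z)\;\|\;\PKernelhat^\pi(\cdot\mid z)) \cdot (\PKernelhat^\pi v^2)(z)$, obtained by writing $\Delta(z) = \int \PKernelhat^\pi(\dy\mid z) \bigl(\tfrac{\PKernelpi(\dy\mid z)}{\PKernelhat^\pi(\dy\mid z)} - 1\bigr) v(y)$ and applying Cauchy--Schwarz against $\PKernelhat^\pi(\cdot\mid z)$.

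Next I will raise $|\Gpi v(x)|$ to the fourth power and strip off the integral against the probability measure $\etahatpi(\cdot\mid x)$ with two rounds of Cauchy--Schwarz, combined with the Jensen bound $(\PKernelhat^\pi v^2)^2 \le \PKernelhat^\pi v^4$, so as to decouple the $\chi^2$-piece from the $v$-piece. The target pointwise estimate is
\[
|\Gpi v(x)|^{4} \;\le\; \Bigl(\tfrac{\gamma}{1-\gamma}\Bigr)^{4} \Bigl(\int \etahatpi(\dz \mid x)\,\chi(z)\Bigr)^{2} \cdot \int \etahatpi(\dz \mid x)\,(\PKernelhat^\pi v^{4})(z),
\]
where $\chi(z)$ abbreviates the pointwise $\chi^{2}$-divergence between $\PKernelpi(\cdot\mid z)$ and $\PKernelhat^\pi(\cdot\mid z)$. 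Introducing the Radon--Nikodym density $\phi_x \defeq \mathrm{d}\etahatpi(\cdot\mid x)/\mathrm{d}\rho$ and crudely using $\phi_x \le \norm{\phi_x}_\infty$, the first factor is at most $\norm{\phi_x}_\infty^{2}\,[\chi^{2}_{\rho}(\PKernelpi\;\|\;\PKernelhat^\pi)]^{2}$.

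For the second factor I will invoke Lemma~\ref{lemma:eta-p-bound}, which collapses the extra $\PKernelhat^\pi$-step into $\etahatpi$ at a cost of $\gamma^{-1}$:
$\int \etahatpi(\dz\mid x)\,(\PKernelhat^\pi v^{4})(z) \le \tfrac{1}{\gamma}\int \etahatpi(\dy\mid x)\,v(y)^{4} \le \tfrac{\norm{\phi_x}_\infty}{\gamma}\,\norm{v}_{4,\rho}^{4}$. Multiplying the two bounds produces a third power of $\norm{\phi_x}_\infty$; integrating against $\rho(\dx)$ then yields $\int \rho(\dx)\,\norm{\phi_x}_\infty^{3}$, which by definition equals $\gamma^{2}\,\hat C^\pi(\rho)^{2}$. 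The $\gamma^{-2}$ inside the definition of $\hat C^\pi(\rho)$ is precisely what absorbs the $\gamma^{-1}$ loss from Lemma~\ref{lemma:eta-p-bound}, and a fourth root then delivers the claimed inequality.

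The main bookkeeping obstacle is picking the Cauchy--Schwarz decompositions so that exactly three powers of $\norm{\phi_x}_\infty$ and two powers of $\chi^{2}_{\rho}$ survive, matching the definitions of $\hat C^\pi(\rho)$ and $\chi^{2}_{\rho}$; most alternative splits either push a pointwise $\chi^{2}$ inside an integral or give the wrong power of $\norm{\phi_x}_\infty$. Any slight slack in the $\gamma$ exponent that appears along the way (the derivation naturally produces $\gamma^{5/4}$ rather than $\gamma$ in the constant) is harmless since $\gamma\in[0,1)$.
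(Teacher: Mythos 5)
Your proposal is correct, and it follows the same overall strategy as the paper's proof (reduce the resolvent to $\etahatpi$ via Lemma~\ref{lemma:mpi-equals-etahat}, use Cauchy--Schwarz against $\PKernelhat^\pi(\cdot\mid z)$ to manufacture the pointwise $\chi^2$-divergence, collapse the extra $\PKernelhat^\pi$-step with Lemma~\ref{lemma:eta-p-bound}, and pay exactly three powers of $\smallnorm{\mathrm{d}\etahatpi(\cdot\mid x)/\drho}_\infty$ to form $\hat C^\pi(\rho)^2$), but the bookkeeping is organized differently. The paper performs a single Cauchy--Schwarz on the joint $(y,z)$-integral after inserting and removing $\sqrt{\rho(\dy)}$, so that the $\chi^2$ factor emerges already weighted by $\rho$; the cost of this trick is a factor $\etahatpi(\dy\mid x)^2/\rho(\dy)$ in the companion term, which is where one of the three powers of the sup-density comes from. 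You instead work pointwise in $x$, first obtaining an $\etahatpi(\cdot\mid x)$-weighted average of the pointwise $\chi^2$, and only then convert to the $\rho$-weighted $\chi^2_\rho$ by bounding the density by its supremum; the power count comes out identical ($2+1=3$). A minor dividend of your ordering is that Lemma~\ref{lemma:eta-p-bound} is invoked once outside a square rather than inside one, so you lose only $\gamma^{-1}$ where the paper loses $\gamma^{-2}$, and after the definition of $\hat C^\pi(\rho)$ reinstates $\gamma^{-2}$ you end with the constant $\gamma^{5/4}/(1-\gamma)$, which is at most the stated $\gamma/(1-\gamma)$ since $\gamma<1$ --- as you correctly note. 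One small presentational caveat: your displayed intermediate bound with the factor $\bigl(\int\etahatpi(\dz\mid x)\chi(z)\bigr)^2$ requires the first Cauchy--Schwarz to be applied to the split $\Delta=\chi^{1/2}\cdot(\Delta/\chi^{1/2})$ (with the convention $\Delta=0$ wherever $\chi=0$); applying Cauchy--Schwarz to $\int\etahatpi\,\chi\cdot(\PKernelhat^\pi v^2)$ directly would instead produce $\int\etahatpi\,\chi^2$, which is not what you want. With that reading, every step goes through.
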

\begin{proof}
Let $\DeltaPpi = \abs{\PKernelpi - \PKernelhat^\pi}$. Using Lemma~\ref{lemma:mpi-equals-etahat}, we expand $\norm{\Gpi v}_{4, \rho}^4$ 
\begin{align}
\notag
\norm{\Gpi v}_{4, \rho}^4
&= \int_x \rho(\dx) \left[\iint_{y,z} \frac{1}{1 - \gamma} \etahatpi(\dy | x)  \cdot \gamma (\PKernelpi(\dz|y) - \PKernelhat^\pi(\dz|y)) \cdot v(z) \right]^4\\ \notag
&\le \int_x \rho(\dx) \left[\iint_{y,z} \frac{1}{1 - \gamma} \etahatpi(\dy | x)  \cdot \gamma \DeltaPpi(\dz|y) \cdot \abs{v(z)} \right]^4\\ \notag
&= \frac{\gamma^4}{(1 - \gamma)^4} \int_x \rho(\dx) \left[\iint_{y,z}
\left(
\sqrt{\rho(\dy)} \cdot \frac{\DeltaPpi(\dz|y)}{\sqrt{\PKernelhat^\pi(\dz | y)}}
\right)
\left(
\frac{\etahatpi(\dy | x) \cdot \abs{v(z)}  \cdot \sqrt{\PKernelhat^\pi(\dz | y)}}{\sqrt{\rho(\dy)} }
\right)
  \right]^4\\ \notag
&\le
\frac{\gamma^4}{(1 - \gamma)^4} \int_x \rho(\dx) 
\left(\iint_{y,z}
\rho(\dy) \cdot \frac{\DeltaPpi(\dz|y)^2}{\PKernelhat^\pi(\dz | y)}
\right)^2
\cdot
\left(
\iint_{y,z}
\frac{\etahatpi(\dy | x)^2 \cdot v(z)^2  \cdot \PKernelhat^\pi(\dz | y)}{\rho(\dy)} 
\right)^2\\ \notag
&=
\frac{\gamma^4}{(1 - \gamma)^4} \cdot 
\chi^2_{\rho}(\PKernel^\pi \;||\; \PKernelhat^\pi )^2 \cdot
\int_x \rho(\dx) 
\left(
\iint_{y,z}
\frac{\etahatpi(\dy | x)^2 \cdot v(z)^2  \cdot \PKernelhat^\pi(\dz | y)}{\rho(\dy)} 
\right)^2\\ \notag
&=
\frac{\gamma^4}{(1 - \gamma)^4} \cdot 
\chi^2_{\rho}(\PKernel^\pi \;||\; \PKernelhat^\pi )^2 \cdot
\int_x \rho(\dx) 
\left[
\int_z
\left(
\sqrt{\rho(\dz)} \cdot v(z)^2
\right)
\cdot
\left(
\int_y
\frac{\etahatpi(\dy | x)^2 \cdot \PKernelhat^\pi(\dz | y)}{\sqrt{\rho(\dz)} \cdot \rho(\dy)}
\right)
\right]^2\\ \notag
&\le
\frac{\gamma^4}{(1 - \gamma)^4} \cdot 
\chi^2_{\rho}(\PKernel^\pi \;||\; \PKernelhat^\pi )^2 \cdot
\int_x \rho(\dx) 
\left[\int_z
\rho(\dz)  v(z)^4
\right]
\left[
\int_z
\left(
\int_y
\frac{\etahatpi(\dy | x)^2 \cdot \PKernelhat^\pi(\dz | y)}{\sqrt{\rho(\dz)} \cdot \rho(\dy)}
\right)^2
\right]\\ 
\label{eq:G-l4-bound1}
&=
\frac{\gamma^4}{(1 - \gamma)^4} \cdot 
\chi^2_{\rho}(\PKernel^\pi \;||\; \PKernelhat^\pi )^2 \cdot
\norm{v}_{4, \rho}^4 \cdot 
\iint_{x, z} \rho(\dx) 
\left(
\int_y
\frac{\etahatpi(\dy | x)^2 \cdot \PKernelhat^\pi(\dz | y)}{\sqrt{\rho(\dz)} \cdot \rho(\dy)}
\right)^2,
\end{align}
where \revised{}{the second and the third inequalities are from the Cauchy-Schwarz inequality.} We now write
\begin{align*}
&\iint_{x, z} \rho(\dx) 
\left(
\int_y
\frac{\etahatpi(\dy | x)^2 \cdot \PKernelhat^\pi(\dz | y)}{\sqrt{\rho(\dz)} \cdot \rho(\dy)}
\right)^2\\
&\qquad= \int_{x} \rho(\dx) \int_z \rho(\dz)
\left(
\int_y
\frac{\etahatpi(\dy | x)^2 \cdot \PKernelhat^\pi(\dz | y)}{\rho(\dz) \cdot \rho(\dy)}
\right)^2\\
&\qquad= \int_{x} \rho(\dx) \int_z \rho(\dz)
\left(
\int_y
\frac{\etahatpi(\dy | x)}{\rho(\dy)}
\cdot
\frac{\etahatpi(\dy | x) \cdot \PKernelhat^\pi(\dz | y)}{\rho(\dz)}
\right)^2\\
&\qquad\le \int_{x} \rho(\dx) \left(\max_y
\frac{\etahatpi(\dy | x)}{\rho(\dy)}\right)^2 \int_z \rho(\dz)
\cdot
\left(\int_y
\frac{\etahatpi(\dy | x) \cdot \PKernelhat^\pi(\dz | y)}{\rho(\dz)}
\right)^2.\\
\end{align*}
Using Lemma~\ref{lemma:eta-p-bound}, we can continue as
\begin{align*}
&\iint_{x, z} \rho(\dx) 
\left(
\int_y
\frac{\etahatpi(\dy | x)^2 \cdot \PKernelhat^\pi(\dz | y)}{\sqrt{\rho(\dz)} \cdot \rho(\dy)}
\right)^2\\
&\qquad\le \int_{x} \rho(\dx) \left(\max_y
\frac{\etahatpi(\dy | x)}{\rho(\dy)}\right)^2 \int_z \rho(\dz)
\cdot
\left(
\frac{\etahatpi(\dz | x)}{\gamma\rho(\dz)}
\right)^2\\
&\qquad= \frac{1}{\gamma^2} \int_{x} \rho(\dx) \left(\max_y
\frac{\etahatpi(\dy | x)}{\rho(\dy)}\right)^2 \int_z \etahatpi(\dz | x)
\cdot
\frac{\etahatpi(\dz | x)}{\rho(\dz)}\\
&\qquad\le \frac{1}{\gamma^2}\int_{x} \rho(\dx) \left(\max_y
\frac{\etahatpi(\dy | x)}{\rho(\dy)}\right)^2 \left(\max_z \frac{\etahatpi(\dz | x)}{\rho(\dz)} \right) \int_z \etahatpi(\dz | x)\\
&\qquad= \frac{1}{\gamma^2}\int_{x} \rho(\dx) \left(\max_y
\frac{\etahatpi(\dy | x)}{\rho(\dy)}\right)^3\\
&\qquad= \hat C^\pi(\rho)^2,
\end{align*}
where we used the fact that $\etahatpi(\cdot|z)$ is a probability distribution, so $\int_z \etahatpi(\dz | x) = 1$.
Substituting in \eqref{eq:G-l4-bound1} gives
\begin{align*}
\norm{\Gpi v}_{4, \rho}^4 \le \frac{\gamma^4}{(1 - \gamma)^4} \cdot 
\chi^2_{\rho}(\PKernel^\pi \;||\; \PKernelhat^\pi )^2 \cdot \hat C^\pi(\rho)^2 \cdot
\norm{v}_{4, \rho}^4,
\end{align*}
which concludes the proof.
\end{proof}

\textbf{Proof of Theorem~\ref{theorem:ErrorPropagation-SupAndLp-PE} for the $L_4(\rho)$ norm}

\begin{proof}
From Lemma~\ref{lemma:operator-stationary}, we have $\Spi V^\pi = V^\pi$. By definition $V_k = \Spi V_{k-1} + \errv{k}$. Using Lemma~\ref{lemma:spi-diff}, we get 
\begin{align*}
\norm{\Vpi - V_{k}}_{4, \rho}
&=  \norm{\Spi\Vpi - \Spi V_{k-1} - \errv{k}}_{4, \rho}\\
&= \norm{\Gpi (\Vpi - V_{k-1}) - \errv{k}}_{4, \rho} \\
&\le \norm{\Gpi (\Vpi - V_{k-1})}_{4, \rho}+ \norm{\errv{k}}_{4, \rho}\\
&\le \gamma' \norm{\Vpi - V_{k-1}}_{4, \rho}+ \norm{\errv{k}}_{4, \rho},
\end{align*}
where we used Lemma~\ref{lemma:Gpiv-bound} in the last step. Expanding this recursive inequality gives
\begin{align*}
\norm{\Vpi - V_{k}}_{4, \rho}
&\le
\gamma'^k 
\norm{\Vpi - V_{0}}_{4, \rho} + \sum_{i = 1}^{k} \gamma'^{k-i} \norm{\errv{k}}_{4, \rho}\\
&\le
\gamma'^k 
\norm{\Vpi - V_{0}}_{4, \rho}+ \errvm \sum_{i = 1}^{k} \gamma'^{k-i} \\
&\le
\gamma'^k 
\norm{\Vpi - V_{0}}_{4, \rho} + \frac{1 - \gamma'^k}{1 - \gamma'} \errvm, \\
\end{align*}
which completes the proof.
\end{proof}

\subsection{Proofs for convergence of OS-VI for Control}
\label{sec:ProofsAndMoreTheory-OSVI-Control}

We prove Theorem~\ref{theorem:ErrorPropagation-SupAndLp-Control} for the $L_\infty$ and $L_4(\rho)$ cases separately.

For the $L_\infty$ part, we break its proof into two lemmas.
\begin{lemma}
\label{lemma:vpik-to-vik-inf}
Assume that $k \ge 1$. Let $\gamma'$ be the effective discount factor defined in Theorem~\ref{theorem:ErrorPropagation-SupAndLp-Control} for the $\star = \infty$ case. Then,
\begin{align*}
\norm{V^{\pi_k} - V^*}_\infty \le \frac{2\gamma'}{1 - \gamma'}\norm{V_{k-1} - V^*}_\infty + \frac{1}{1 - \gamma'}  \norm{\errpi{k}}_\infty.
\end{align*}
\end{lemma}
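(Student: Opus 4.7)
The plan is to express $\Vopt - V^{\pi_k}$ as the solution of a fixed-point equation in $\pi_k$ and then bound the one-step residual using the closeness of $V_{k-1}$ to $\Vopt$ and the policy error $\errpi{k}$. By Lemma~\ref{lemma:operator-stationary}, $\Vopt = \Sopt\Vopt = S^{\pi^*}\Vopt$ and $V^{\pi_k} = S^{\pi_k}V^{\pi_k}$. I will split
\[\Vopt - V^{\pi_k} = (\Vopt - S^{\pi_k}\Vopt) + (S^{\pi_k}\Vopt - S^{\pi_k}V^{\pi_k})\]
and rewrite the second bracket as $G^{\pi_k}(\Vopt - V^{\pi_k})$ via Lemma~\ref{lemma:spi-diff}, yielding $\Vopt - V^{\pi_k} = (\Id - G^{\pi_k})^{-1}(\Vopt - S^{\pi_k}\Vopt)$. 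Because $\pi_k \in \Pi_k$, the same argument as in~\eqref{eq:OSVI-Proof-G-UpperBound} gives $\norm{G^{\pi_k}}_\infty \le \gamma' < 1$, and the Neumann bound (Lemma~2.3.3 of~\citet{GolubVanLoan2013}) then yields $\norm{(\Id - G^{\pi_k})^{-1}}_\infty \le (1-\gamma')^{-1}$.

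Next, I will bound the residual $\Vopt - S^{\pi_k}\Vopt = S^{\pi^*}\Vopt - S^{\pi_k}\Vopt$ by telescoping through $V_{k-1}$:
\[S^{\pi^*}\Vopt - S^{\pi_k}\Vopt = (G^{\pi^*} - G^{\pi_k})(\Vopt - V_{k-1}) + (S^{\pi^*}V_{k-1} - S^{\pi_k}V_{k-1}).\]
The first term has $L_\infty$ norm at most $2\gamma'\norm{V_{k-1} - \Vopt}_\infty$, using $\pi^*, \pi_k \in \Pi_k$ and the triangle inequality. For the second term I will use the definition $S^{\pi_k}V_{k-1} = \Sopt V_{k-1} + \errpi{k}$ together with the trivial $S^{\pi^*}V_{k-1} \le \Sopt V_{k-1}$ (from the definition of $\Sopt$) to deduce the componentwise inequality $S^{\pi^*}V_{k-1} - S^{\pi_k}V_{k-1} \le -\errpi{k}$, and therefore componentwise $\Vopt - S^{\pi_k}\Vopt \le (G^{\pi^*} - G^{\pi_k})(\Vopt - V_{k-1}) - \errpi{k}$.

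The final step is to observe that $\Vopt - S^{\pi_k}\Vopt = \Sopt\Vopt - S^{\pi_k}\Vopt \ge 0$ componentwise, since $\pi_k$ is one of the policies over which $\Sopt$ takes the max. This promotes the one-sided componentwise bound into an $L_\infty$ bound via the elementary fact that $0 \le a \le b$ componentwise implies $\norm{a}_\infty = \sup_x a(x) \le \sup_x b(x) \le \norm{b}_\infty$. Applying this and the triangle inequality on the right-hand side gives $\norm{\Vopt - S^{\pi_k}\Vopt}_\infty \le 2\gamma'\norm{V_{k-1} - \Vopt}_\infty + \norm{\errpi{k}}_\infty$, and multiplying by $(1-\gamma')^{-1}$ proves the lemma. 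The main obstacle is the term $S^{\pi^*}V_{k-1} - S^{\pi_k}V_{k-1}$: since $\pi^*$ is optimal for the \emph{original} MDP rather than the auxiliary one, it admits no direct $L_\infty$ bound in terms of $\errpi{k}$ alone; the non-negativity trick above is what allows the analysis to go through without requiring $\pi_V(V_{k-1}) \in \Pi_k$.
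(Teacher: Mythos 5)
Your proof is correct and follows essentially the same route as the paper's: the same telescoping through $V_{k-1}$, the same use of Lemma~\ref{lemma:spi-diff}, Lemma~\ref{lemma:sopt-spi-ineq} and the definition of $\errpi{k}$, and the same non-negativity trick to promote a one-sided componentwise bound to a supremum-norm bound. The only cosmetic difference is that you absorb the self-referential term $G^{\pi_k}(V^* - V^{\pi_k})$ up front via the Neumann bound on $(\Id - G^{\pi_k})^{-1}$ (applying monotonicity to $V^* - S^{\pi_k}V^* \vecge 0$), whereas the paper keeps it as $\gamma'\norm{V^* - V^{\pi_k}}_\infty$ on the right-hand side (applying monotonicity to $V^* - V^{\pi_k} \vecge 0$) and rearranges at the end; these are the same computation.
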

\begin{proof}
Let $\vecle$ be the componentwise inequality. Using Lemma~\ref{lemma:spi-diff} and Lemma~\ref{lemma:sopt-spi-ineq} and the definition of $\errpi{k}$, we have
\begin{align*}
	V^* - V^{\pi_k}
	&= S^{\pi^*}V^* - S^{\pi^*}V_{k-1} + S^{\pi^*}V_{k-1} - S^*V_{k-1} + S^*V_{k-1} - S^{\pi_k}V_{k-1} + S^{\pi_k}V_{k-1} - S^{\pi_k}V^{\pi_k}\\
	&\vecle G^{\pi^*} (V^* - V_{k-1}) - \errpi{k} + G^{\pi_k}(V_{k-1} - V^{\pi_k})\\
	&= G^{\pi^*} (V^* - V_{k-1}) - \errpi{k} + G^{\pi_k}(V_{k-1} - V^*) + G^{\pi_k}(V^* - V^{\pi_k})\\
	&= (G^{\pi^*} - G^{\pi_k}) (V^* - V_{k-1}) - \errpi{k} + G^{\pi_k}(V^* - V^{\pi_k}).
\end{align*}
	
Note that by definition $V^{*} - V^{\pi_k} \vecge 0$. Thus, we get
\begin{align*}
\norm{V^* - V^{\pi_k}}_\infty 
&\le \norm{(G^{\pi^*} - G^{\pi_k}) (V^* - V_{k-1}) - \errpi{k} + G^{\pi_k}(V^* - V^{\pi_k})}_\infty\\
&\le \left(\norm{G^{\pi^*}}_\infty + \norm{G^{\pi_k}}_\infty \right) \norm{V^* - V_{k-1}}_\infty + \norm{\errpi{k}}_\infty + \norm{G^{\pi_k}}_\infty \norm{V^* - V^{\pi_k}}_\infty\\
&\le 2\gamma' \norm{V^* - V_{k-1}}_\infty  +  \norm{\errpi{k}}_\infty  + \gamma' \norm{V^* - V^{\pi_k}}_\infty.
\end{align*}
After rearranging, we conclude
\begin{align*}
\norm{V^* - V^{\pi_k}}_\infty  \le
\frac{2\gamma'}{1 - \gamma'} \norm{V^* - V_{k-1}}_\infty + \frac{1}{1 - \gamma'}  \norm{\errpi{k}}_\infty.
\end{align*}
\end{proof}

\begin{lemma}
	\label{lemma:vik-to-vik-inf}
	Assume that $k \ge 1$. Let $\gamma'$ be the effective discount factor defined in  Theorem~\ref{theorem:ErrorPropagation-SupAndLp-Control} for the $\star = \infty$ case. Then for any $1 \le i \le k-1$ we have
	\begin{align*}
	\norm{V_i - V^*}_\infty \le \gamma' \norm{V_{i-1} - V^*}_\infty +   \norm{\errv{i}}_\infty.
	\end{align*}
\end{lemma}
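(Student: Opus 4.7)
The plan is to follow the same recipe as Lemma~\ref{lemma:vpik-to-vik-inf}, but now comparing $V_i$ with $V^*$ directly via the Varga optimality operator rather than via $V^{\pi_k}$. The approximate update is $V_i = \Sopt V_{i-1} + \errv{i}$, and by Lemma~\ref{lemma:operator-stationary} the true optimum satisfies $\Sopt V^* = V^*$, so the heart of the matter is to control $\Sopt V_{i-1} - \Sopt V^*$ in supremum norm by $\gamma'$ times $\norm{V_{i-1}-V^*}_\infty$.

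First I would sandwich $\Sopt V_{i-1} - \Sopt V^*$ between two linear expressions using the two policies that are optimal for the respective arguments. Writing $\pi_i = \pi_V(V_{i-1})$, so that $\Sopt V_{i-1} = S^{\pi_i} V_{i-1}$, and using $\Sopt V^* = S^{\pi^*} V^*$, Lemma~\ref{lemma:sopt-spi-ineq} gives pointwise
\begin{align*}
G^{\pi^*}(V_{i-1} - V^*) \;=\; S^{\pi^*} V_{i-1} - S^{\pi^*} V^* \;\vecle\; \Sopt V_{i-1} - \Sopt V^* \;\vecle\; S^{\pi_i} V_{i-1} - S^{\pi_i} V^* \;=\; G^{\pi_i}(V_{i-1} - V^*),
\end{align*}
where the two equalities use Lemma~\ref{lemma:spi-diff}. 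Taking absolute values pointwise and then the supremum gives
\begin{align*}
\norm{\Sopt V_{i-1} - \Sopt V^*}_\infty \le \max\!\Big(\norm{G^{\pi^*}}_\infty,\;\norm{G^{\pi_i}}_\infty\Big)\,\norm{V_{i-1} - V^*}_\infty.
\end{align*}

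Next I would use the operator-norm bound already derived in the proof of Theorem~\ref{theorem:ErrorPropagation-SupAndLp-PE} (equation \eqref{eq:OSVI-Proof-G-UpperBound}), namely $\norm{G^\pi}_\infty \le \frac{\gamma}{1-\gamma}\norm{\PKernelpi - \PKernelhat^\pi}_\infty$, valid for any policy $\pi$. Since $i \le k-1$ we have $\pi_i = \pi_V(V_{i-1}) \in \{\pi_V(V_{j-1}) : 1 \le j < k\} \subseteq \Pi_k$, and $\pi^*\in\Pi_k$ by definition, so both $\norm{G^{\pi^*}}_\infty$ and $\norm{G^{\pi_i}}_\infty$ are bounded by $\gamma'$.

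Finally I would conclude by the triangle inequality applied to $V_i - V^* = (\Sopt V_{i-1} - \Sopt V^*) + \errv{i}$, yielding
\begin{align*}
\norm{V_i - V^*}_\infty \le \norm{\Sopt V_{i-1} - \Sopt V^*}_\infty + \norm{\errv{i}}_\infty \le \gamma'\norm{V_{i-1} - V^*}_\infty + \norm{\errv{i}}_\infty,
\end{align*}
which is the claim. No step is technically difficult; the only subtlety is checking that the policies arising in the sandwich bound actually lie in $\Pi_k$, which is why the definition of $\Pi_k$ includes $\pi^*$ together with the full history $\{\pi_V(V_{j-1}) : 1 \le j < k\}$. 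With that bookkeeping in place the recursion closes and feeds into the telescoping argument that yields Theorem~\ref{theorem:ErrorPropagation-SupAndLp-Control}.
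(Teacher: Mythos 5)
Your proposal is correct and follows essentially the same route as the paper's own proof: the same sandwich of $\Sopt V_{i-1}-\Sopt V^*$ between $G^{\pi^*}(V_{i-1}-V^*)$ and $G^{\pi_V(V_{i-1})}(V_{i-1}-V^*)$ via Lemmas~\ref{lemma:spi-diff} and~\ref{lemma:sopt-spi-ineq}, the same componentwise absolute-value/max step, the same operator-norm bound~\eqref{eq:OSVI-Proof-G-UpperBound} with the observation that both policies lie in $\Pi_k$, and the same concluding triangle inequality using $\Sopt V^*=V^*$.
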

\begin{proof}
Let $\pi'_i = \pi_V(V_{i-1})$. We have by Lemma~\ref{lemma:spi-diff} and Lemma~\ref{lemma:sopt-spi-ineq}
\begin{align*}
&V^* - S^*V_{i-1} = S^*V^* - S^{\pi'_i}V^* + S^{\pi'_i}V^* - S^{\pi'_i}V_{i-1} \vecge G^{\pi'_i}(V^* - V_{i-1}),\\
&V^* - S^*V_{i-1} = S^{\pi^*}V^* - S^{\pi^*}V_{i-1} + S^{\pi^*}V_{i-1} - S^*V_{i-1} \vecle G^{\pi^*}(V^* - V_{i-1}),
\end{align*}
where we used $S^*V^* \vecge S^{\pi'_i}V^*$ and $ S^{\pi^*}V_{i-1} \vecle S^*V_{i-1}$.

Let $|\cdot|$ and $\max$ be componentwise functions. We get
\begin{align*}
\abs{V^{*} - S^*V_{i-1}} &\vecle \max\left(\abs{G^{\pi'_i}(V^* - V_{i-1})}, 
\abs{G^{\pi^*}(V^* - V_{i-1})} \right)\\
\Rightarrow \norm{V^{*} - S^*V_{i-1}}_\infty &\le
\max\left(
\norm{G^{\pi'_i}(V^* - V_{i-1})}_\infty, 
\norm{G^{\pi^*}(V^* - V_{i-1})}_\infty 
\right)\\
&\le 
\max\left(
\gamma' \norm{V^* - V_{i-1}}_\infty, 
\gamma' \norm{V^* - V_{i-1}}_\infty
\right)\\
&= \gamma' \norm{V^* - V_{i-1}}_\infty,
\end{align*}
where we used the upper bound on the norm of $G^{\pi'_i}$ and $G^{\pi^*}$~\eqref{eq:OSVI-Proof-G-UpperBound} and the definition of effective discount factor $\gamma'$.

Finally, we write
\begin{align*}
\norm{V_i - V^*}_\infty 
&\le \norm{V_i - S^*V_{i-1}}_\infty + \norm{S^*V_{i-1} - V^*}_\infty\\
&\le \norm{\errv{i}}_\infty + \gamma' \norm{V^* - V_{i-1}}_\infty, 
\end{align*}
as desired.
\end{proof}

\textbf{Proof of Theorem~\ref{theorem:ErrorPropagation-SupAndLp-Control} -- the $L_\infty$ case}

\begin{proof}
Expanding the recursive result of Lemma~\ref{lemma:vik-to-vik-inf}, we get
\begin{align*}
\norm{V_{k-1} - V^*}_\infty 
&\le \gamma'^{k-1}\norm{V_0 - V^*}_\infty + \sum_{i=1}^{k-1} \gamma'^{k-1-i} \norm{\errv{i}}_\infty\\
&\le \gamma'^{k-1}\norm{V_0 - V^*}_\infty + \errvm \sum_{i=1}^{k-1 }\gamma'^{k-1-i} \\
&= \gamma'^{k-1}\norm{V_0 - V^*}_\infty + \errvm \frac{1 - \gamma'^{k-1}}{1 - \gamma'}.
\end{align*}
Substituting this in Lemma~\ref{lemma:vpik-to-vik-inf}, we get
\begin{align*}
\norm{V^{\pi_k} - V^*}_\infty &\le \frac{2\gamma'}{1 - \gamma'} \left[
\gamma'^{k-1}\norm{V_0 - V^*}_\infty + \errvm  \frac{1 - \gamma'^{k-1}}{1 - \gamma'} \right]  + \frac{1}{1 - \gamma'}  \norm{\errpi{k}}_\infty\\
&= 
\frac{2\gamma'^k}{1 - \gamma'} \norm{V_0 - V^*}_\infty +
\frac{2\gamma' (1 - \gamma'^{k-1})}{(1 - \gamma')^2} \errvm +
\frac{1}{1 - \gamma'} \norm{\errpi{k}}_\infty.
\end{align*}

\end{proof}

We introduce similar lemmas for the proof of the $L_4(\rho)$ part of Theorem~\ref{theorem:ErrorPropagation-SupAndLp-Control}.

\begin{lemma}
	\label{lemma:vpik-to-vik-lp}
	Assume $k \ge 1$, and $\rho$ is a distribution over state space. Let $\gamma'$ be the effective discount factor defined in  in Theorem~\ref{theorem:ErrorPropagation-SupAndLp-Control} for the $\star = 4,\rho$ case. Then 
	\begin{align*}
	\normfrho{V^{\pi_k} - V^*} \le \frac{2\gamma'}{1 - \gamma'}\normfrho{V_{k-1} - V^*}+ \frac{1}{1 - \gamma'}  \normfrho{\errpi{k}}.
	\end{align*}
\end{lemma}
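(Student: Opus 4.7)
The plan is to adapt the $L_\infty$ argument of Lemma~\ref{lemma:vpik-to-vik-inf} to the weighted $L_4(\rho)$ norm, invoking Lemma~\ref{lemma:Gpiv-bound} wherever the supremum-norm proof used the operator-norm bound on $\Gpi$.

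First I would reproduce the telescoping decomposition $V^* - V^{\pi_k} = (S^{\pi^*}V^* - S^{\pi^*}V_{k-1}) + (S^{\pi^*}V_{k-1} - \Sopt V_{k-1}) + (\Sopt V_{k-1} - S^{\pi_k}V_{k-1}) + (S^{\pi_k}V_{k-1} - S^{\pi_k}V^{\pi_k})$. Lemma~\ref{lemma:spi-diff} converts the outermost pieces into $G^{\pi^*}(V^* - V_{k-1})$ and $G^{\pi_k}(V_{k-1} - V^{\pi_k})$; Lemma~\ref{lemma:sopt-spi-ineq} makes the second piece componentwise non-positive; the third piece equals $-\errpi{k}$ by definition of $\errpi{k}$. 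Splitting $G^{\pi_k}(V_{k-1} - V^{\pi_k}) = -G^{\pi_k}(V^* - V_{k-1}) + G^{\pi_k}(V^* - V^{\pi_k})$ and gathering yields the pointwise inequality $V^* - V^{\pi_k} \vecle (G^{\pi^*} - G^{\pi_k})(V^* - V_{k-1}) - \errpi{k} + G^{\pi_k}(V^* - V^{\pi_k})$.

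Next I would pass to the $L_4(\rho)$ norm. Because $V^{\pi_k} \vecle V^*$, the left side is componentwise non-negative and is therefore componentwise bounded by the absolute value of the right side; hence $\normfrho{V^* - V^{\pi_k}}$ is at most the $L_4(\rho)$ norm of the right side. Minkowski's inequality splits that norm into four terms, and applying Lemma~\ref{lemma:Gpiv-bound} to each of the three $\Gpi$-type terms (valid since $\pi^*, \pi_k \in \Pi_k$) gives $\normfrho{V^* - V^{\pi_k}} \leq (\alpha_{\pi^*} + \alpha_{\pi_k})\,\normfrho{V^* - V_{k-1}} + \normfrho{\errpi{k}} + \alpha_{\pi_k}\,\normfrho{V^* - V^{\pi_k}}$, where $\alpha_\pi \defeq \frac{\gamma}{1-\gamma}\sqrt{\hat C^\pi(\rho)\,\chi^2_{\rho}(\PKernel^\pi \;||\; \PKernelhat^\pi)}$.

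Finally I would invoke the definition of $\gamma'$ for the $\star = 4,\rho$ case in Theorem~\ref{theorem:ErrorPropagation-SupAndLp-Control}, which equals $2^{1/4}\max_{\pi \in \Pi_k}\alpha_\pi \geq \max_{\pi \in \Pi_k}\alpha_\pi$; hence $\alpha_{\pi^*} + \alpha_{\pi_k} \leq 2\gamma'$ and $\alpha_{\pi_k} \leq \gamma'$. Substituting and, under the standing assumption $\gamma' < 1$, rearranging yields the claimed bound. The one delicate point is the pointwise-to-norm step, which relies on the monotonicity $V^{\pi_k} \vecle V^*$; without it a componentwise upper bound on a signed quantity need not transfer to an $L_4(\rho)$ bound. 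The extra $\sqrt{2}$ baked into $\gamma'$ is slack here --- $\gamma' \geq \max_{\pi \in \Pi_k}\alpha_\pi$ is all this lemma actually needs --- and is carried only to stay consistent with the companion $L_4$ analog of Lemma~\ref{lemma:vik-to-vik-inf}, where passing a pointwise maximum of two $\Gpi$-applications into $\normfrho{\cdot}$ contributes the $2^{1/4}$ factor.
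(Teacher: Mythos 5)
Your proposal is correct and follows essentially the same route as the paper's proof: the identical telescoping decomposition, the same use of Lemmas~\ref{lemma:spi-diff}, \ref{lemma:sopt-spi-ineq}, and \ref{lemma:Gpiv-bound}, the same reliance on $V^{\pi_k} \vecle V^*$ to pass from the pointwise bound to the $L_4(\rho)$ norm, and the same rearrangement under $\gamma' < 1$. Your side observation that the $\sqrt{2}$ in $\gamma'$ is slack for this lemma and is only needed in the companion Lemma~\ref{lemma:vik-to-vik-lp} is also accurate.
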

\begin{proof}
	Let $\vecle$ be the componentwise inequality. Exactly similar to the proof of Lemma~\ref{lemma:vpik-to-vik-inf}, we have by Lemma~\ref{lemma:spi-diff} and Lemma~\ref{lemma:sopt-spi-ineq} that
	\begin{align*}
	V^* - V^{\pi_k}
	&= S^{\pi^*}V^* - S^{\pi^*}V_{k-1} + S^{\pi^*}V_{k-1} - S^*V_{k-1} + S^*V_{k-1} - S^{\pi_k}V_{k-1} + S^{\pi_k}V_{k-1} - S^{\pi_k}V^{\pi_k}\\
	&\vecle G^{\pi^*} (V^* - V_{k-1}) - \errpi{k} + G^{\pi_k}(V_{k-1} - V^{\pi_k})\\
	&= G^{\pi^*} (V^* - V_{k-1}) - \errpi{k} + G^{\pi_k}(V_{k-1} - V^*) + G^{\pi_k}(V^* - V^{\pi_k})\\
	&= (G^{\pi^*} - G^{\pi_k}) (V^* - V_{k-1}) - \errpi{k} + G^{\pi_k}(V^* - V^{\pi_k}).
	\end{align*}
	
	Note that by definition $V^{*} - V^{\pi_k} \vecge 0$. Thus, using Lemma~\ref{lemma:Gpiv-bound} we can write
	\begin{align*}
	\normfrho{V^* - V^{\pi_k}}
	&\le \normfrho{(G^{\pi^*} - G^{\pi_k}) (V^* - V_{k-1}) - \errpi{k} + G^{\pi_k}(V^* - V^{\pi_k})}\\
	&\le \normfrho{G^{\pi^*}(V^* - V_{k-1})} + \normfrho{G^{\pi_k} (V^* - V_{k-1})} + \normfrho{G^{\pi_k}(V^* - V^{\pi_k})} + \normfrho{\errpi{k}}\\
	&\le 2\gamma' \normfrho{V^* - V_{k-1}}  +  \normfrho{\errpi{k}}  + \gamma' \normfrho{V^* - V^{\pi_k}}.
	\end{align*}
	After rearranging, we conclude that
	\begin{align*}
	\normfrho{V^* - V^{\pi_k}}  \le
	\frac{2\gamma'}{1 - \gamma'} \normfrho{V^* - V_{k-1}} + \frac{1}{1 - \gamma'}  \normfrho{\errpi{k}}.
	\end{align*}
\end{proof}

\begin{lemma}
	\label{lemma:vik-to-vik-lp}
	Assume that $k \ge 1$. Let $\gamma'$ be the effective discount factor defined in  Theorem~\ref{theorem:ErrorPropagation-SupAndLp-Control} for the $\star = 4, \rho$ case. Then for any $1 \le i \le k-1$ we have
	\begin{align*}
	\normfrho{V_i - V^*} \le \gamma' \normfrho{V_{i-1} - V^*} +   \normfrho{\errv{i}}
	\end{align*}
\end{lemma}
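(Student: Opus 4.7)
The plan is to mirror the structure of Lemma~\ref{lemma:vik-to-vik-inf}, but adapt the key step that takes a componentwise max to the $L_4(\rho)$ setting, which is where the extra $\sqrt{2}$ factor inside $\gamma'$ comes from.

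First I would set $\pi'_i = \pi_V(V_{i-1})$ and derive the two componentwise sandwich inequalities
\baligns
V^* - \Sopt V_{i-1} &\vecge G^{\pi'_i}(V^* - V_{i-1}), \\
V^* - \Sopt V_{i-1} &\vecle G^{\pi^*}(V^* - V_{i-1}),
\ealigns
exactly as in the $L_\infty$ proof, using Lemma~\ref{lemma:spi-diff} together with $\Sopt V^* \vecge S^{\pi'_i} V^*$ and $S^{\pi^*} V_{i-1} \vecle \Sopt V_{i-1}$ from Lemma~\ref{lemma:sopt-spi-ineq}, plus $\Sopt V^* = V^*$ from Lemma~\ref{lemma:operator-stationary}. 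This gives the pointwise bound $\abs{V^* - \Sopt V_{i-1}} \vecle \max(\abs{G^{\pi'_i}(V^* - V_{i-1})},\, \abs{G^{\pi^*}(V^* - V_{i-1})})$.

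The main new obstacle is that $\norm{\cdot}_{4,\rho}$ does not pass through the componentwise max as cleanly as $\norm{\cdot}_\infty$ does. To handle it, I would use the elementary inequality $\max(\abs{a},\abs{b})^4 \le \abs{a}^4 + \abs{b}^4$, integrate against $\rho$, and take the fourth root, yielding
\baligns
\normfrho{V^* - \Sopt V_{i-1}}^4 \le \normfrho{G^{\pi'_i}(V^* - V_{i-1})}^4 + \normfrho{G^{\pi^*}(V^* - V_{i-1})}^4.
\ealigns
Now Lemma~\ref{lemma:Gpiv-bound} applied to each term, combined with the fact that $\pi'_i, \pi^* \in \Pi_k$ and the definition of $\gamma'$ (noting that $2^{1/4}\sqrt{\hat C^\pi(\rho)\,\chi^2_\rho(\PKernel^\pi\|\PKernelhat^\pi)} = \sqrt{\sqrt{2}\,\hat C^\pi(\rho)\,\chi^2_\rho(\PKernel^\pi\|\PKernelhat^\pi)}$), gives $\normfrho{V^* - \Sopt V_{i-1}} \le \gamma' \normfrho{V^* - V_{i-1}}$. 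This is exactly where the extra $\sqrt{2}$ inside the square root in the $L_4(\rho)$ definition of $\gamma'$ is absorbed.

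To conclude, I would apply the triangle inequality
\baligns
\normfrho{V_i - V^*} \le \normfrho{V_i - \Sopt V_{i-1}} + \normfrho{\Sopt V_{i-1} - V^*} \le \normfrho{\errv{i}} + \gamma' \normfrho{V_{i-1} - V^*},
\ealigns
using the definition $V_i = \Sopt V_{i-1} + \errv{i}$ from~\eqref{eq:AOS-VI-Control1}. The main obstacle is really the lossy step going from the pointwise max bound to the $L_4(\rho)$ bound; everything else is structurally identical to the $L_\infty$ argument.
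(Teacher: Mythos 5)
Your proposal is correct and follows essentially the same route as the paper's proof: the same sandwich decomposition via Lemmas~\ref{lemma:spi-diff} and~\ref{lemma:sopt-spi-ineq}, the same passage from the componentwise max to the $L_4(\rho)$ norm via $\max(\abs{a},\abs{b})^4 \le \abs{a}^4 + \abs{b}^4$, the same application of Lemma~\ref{lemma:Gpiv-bound} to absorb the factor of $2$ into the $\sqrt{2}$ in the definition of $\gamma'$, and the same concluding triangle inequality. The only cosmetic difference is that the paper bounds the sum of the two fourth powers by $2\max(\cdot,\cdot)$ before invoking the definition of $\gamma'$, whereas you keep the sum directly; these are equivalent.
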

\begin{proof}
	Let $\pi'_i = \pi_V(V_{i-1})$. We have by Lemma~\ref{lemma:spi-diff} and Lemma~\ref{lemma:sopt-spi-ineq}
	\begin{align*}
	&V^* - S^*V_{i-1} = S^*V^* - S^{\pi'_i}V^* + S^{\pi'_i}V^* - S^{\pi'_i}V_{i-1} \vecge G^{\pi'_i}(V^* - V_{i-1}),\\
	&V^* - S^*V_{i-1} = S^{\pi^*}V^* - S^{\pi^*}V_{i-1} + S^{\pi^*}V_{i-1} - S^*V_{i-1} \vecle G^{\pi^*}(V^* - V_{i-1}).
	\end{align*}
	Let $|\cdot|$ and $\max$ be componentwise functions. We get from Lemma~\ref{lemma:Gpiv-bound}
	\begin{align*}
	\abs{V^{*} - S^*V_{i-1}} &\vecle \max\left(\abs{G^{\pi'_i}(V^* - V_{i-1})}, 
	\abs{G^{\pi^*}(V^* - V_{i-1})} \right)\\
	\Rightarrow \normfrho{V^{*} - S^*V_{i-1}}^4 &\le
	\normfrho{G^{\pi'_i}(V^* - V_{i-1})}^4 + 
	\normfrho{G^{\pi^*}(V^* - V_{i-1})}^4 \\
	&\le 
	2\max\bigg(
	\frac{\gamma^4}{(1 - \gamma)^4} \hat C^{\pi_i'}(\rho)^2 \cdot \chi^2_{\rho}(\PKernel^{\pi_i'} \;||\; \PKernelhat^{\pi'_i} )^2, \\
	&\qquad\qquad
		\frac{\gamma^4}{(1 - \gamma)^4} \hat C^{\pi^*}(\rho)^2 \cdot \chi^2_{\rho}(\PKernel^{\pi^*} \;||\; \PKernelhat^{\pi^*} )^2
	\bigg) \normfrho{V^* - V_{i-1}}^4\\
	&\le \gamma'^4 \normfrho{V^* - V_{i-1}}^4.
	\end{align*}
	
	Finally we write
	\begin{align*}
	\normfrho{V_i - V^*} 
	&\le \normfrho{V_i - S^*V_{i-1}} + \normfrho{S^*V_{i-1} - V^*}\\
	&\le \normfrho{\errv{i}} + \gamma' \normfrho{V^* - V_{i-1}}.
	\end{align*}
\end{proof}

\textbf{Proof of Theorem~\ref{theorem:ErrorPropagation-SupAndLp-Control} -- $L_4(\rho$) case}

\begin{proof}
Using Lemma~\ref{lemma:vpik-to-vik-lp} and Lemma~\ref{lemma:vik-to-vik-lp}, the proof follows exactly like the proof of the $L_\infty$ case. Expanding the recursive result of Lemma~\ref{lemma:vik-to-vik-lp}, we get
\begin{align*}
\normfrho{V_{k-1} - V^*} 
&\le \gamma'^{k-1}\normfrho{V_0 - V^*} + \sum_{i=1}^{k-1} \gamma'^{k-1-i} \normfrho{\errv{i}}\\
&\le \gamma'^{k-1}\normfrho{V_0 - V^*} + \errvm \sum_{i=1}^{k-1 }\gamma'^{k-1-i} \\
&= \gamma'^{k-1}\normfrho{V_0 - V^*} + \errvm \cdot \frac{1 - \gamma'^{k-1}}{1 - \gamma'}.
\end{align*}
Substituting this in Lemma~\ref{lemma:vpik-to-vik-lp}, we get
\begin{align*}
\normfrho{V^{\pi_k} - V^*} &\le \frac{2\gamma'}{1 - \gamma'}\cdot \left[
\gamma'^{k-1}\normfrho{V_0 - V^*} + \errvm \cdot \frac{1 - \gamma'^{k-1}}{1 - \gamma'} \right]  + \frac{1}{1 - \gamma'}  \normfrho{\errpi{k}}\\
&= 
\frac{2\gamma'^k}{1 - \gamma'} \normfrho{V_0 - V^*} +
\frac{2\gamma' (1 - \gamma'^{k-1})}{(1 - \gamma')^2} \cdot \errvm +
\frac{1}{1 - \gamma'} \cdot \normfrho{\errpi{k}}.
\end{align*}
\end{proof}

\section{Additional experiments}
\label{sec:AdditionalExperiments}

In this section, we present further experiments on our algorithms. We consider multiple environments and settings. Appendix~\ref{sec:exp-envs} introduces the environments used. In the next subsections, three sets of experiments are presented.
\begin{enumerate}
    \item We further verify OS-VI's acceleration compared to VI in three environments for both PE and Control problems (Appendix~\ref{sec:exp-osvi-convergence}).
    \item We investigate the effect of model error on OS-VI. Two model error formulations are tested in all environments (Appendix~\ref{sec:exp-osvi-modelerror}).
    \item OS-Dyna is compared to Dyna and model-free algorithms using two learning schedules (Appendix~\ref{sec:exp-osdyna}).
\end{enumerate}

\subsection{Environments}
\label{sec:exp-envs}
We do experiments in three different environments. The details of them are as follows.

\paragraph{Modified Cliffwalk.}

We design a modified cliffwalk environment to better show the differences among the algorithms. The environment is a $6 \times 6$ gridworld shown in Figure~\ref{fig:env-cliff}. The starting state is the top-left corner. The agent receives reward of $20$ at the top-right corner, i.e., every action taken from this state gives reward of $r(x, a)=20$. There are three holes in the middle four cells of the first, third and the fifth row that if fallen into, the agent gets stuck and receives reward of $-32$, $-16$, and $-8$ on every step, respectively. There is a penalty of $-1$ in all other states to encourage finding the shortest route to the goal. The agent has four actions: UP, RIGHT, DOWN, and LEFT. Each action has $90\%$ chance to successfully move the agent in the chosen direction. With probability of $10\%$ one of the other three directions is randomly chosen and the agent moves in that direction. If the agent attempts to get out of the boundary of the environment, it will stay in place.

The discount factor of this environment is $\gamma = 0.9$. For policy evaluation experiments, we evaluate the optimal policy. The optimal policy is to take the safest route among (the two, or the only) closest paths to the right side of environment, as shown in Figure~\ref{fig:env-cliff}. Note that the smoothed models introduced in \eqref{eq:exp_model} will over-estimate the danger of cliffs and may take a suboptimal longer path to be safer. This will make the solution of inaccurate models suboptimal.

\begin{figure}[t]
    \centering
    \includegraphics[width=0.5\textwidth]{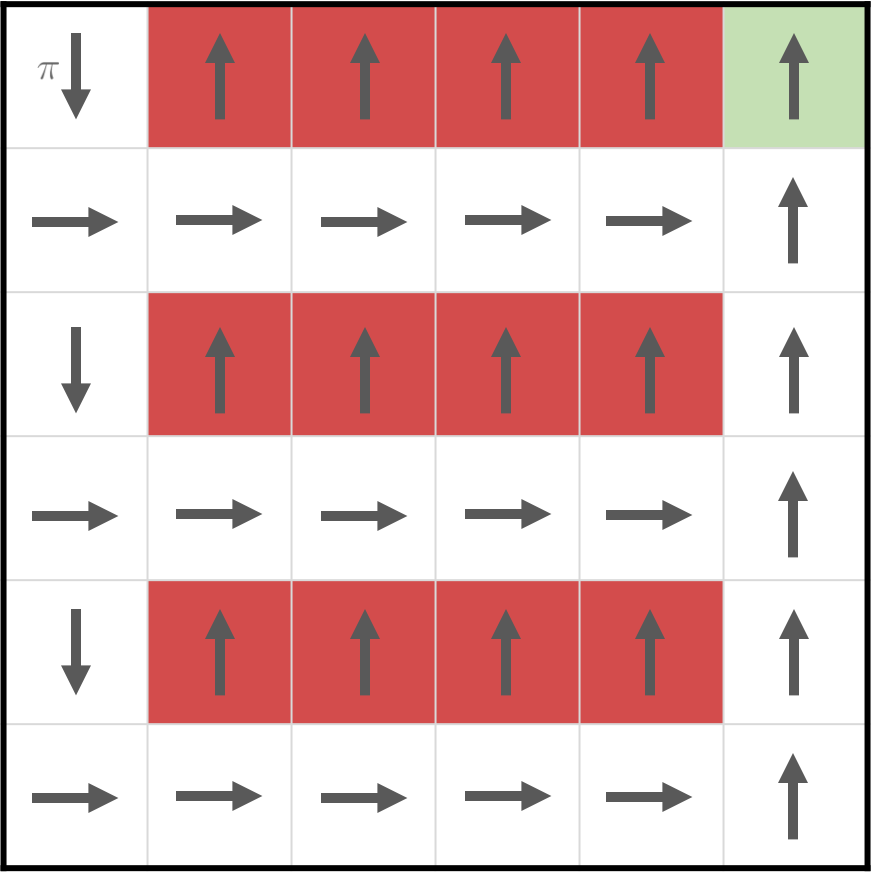}
    \caption{Modified Cliffwalk environment. The red cells in the first, third, and the third rows from the top are holes with reward of $-32$, $-16$, $-8$, respectively. The top-right corner is the goal state with reward of $20$. All other states have reward of $-1$. There is $10\%$ failure probability in actions. Arrows show the policy used for policy evaluation experiments.}
    \label{fig:env-cliff}
\end{figure}

\paragraph{Random MDP (Garnet).} We use the Garnet environment used by \cite{BhatnagarSuttonGhavamzadehLee2009,FarahmandGhavamzadeh2021}. The name is originally chosen as the acronym of \textit{Generic Average Reward Non-stationary Environment Testbed}. We use the same name, even though we implement it for discounted stationary MDPs similar to \citet{FarahmandGhavamzadeh2021}.

Our Garnet problem is parametrized by the tuple $(|\XX|, |\AA|, b_P, b_r)$. Here, $|\XX|$ and $|\AA|$ are the number of states and actions. The value $b_p$ is the branching factor of the environment, which is the number of possible next states for each state-action pair. When generating an instance, for each state-action pair, we randomly select $b_p$ states without replacement as the possible next states. Then, the transition distribution is generated by randomly choosing $b_p - 1$ points on the $(0, 1)$ interval. These points will partition the interval into $b_p$ parts, each corresponding to one of the transition probabilities. The reward function is only state-dependent. We select $b_r$ states without replacement, and for each selected state $x$, we assign $r(x)$ a uniformly sampled value in $(0, 1)$.

We generate \revised{}{100} randomly generated instances of the Garnet problem with $|\XX| = 50$, $|\AA| = 4$, $b_P = 3$, and $b_r = 5$. The discount factor is chosen as $\gamma = 0.99$. For policy evaluation experiments, we use the optimal policy of the instance. \revised{}{The plots for Garnet show the average values on the 100 problem instances along with a shaded area showing one standard error.}

\paragraph{Maze Environment.} This is a simple $3\times 3$ maze shown in Figure~\ref{fig:env-maze}. The top-left corner is the initial state, and the top-right corner is the goal state with reward of $1$. Similar to the modified clifffwalk, the agent has four actions: UP, RIGHT, DOWN, and LEFT. Each action has $90\%$ chance to successfully move the agent in the chosen direction. With probability of $10\%$ one of the other three directions is randomly chosen and the agent moves in that direction. If the agent attempts to get out of the boundary of the environment or hits a wall, it will stay in place. The discount factor is $\gamma=0.9$. We use the optimal policy shown by arrows in Figure~\ref{fig:env-maze} for policy evaluation experiments.

\begin{figure}[t]
    \centering
    \includegraphics[width=0.25\textwidth]{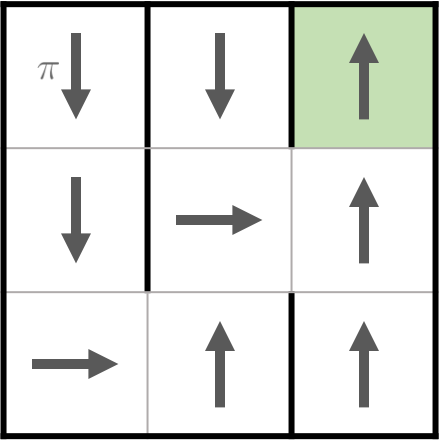}
    \caption{Maze environment. The top-right corner is the goal state with reward of $1$. There is $10\%$ failure probability in actions. Arrows show the policy used for policy evaluation experiments.}
    \label{fig:env-maze}
\end{figure}

\subsection{Convergence rate of OS-VI}
\label{sec:exp-osvi-convergence}
We empirically compare OS-VI with VI in all three environments. In this section and Appendix~\ref{sec:exp-osvi-modelerror}, we evaluate the normalized error of $V_k$, which is
\begin{equation}
\label{eq:normalized-error-PE}
    \frac{\norm{V_k - V^\pi}_1}{\norm{V^\pi}_1}
\end{equation}
for the policy evaluation problem, and
\begin{equation}
\label{eq:normalized-error-control}
    \frac{\norm{V_k - V^*}_1}{\norm{V^*}_1}
\end{equation}
for the Control problem. In the Control problem, while $V^{\pi_k}$ has qualitatively the same behaviour as $V_k$, it usually converges too fast due to the action-gap phenomenon~\citep{FarahmandNIPS2011}. Therefore, we consider the normalized error of $V_k$ instead of $V^{\pi_k}$ to better see the convergence behaviours.

Figure~\ref{fig:osvi-convergence} shows the results for both Policy Evaluation (top) and Control (bottom) problems. The dashed flat lines show the error values obtained by just using the model $\revised{}{\PKernelhat}$. These errors are relatively large and do not decrease. This shows that relying on an approximate model does not lead to accurate value function approximation or a good performance of the resulting policy.
VI does not have this issue as it queries the true $\PKernel$, and eventually converges to the true value function. Its convergence rate in terms of the number of queries to $\PKernel$, however, is significantly slower than OS-VI that can benefit from both $\PKernel$ and $\PKernelhat$. Even an approximate model with relatively large errors, characterized by $\lambda$, enjoys the accelerated convergence rate.


\begin{figure}[!h]
      \begin{subfigure}[b]{1\linewidth}
        \centering
        \includegraphics[width=1\textwidth]{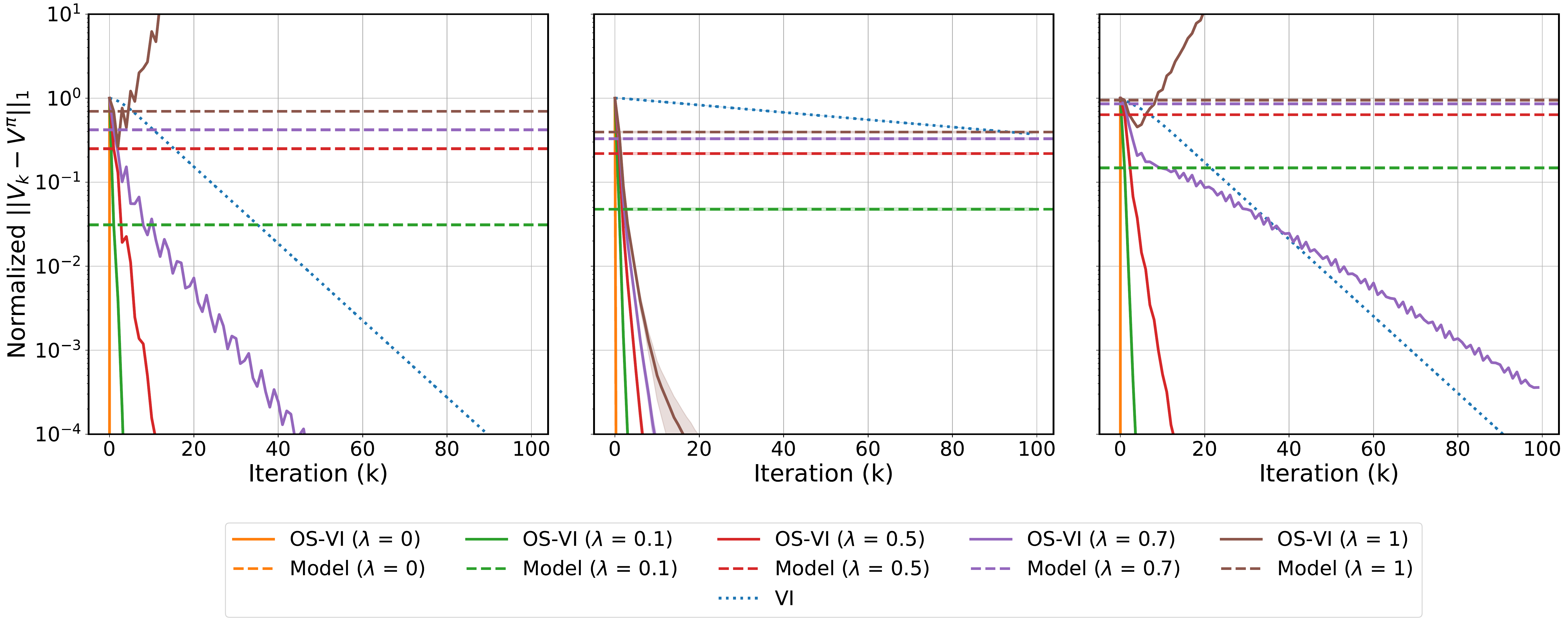}
        \caption{Policy Evaluation} 
        \label{fig:osvi-convergence-pe} 
      \end{subfigure}
     
      \begin{subfigure}[b]{1\linewidth}
        \centering
        \includegraphics[width=1\textwidth]{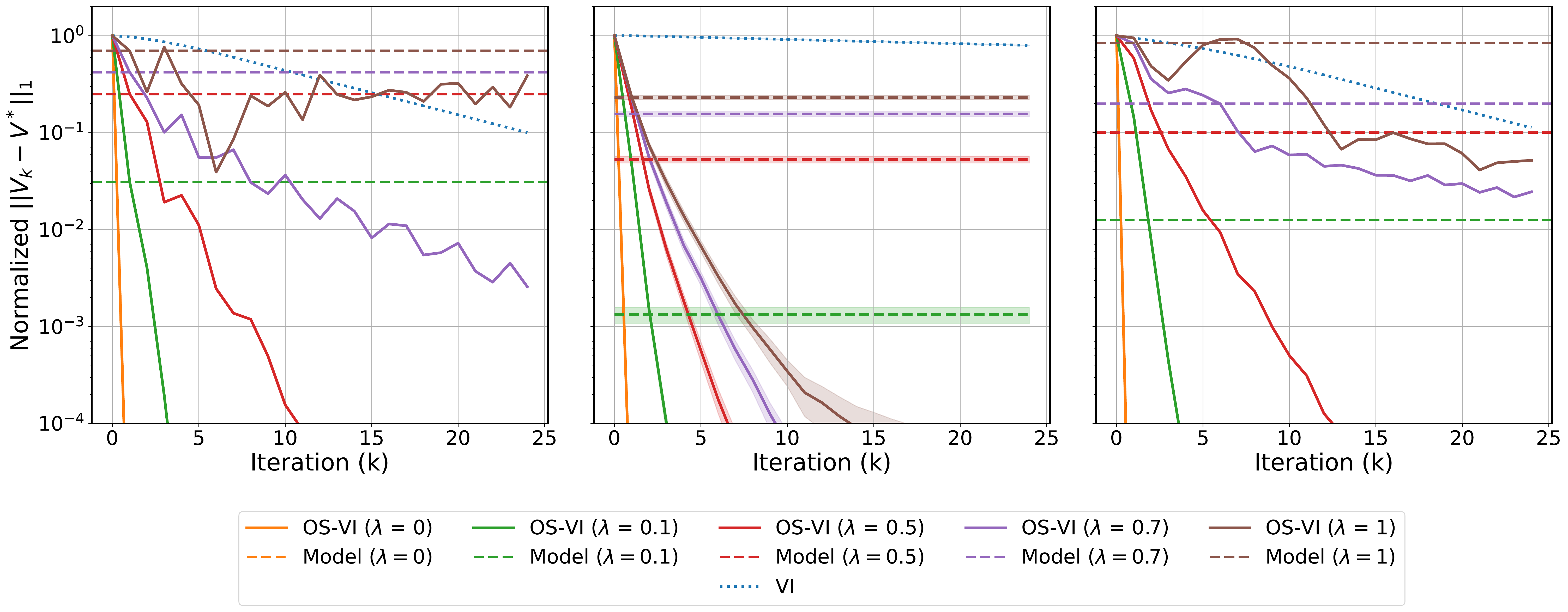}
        \caption{Control} 
        \label{fig1:fig:osvi-convergence-control} 
        \vspace{4ex}
      \end{subfigure} 
      \caption{Comparison of OS-VI with VI and the solution of the model, in the Policy Evaluation \textit{(a)} and the Control \textit{(b)} problems. The comparison is done in maze \textit{(left)}, Garnet \textit{(middle)}, and modified cliffwalk \textit{(right)} environments. \revised{}{Garnet plots are average of 100 instances. The shaded area is one standard error.} }
      \label{fig:osvi-convergence} 
\end{figure}

\subsection{Effect of model error on OS-VI}
\label{sec:exp-osvi-modelerror}
Our theoretical results (Theorems~\ref{theorem:ErrorPropagation-SupAndLp-PE} and~\ref{theorem:ErrorPropagation-SupAndLp-Control}) show that the convergence rate of OS-VI is affected by the accuracy of $\PKernelhat$, as its accuracy determines the effective discount factor $\gamma'$. In this section, we further investigate this effect. To do so, we run OS-VI with smoothed models obtained by a range of smoothing parameters. The smoothed models are defined in \eqref{eq:exp_model}. A higher smoothing parameter $\lambda$ will make the transition distributions more uniform and less accurate.

The normalized errors of OS-VI in several initial iterations ($k \in \{1, 3, 5, 7, 9\}$) of OS-VI are plotted against the smoothing parameter $\lambda$ in Figure~\ref{fig:osvi-modelerror}.
The difference among the iterations shows the convergence rate of OS-VI. For larger model errors, characterized by larger $\lambda$, the gap between errors in different iterations become smaller with a large absolute error. This means that the method has not progressed much towards the correct solution. In very large values of $\lambda$, close to one, there are cases that the order of iterations has changed, and later iterations have larger errors than earlier ones. This is when the value function is diverging.

To better show the divergence scenario of OS-VI, we introduce a new way of introducing model errors. The \emph{self-loop perturbed model} is defined as
\begin{equation}\label{eq:exp_selfloop_error}
\hat{\mathcal{P}}(\cdot | x, a; \mathcal{P}, \lambda) =  (1 - \lambda) \mathcal{P}( \cdot | x,a) +  \lambda I(\cdot | x),
\end{equation} 
where $I(\cdot | x)$ is the distribution of deterministically staying in state $x$. Larger values of $\lambda$ push the model transitions towards an MDP where no transitions occur. Similar to smoothing, larger values of $\lambda$ lead to more inaccurate models. The effect of this new model error is shown in Figure~\ref{fig:osvi-selfloop-modelerror}. We see that a clear divergence happens for large values of $\lambda$, around $0.5$--$0.6$. The error of OS-VI increases with each iteration.

\begin{figure}[!h]
      \begin{subfigure}[b]{1\linewidth}
        \centering
        \includegraphics[width=1\textwidth]{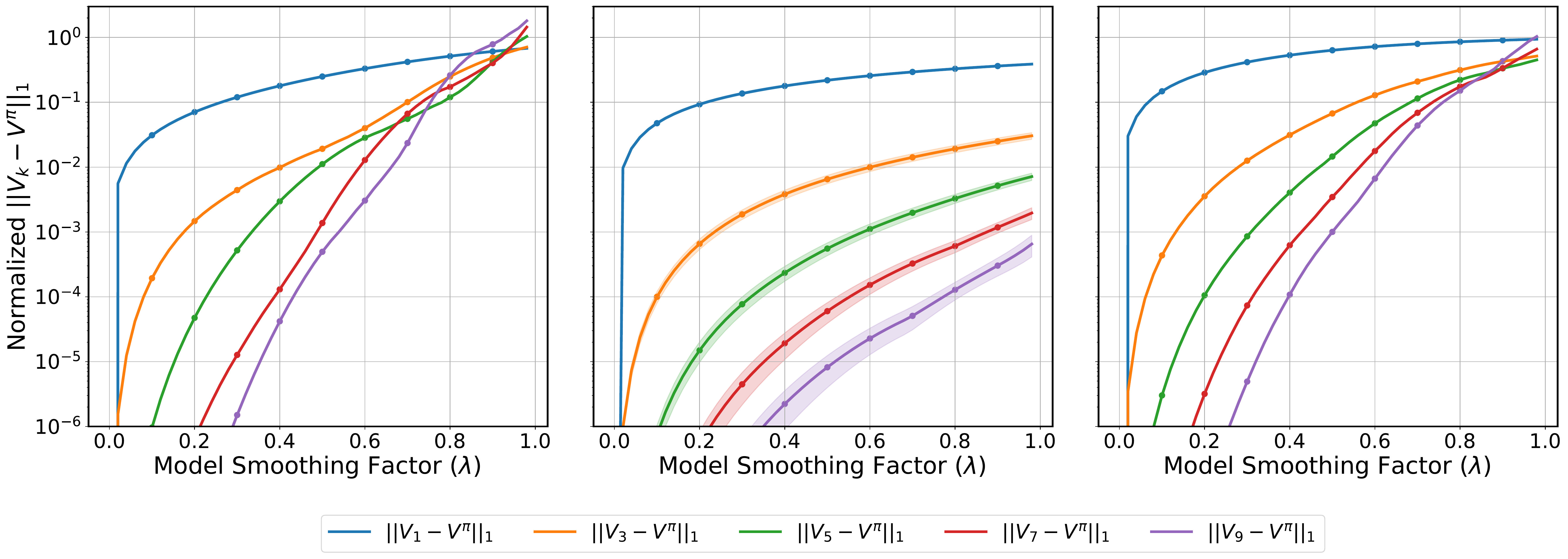}
        \caption{Policy Evaluation} 
        \label{fig:osvi-modelerror-pe} 
      \end{subfigure}
     
      \begin{subfigure}[b]{1\linewidth}
        \centering
        \includegraphics[width=1\textwidth]{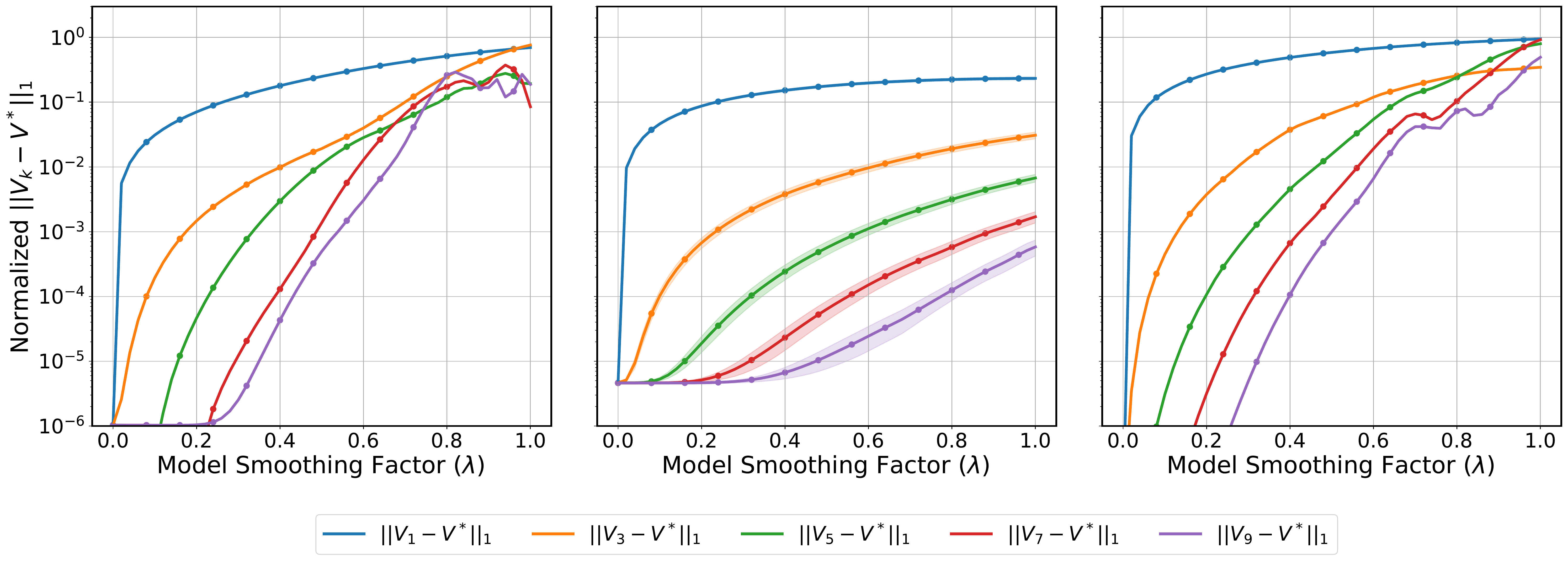}
        \caption{Control} 
        \label{fig1:fig:osvi-modelerror-control} 
        \vspace{4ex}
      \end{subfigure} 
      \caption{Effect of smoothing on convergence of OS-VI in Policy Evaluation \textit{(a)} and Control \textit{(b)} problems. Models are obtained by smoothing as in \eqref{eq:exp_model}. The comparison is done in maze \textit{(left)}, Garnet \textit{(middle)}, and modified cliffwalk \textit{(right)} environments. \revised{}{Garnet plots are average of 100 instances. The shaded area is one standard error.} }
      \label{fig:osvi-modelerror} 
\end{figure}

\begin{figure}[!h]
      \begin{subfigure}[b]{1\linewidth}
        \centering
        \includegraphics[width=1\textwidth]{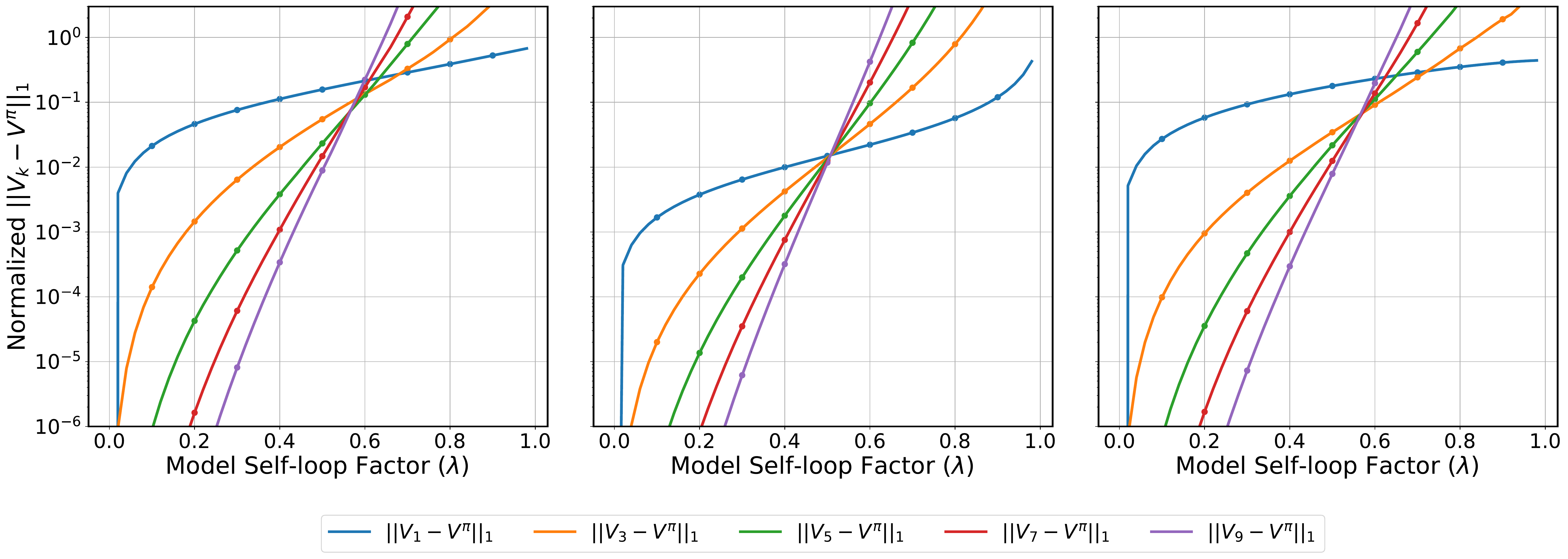}
        \caption{Policy Evaluation} 
        \label{fig:osvi-selfloop-modelerror-pe} 
      \end{subfigure}
     
      \begin{subfigure}[b]{1\linewidth}
        \centering
        \includegraphics[width=1\textwidth]{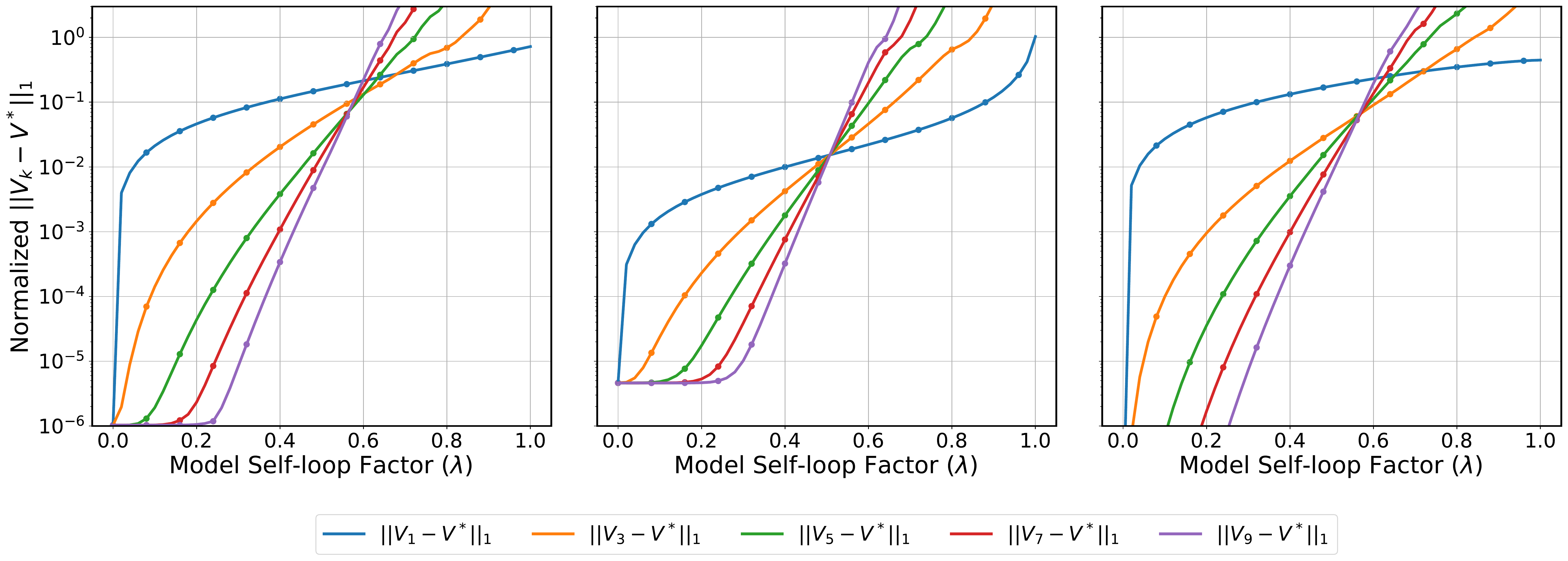}
        \caption{Control} 
        \label{fig1:fig:osvi-selfloop-modelerror-control} 
        \vspace{4ex}
      \end{subfigure} 
      \caption{Effect of self-loop error on convergence of OS-VI in the Policy Evaluation \textit{(a)} and the Control \textit{(b)} problems. Models are obtained by self-loop perturbation as in \eqref{eq:exp_selfloop_error}. The comparison is done in maze \textit{(left)}, Garnet \textit{(middle)}, and modified cliffwalk \textit{(right)} environments. \revised{}{Garnet plots are average of 100 instances. The shaded area is one standard error.}}
      \label{fig:osvi-selfloop-modelerror} 
\end{figure}

\subsection{Additional Experiments on OS-Dyna}
\label{sec:exp-osdyna}
In this section we compare OS-Dyna with Dyna and model-free algorithms. We focus on the modified cliffwalk environment and smoothed MLE models defined in Section~\ref{sec:Experiments}. 

In the implementation of OS-Dyna, $V_k$ is calculated from $\rbar_k$ and $\PKernelhat$ through exact dynamic programming to reduce the noise. Specifically, we find the optimal value function $V^*(\PKernelhat, \rbar_k)$ and the value function $V^\pi(\PKernelhat, \rbar_k)$ by performing VI on MDP $(\XX, \AA, \PKernelhat, \rbar_k)$. The same is true for Dyna. The value function is updated to the exact solution of the model on every iteration. 

For policy evaluation, we compare to TD-Learning, which updates the value function given each with each sample $(X_t, \pi(X_t), R_t, X_t')$ in the following way:
\begin{align}
    V(X_t) \leftarrow V(X_t) + \alpha_t \left(R_t + \gamma \cdot V(X_t') - V(X_t)
    \right).
\end{align}
Here, $\alpha_t$ is the learning rate at step $t$. We use constant and rescaled linear \citep{Wainwright2019StochasticAW} learning rate schedules in PE experiments. The rescaled linear schedule sets $\alpha_t = \frac{\alpha}{1 + (1 - u) \cdot t}$. We fine tune the learning rate schedule for each algorithm independently such that $0.1$ normalized error \eqref{eq:normalized-error-PE} is achieved as fast as possible. In constant learning rate, the value of $\alpha$ is $0.2$ for TD-Learning and $0.05$ for all OS-Dyna instances. In rescaled linear schedule, $\alpha, u = 1, 0.999$ for TD-Learning, and $\alpha, u = 0.8, 0.995$ for OS-Dyna.

The results for PE are shown in Figure~\ref{fig:osdyna-pe}. OS-Dyna converges faster than TD-Learning in both learning rate schedules. Also note that unlike OS-Dyna, Dyna does not converge to true values in presence of model error. It is worth mentioning that Dyna without model error is the best one can do in policy evaluation problem without any additional assumptions on the environment. Thus, it is expected to outperform all other algorithms.

\begin{figure}[!h]
\centering
\includegraphics[width=1\textwidth]{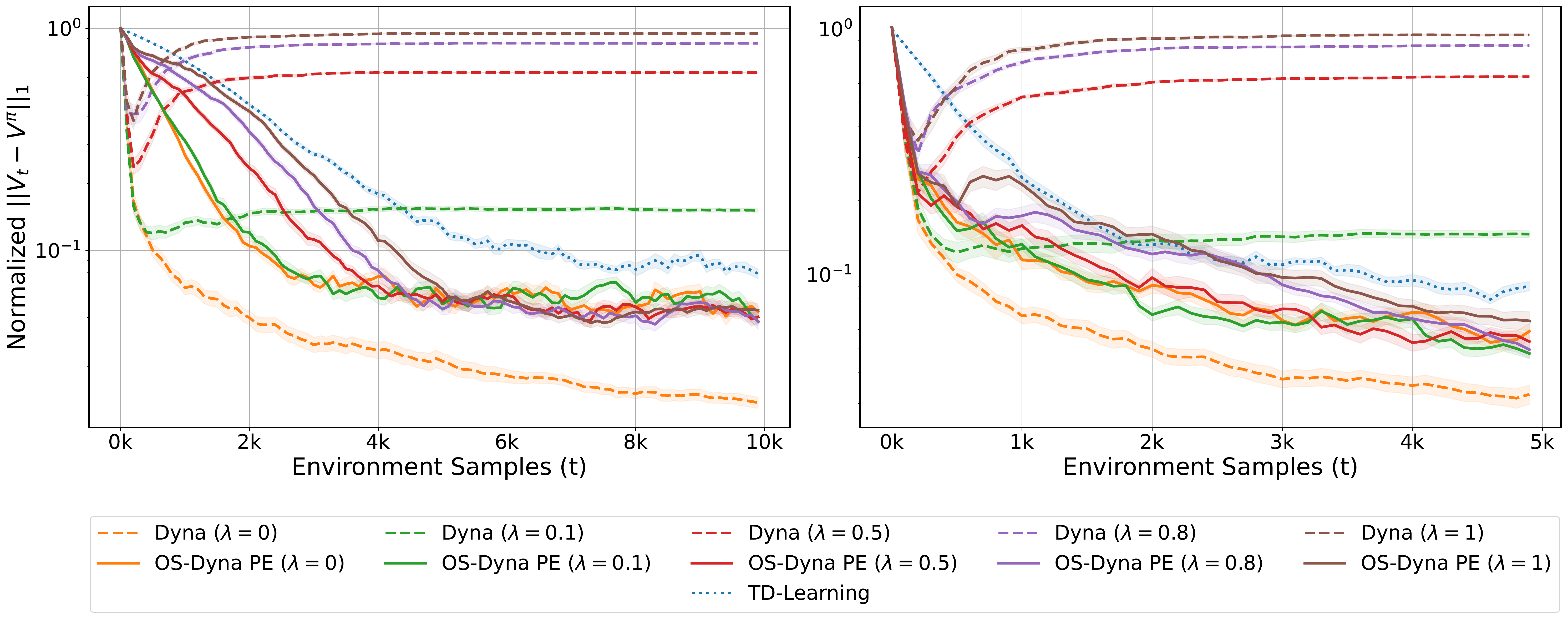}
\caption[short]{Comparison of OS-Dyna with Dyna and TD-learning in the Policy Evaluation problem using constant \textit{(Left) and rescaled linear \textit{(Right)} learning rates. This is average over 20 runs. The shaded area is one standard error.}}
\label{fig:osdyna-pe}
\end{figure}

In Control, OS-Dyna is compared with Dyna and Q-Learning using the delayed decay \citep{SuttonBarto2018} and rescaled linear \citep{Wainwright2019StochasticAW} learning rate schedules. The delayed decay sets $\alpha_t = \alpha$ for $t\le N$ and $\alpha_t = \alpha / (t-N)$ otherwise. The learning rates for each algorithm is fine tuned to achieve the optimal policy as fast as possible and stay stable on it. Figure~\ref{fig:osdyna-control} depicts the expected return $V^{\pi_t}(0)$ of the policy $\pi_t$ obtained by different algorithms (higher is better).

In delayed decay learning rate, we have $\alpha, N = 0.02, 68000$ for Q-Learning. For instances of OS-Dyna we have
\begin{itemize}
    \item OS-Dyna $(\lambda = 0)$: $\alpha, N = 0.02, 30000$
    \item OS-Dyna $(\lambda = 0.1)$: $\alpha, N = 0.02, 35000$
    \item OS-Dyna $(\lambda = 0.5)$: $\alpha, N = 0.02, 50000$
    \revised{}{
    \item OS-Dyna $(\lambda = 0.8)$: $\alpha, N = 0.02, 48000$
    \item OS-Dyna $(\lambda = 1)$: $\alpha, N = 0.02, 80000$
    }
\end{itemize}

In rescaled linear learning rate, we have $\alpha, u = 0.1, 0.9999$ for Q-Learning. For instances of OS-Dyna we have
\begin{itemize}
    \item OS-Dyna $(\lambda = 0)$: $\alpha, u = 1, 0.9$
    \item OS-Dyna $(\lambda = 0.1)$: $\alpha, u = 1, 0.9$
    \item OS-Dyna $(\lambda = 0.5)$: $\alpha, u = 1, 0.9995$
    \revised{}{
    \item OS-Dyna $(\lambda = 0.8)$: $\alpha, u = 1, 0.9995$
    \item OS-Dyna $(\lambda = 1)$: $\alpha, u = 1, 0.9995$
    }
\end{itemize}

\begin{figure}[tb]
\centering
\includegraphics[width=1\textwidth]{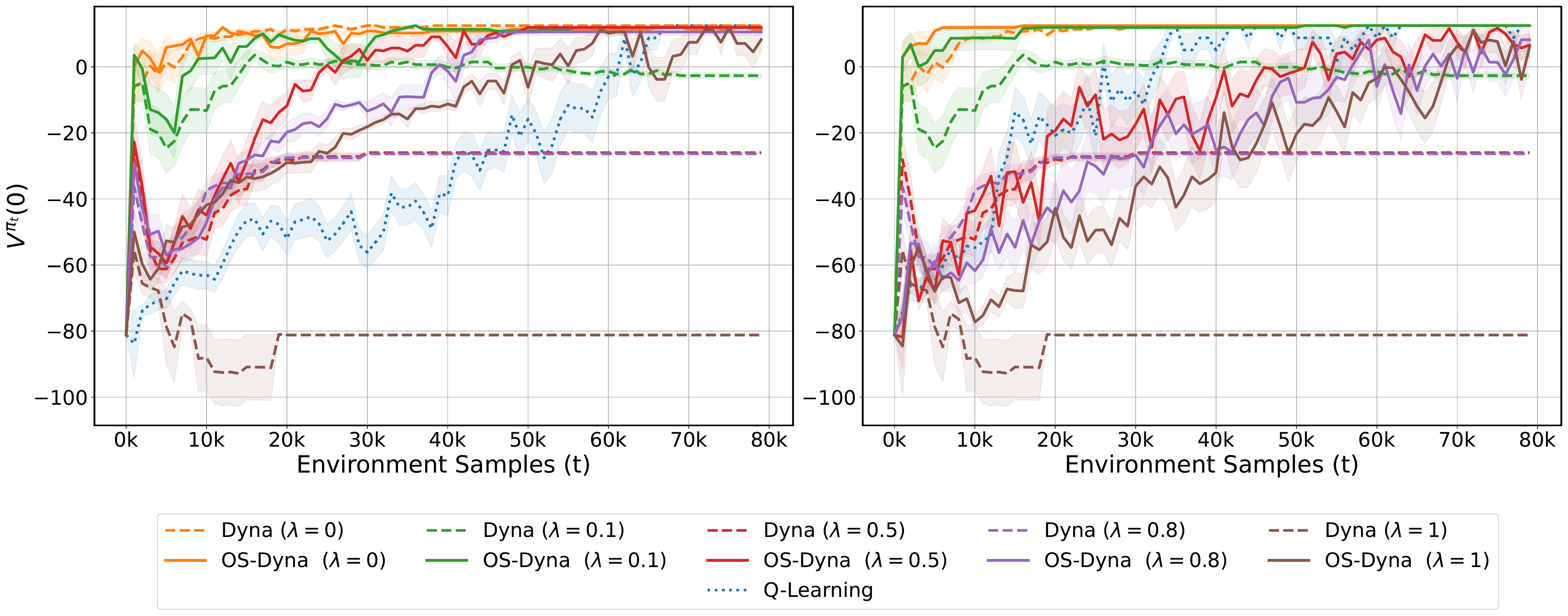}
\caption[short]{Comparison of OS-Dyna with Dyna and Q-learning in the Control problem using delayed decay \textit{(Left)} and rescaled linear \textit{(Right)} learning rates. This is average over 20 runs. The shaded area is one standard error.}
\label{fig:osdyna-control}
\end{figure}

\newpage
\section{Extended related work}
\label{sec:RelatedWork-Extended}

In this section, we provide a comparative analysis of the convergence behaviour of OS-VI (Appendix~\ref{sec:RelatedWork-Extended-Convergence-Comparison}).
We also point to some work where the  Jacobi and Gauss-Seidel iterations or some form of matrix splitting have been studied in the context of dynamic programming (Appendix~\ref{sec:RelatedWork-Extended-OtherMS-Method}).

\subsection{Comparison of convergence behaviours of OS-VI, value iteration, policy iteration, and modified policy iteration}
\label{sec:RelatedWork-Extended-Convergence-Comparison}


We briefly compared the convergence behaviour of OS-VI with the convergence of VI in Section~\ref{sec:Theory} after stating Theorem~\ref{theorem:ErrorPropagation-SupAndLp-Control}. Here, we expand that discussion, and include comparison with PI and MPI as well.
We consider three aspects:
\begin{enumerate}
	\item \emph{(Model Error)} How does the model error affect the convergence limit?
	\item \emph{(Transient Error)} How fast the initial error in the approximation of value function diminishes as the iteration number $k$ grows?
	\item \emph{(Approximation Error Amplification)} How are the errors at each step of these algorithms amplified and are affecting the outcome policy?
\end{enumerate}

Let us discuss the \emph{Model Error} first, as it is a crucial difference between OS-VI and other methods such as VI, PI, and MPI.
Suppose that we do not know $\PKernel$, but only have access to $\PKernelhat \neq \PKernel$, and we use the approximate model with VI~\eqref{eq:VI} (that is, we perform
$V_k \leftarrow \max_\pi \{ \rpi + \gamma \PKernelhat V_{k-1} \}$), or PI or MPI, as shall be recalled soon.
These methods then converge to the optimal value function/policy w.r.t. the dynamics $\PKernelhat$. Let us denote the optimal value function w.r.t. $\PKernelhat$ by $\Vhat^*$, and the optimal policy $\pihat^*$.
These are, in general, different from the optimal value function $\Vopt$ and policy $\piopt$ w.r.t. the dynamics $\PKernel$ -- they are biased.

If we execute $\pihat^*$ in the true environment with dynamics $\PKernel$, its performance will be lower than the performance of the optimal policy $\piopt$, measured according to their corresponding value functions.
The performance can be upper bounded as follows~\citep[Theorem 7]{AvilaPiresSzepesvari2016}:
\begin{align*}
	\norm{ V^\piopt - V^{\pihat^*} }_\infty \leq
	\frac{2 \gamma}{1 - \gamma} \norm{ (\PKernel - \PKernelhat) \Vhat^* }_\infty
	\leq
	\frac{2 \gamma \Vmax}{1 - \gamma} \norm{ \PKernel - \PKernelhat }_\infty.
\end{align*}
For the methods that only rely on the approximate model $\PKernelhat$, this performance deterioration is in general inevitable. This is the essence of sim2real problem.\todo{cite!}

OS-VI does not have this issue. By using both $\PKernelhat$ and $\PKernel$, it brings the potential benefit of querying an approximate model (which is supposedly computationally cheaper), while guarding against converging to a biased solution. Of course, this is under the condition that it converges, which is the case if the model is accurate enough. Note that OS-VI uses more information (both $\PKernelhat$ and $\PKernel$) than VI, PI, or MPI, which only use either $\PKernel$ or $\PKernelhat$ -- they cannot benefit from both.

Since OS-VI needs access to $\PKernel$, one may wonder if it is beneficial to use OS-VI after all, as opposed to using VI, PI, or MPI with the true model $\PKernel$. The answer to this question depends on  the convergence rate of these algorithms. A faster algorithm requires fewer queries to the true model $\PKernel$.
The rest of this subsection is dedicated to studying their convergence rates, focusing on the effect of transient error and the approximation error amplification.

To set the stage, let us introduce the approximate models of VI, PI, and MPI. These should be compared with~\eqref{eq:AOS-VI-PE} and~\eqref{eq:AOS-VI-Control1}-\eqref{eq:AOS-VI-Control2} in Section~\ref{sec:Theory}.

Recall that VI iteratively applies the Bellman operator $T$ to the previous value function $V_{k-1}$ in order to obtain the new approximation $V_k$ of the value function, cf.~\eqref{eq:VI}. As discussed for OS-VI in the beginning of Section~\ref{sec:Theory-PE}, there might be an error in each step, which we formalize by considering that an error function $\errv{k}$ is added to the operation of the exact VI:
\begin{align*}
	V_{k} =
		\begin{cases}
			\Tpi V_{k-1} + \errv{k} , & \text{(Policy Evaluation)} \\
			\Topt V_{k-1} + \errv{k}. & \text{(Control)}
		\end{cases}
\end{align*}
When $\eps_k^\text{value} = 0$, we get the exact VI~\eqref{eq:VI}.

At each iteration of PI, we first compute the greedy policy $\pi_k \leftarrow \pigreedy(V_{k-1})$ and then perform PE in order to compute $V^{\pi_k}$. In approximate PI, we might have error at computing the greedy policy or computing its value function. These errors are modelled as
\begin{align}
\label{eq:API-PolicyImprovement}
	& T^{\pi_k} V_{k-1} = \Topt V_{k-1}  + \errpi{k},   \, \qquad \qquad \qquad \quad \text{(policy improvement)} \\
\label{eq:API-PE}	
	& V_{k} = V^{\pi_k} + \errv{k} \; [= (\Id - \PKernel^{\pi_{k}})^{-1} r^{\pi_{k}} + \errv{k} ]. \,\text{(policy evaluation)}
\end{align}

The Modified PI is similar to PI with the difference that instead of aiming to compute $V^{\pi_k}$ at each step exactly (ignoring the $\errv{k}$ term for the moment), it only partially moves towards it by applying $T^{\pi_k}$ for $m \geq 1$ times. That is, $V_{k} \leftarrow (T^{\pi_{k}})^m V_{k-1}$. When $m \ra \infty$, by the contraction property of the Bellman operator, $V_k \ra V^{\pi_k}$. This is exactly the same as PI. When $m = 1$, it is the same as VI.
The approximate MPI is modelled as
\begin{align*}
	& T^{\pi_k} V_{k-1} = \Topt V_{k-1}  + \errpi{k},   \quad \text{(policy improvement)}. \\
	& V_{k} = (T^{\pi_{k}})^m V_{k-1} + \errv{k}. \qquad \text{(partial policy evaluation)}.
\end{align*}

To simplify the comparison, we focus on the supremum norm-based analysis for each of these methods.
Some of the existing results are not exactly in the form that we need. For example, they take $k \ra \infty$, which loses the information about the transient error. 

\noindent \textbf{Convergence of value iteration.}
We consider VI (PE) and VI (Control) separately.
For VI (PE), we derive the bound as follows:
\begin{align*}
	\Vpi - V_k & =
	\Tpi \Vpi - (\Tpi V_{k-1} + \errv{k}) =
	\gamma \PKernelpi (\Vpi - V_{k-1} ) + \errv{k}
	= \cdots \\
	& =
	\sum_{i = 0}^{k-1} (\gamma \PKernelpi)^i \errv{k - i} +
	(\gamma \PKernelpi)^k (\Vpi - V_0).
\end{align*}

Assume that $\smallnorm{\errv{i}}_\infty \le \errvm$ for all $i = 1, \dotsc, k$. We then have
\begin{align}
\label{eq:AVI-ErrorBound-PE}
	\norm{\Vpi - V_k}_\infty \leq 
	\frac{1 - \gamma^{k}}{1 - \gamma} \errvm + 
	\gamma^k \norm{\Vpi - V_0}_\infty.
\end{align}

This upper bound shows the effect of transient error and the approximation error at each iteration.
The transient error decays with the rate of $O(\gamma^k)$. This can go to zero quite slowly when the discount factor is close to one.
The approximation errors $\errv{i}$ of the approximate VI procedure, upper bounded by $\errvm$,  are amplified by a factor of $(1 - \gamma)^{-1}$. Asymptotically, we have $\frac{\errvm}{1 - \gamma}$ behaviour.\todo{We have a forgetting behaviour too, which this bound doesn't capture. -AMF}
\todo{Do we have any experiment with $\gamma = 0.999$? -AMF}

This result should be compared with Theorem~\ref{theorem:ErrorPropagation-SupAndLp-PE} with the choice of $\star = \infty$, which shows that OS-VI (PE) behaves as
\begin{align}
\label{eq:OSVI-ErrorBound-PE}
	\norm{\Vpi - V_{k} }_{\infty} \leq 
	\frac{1 - \gamma'^k}{1 - \gamma'} \errvm + 
	\gamma'^k \norm{\Vpi - V_0 }_{\infty},
\end{align}
with $\gamma' = \frac{\gamma}{1 - \gamma} \smallnorm{ \PKernelpi - \PKernelhat^\pi }_\infty$.
When the model is accurate enough ($\smallnorm{ \PKernelpi - \PKernelhat^\pi }_\infty < 1 - \gamma$), the effective discount factor $\gamma'$ is smaller than the discount factor $\gamma$ of the original MDP. Consequently, the transient error of OS-VI can decay significantly faster than VI's. Moreover, the error amplification of $\errvm$ is by a factor of $(1 - \gamma')^{-1}$, which is smaller than that of approximate VI under the same condition on model accuracy.

We also have a similar result for VI (Control).
\revised{}{
We follow the proof of Equation~(2.2) of~\citet{Munos07} to get that for the greedy policy $\pi_k \leftarrow \pigreedy(V_{k-1})$, we have
\begin{align}
\label{eq:AVI-VktoVpik}
	\norm{\Vopt - V^{\pi_k} }_\infty \leq
	\frac{2\gamma}{1 - \gamma} \norm{\Vopt - V_{k-1}}_\infty.
\end{align}
To upper bound $\smallnorm{\Vopt - V_{k-1}}_\infty$, we add and subtract $\Topt V_{k-2}$ to $\Vopt - V_{k-1}$, and benefit from $\Vopt = \Topt \Vopt$ and the triangle inequality to get
\begin{align*}
	\norm{\Vopt - V_{k-1}}_\infty & \leq 
	\norm{\Topt \Vopt - \Topt V_{k-2}}_\infty + \norm{\Topt V_{k-2} - V_{k-1}}
	\\ &
	\leq
	\gamma \norm{\Vopt - V_{k-2}}_\infty + \norm{\Topt V_{k-2} - V_{k-1}}
	\\ &
	= \gamma \norm{\Vopt - V_{k-2}}_\infty + \norm{\errv{k-1}}_\infty.
\end{align*}
Repeating this argument, we obtain
\begin{align*}
	\norm{\Vopt - V_{k-1}}_\infty \leq 
	\sum_{i=0}^{k-1} \gamma^i \norm{\errv{k - i} }_\infty 
	+
	\gamma^k \norm{\Vopt - V_0}_\infty.
\end{align*}

Plugging this inequality in~\eqref{eq:AVI-VktoVpik} and using the same assumption that $\smallnorm{\errv{i}}_\infty \leq \errvm$ for all $i = 1, 2, \dotsc$ lead to
}
\begin{align}
\label{eq:AVI-ErrorBound-Control}
	\norm{\Vopt - V^{\pi_k} }_\infty \leq
	\frac{2 \gamma}{1 - \gamma}
	\left[
		\frac{1 - \gamma^{k-1}}{1 - \gamma} \errvm +
		\gamma^{k-1} \norm{\Vopt - V_0}_\infty
	\right].
\end{align}

The transient behaviour is $O(\gamma^k)$, as in VI (PE). The amplification of the approximation errors is by a factor of $(1 - \gamma)^{-2}$.
The result for VI (Control) should be compared with Theorem~\ref{theorem:ErrorPropagation-SupAndLp-Control} with $\star = \infty$, which is
\begin{align}
\label{eq:OSVI-ErrorBound-Control}
	\norm{\Vopt - V^{\pi_k}}_{\infty} \leq
	\frac{2\gamma'^k}{1 - \gamma'} \norm{V_0 - V^*}_\infty +
	\frac{2\gamma' (1 - \gamma'^{k-1})}{(1 - \gamma')^2} \,  \errvm +
	\frac{1}{1 - \gamma'} \, \norm{\errpi{k}}_\infty.
\end{align}

As in the OS-VI (PE) case, the transient behaviour is $O(\gamma'^k)$, which can be much faster than VI's whenever the approximate model is accurate enough.
The error amplification is $(1 - \gamma')^{-2}$, which is smaller under the same condition.
We have an extra $\smallnorm{\errpi{k}}_\infty$ term, which is the possible error in the computation of the $S$-improved policy.
The parallel for VI would be the error in the computation of the greedy policy. In the VI model considered above, we did not consider such a source of error. 

\noindent \textbf{Convergence of policy iteration.}
Considering that $\errpi{i} = 0$ in~\eqref{eq:API-PolicyImprovement}, we use Lemma~4 of~\citet{Munos03}, which states that
\begin{align*}
	\Vopt - V^{\pi_k} \vecle
	\gamma \PKernel^\piopt (\Vopt - V^{\pi_{k-1}} ) +
	\gamma \left[ \PKernel^{\pi_{k}} (\Id - \gamma \PKernel^{\pi_k})^{-1} (\Id - \gamma \PKernel^{\pi_{k-1}}) - \PKernel^\piopt
			\right] (V_{k-1} - V^{\pi_{k-1}}).
\end{align*}
Noticing that $0 \vecle \Vopt - V^{\pi_k}$, by taking the absolute values of both sides, and using Jensen's inequality, we get that
\begin{align*}
	\left| \Vopt - V^{\pi_k} \right| \vecle &
	\gamma \PKernel^\piopt \left| \Vopt - V^{\pi_{k-1}} \right| + {} \\
	&
	\gamma \left[ \PKernel^{\pi_{k}} (\Id - \gamma \PKernel^{\pi_k})^{-1}
	+ \gamma \PKernel^{\pi_{k}} (\Id - \gamma \PKernel^{\pi_k})^{-1} \PKernel^{\pi_{k-1}} + \PKernel^\piopt
			\right] \left| V_{k-1} - V^{\pi_{k-1}} \right|.
\end{align*}
Taking the supremum of both sides over the state space, and benefitting from $\norm{\PKernel^\pi}_\infty = 1$ and that $\norm{(\Id - \gamma \PKernel^{\pi})^{-1}}_\infty \leq \frac{1}{1 - \gamma}$ (for any $\pi$), we obtain
\begin{align*}
	\norm{\Vopt - V^{\pi_k}}_\infty 
	\leq
	\gamma \norm{\Vopt - V^{\pi_{k-1}}}_\infty + \frac{2 \gamma}{1 - \gamma} \norm{\errv{k-1}}_\infty.
\end{align*}
Expanding this, we get
\begin{align*}
	\norm{\Vopt - V^{\pi_k}}_\infty 
	\leq
	\frac{2\gamma}{1 - \gamma}
	\sum_{i=0}^{k - 1} \gamma^i  \norm{\errv{k - 1 -i}}_\infty
	+
	\gamma^{k-1} \norm{\Vopt - V^{\pi_0}}_\infty.
\end{align*}
%
Assuming that $\smallnorm{\errv{i}}_\infty \le \errvm$ for all $i = 1, \dotsc, k-1$, we get
\begin{align}
\label{eq:API-ErrorBound}
	\norm{\Vopt - V^{\pi_k}}_\infty 
	\leq
	\frac{2\gamma (1 - \gamma^{k}) }{(1 - \gamma)^2} \errvm +
	\gamma^{k-1} \norm{ \Vopt - V^{\pi_0} }_\infty.
\end{align}
%

This shows that approximate PI has the transient behaviour of $O(\gamma^k)$, and it amplifies the PE error $\errv{}$ by a factor of $(1 - \gamma)^{-2}$.
This is the same as VI, and the comparison with OS-VI is exactly the same: whenever the model error is small enough, approximate OS-VI benefits from the approximate model $\PKernelhat$ and improves both the transient error rate and the error amplification.

The results of~\citet{Munos03} does not consider the possibility of $\errpi{i}$ being non-zero. For that, we report the asymptotic result of Proposition 6.2~\citet{Bertsekas96}, which states that
\begin{align}
\label{eq:API-ErrorBound-AsymptoticOnly}
	\limsup_{k \ra \infty} \norm{\Vopt - V^{\pi_k}}_\infty 
	\leq
	\frac{2 \gamma \errvm + \errpim}{(1 - \gamma)^2},
\end{align}
in which $\smallnorm{\errpi{i}}_\infty \leq \errpim$ for all $i \geq 1$.

\noindent \textbf{Convergence of modified policy iteration.}
Lemma 4 of~\citet{ScherrerGhavamzadehGabillonetal2015} leads to
\begin{align}
\label{eq:AMPI-ErrorBound}
	\norm{\Vopt - V^{\pi_k}}_\infty \leq
	\frac{2 \gamma (1 - \gamma^{k-1}) \errvm + (1 - \gamma^k) \errpim }{(1 - \gamma)^2} +
	\frac{2 \gamma^k}{1 - \gamma} \norm{\Vopt - V_0}_\infty.
\end{align}

The transient behaviour is $O(\gamma^k)$, and the error amplification is $(1 - \gamma)^{-2}$ for both PE error $\errvm$ and greedification error $\errpim$.
The comparison with OS-VI is as before, and shows that OS-VI can improve the convergence rate of the transient error as well as reducing the error amplification effect, if the model is accurate enough.

\revised{}{
All these error bounds are summarized in Table~\ref{tab:ErrorBoundsComparison} for ease of comparison. For the error amplification terms, we only consider the asymptotic behaviour by letting $k \ra \infty$ to simplify the presentation.
}

\begin{table}[t]
  \caption{\revised{}{The transient and error amplification effects on $\smallnorm{\Vopt - V^{\pi_k}}_\infty$ for various method}}
  \label{tab:ErrorBoundsComparison}
  \centering
  \begin{tabular}{l l l}
    \toprule
    Method     & Transient Error     & Error Amplification \\
    \midrule
VI (PE)~\eqref{eq:AVI-ErrorBound-PE} 
	& $\gamma^k \norm{\Vpi - V_0}_\infty$ 
	& $\frac{ \errvm }{1 - \gamma}   $
	\\
VI (Control)~\eqref{eq:AVI-ErrorBound-Control}
		& $\frac{2 \gamma^{k}}{1 - \gamma} \norm{\Vopt - V_0}_\infty$
		& $ \frac{2 \gamma \errvm }{(1 - \gamma)^2}   $
	\\
PI~\eqref{eq:API-ErrorBound}-\eqref{eq:API-ErrorBound-AsymptoticOnly}
 		& $\gamma^{k-1} \norm{ \Vopt - V^{\pi_0} }_\infty$
		& $\frac{2\gamma \errvm + \errpim}{(1 - \gamma)^2} $
	\\
MPI~\eqref{eq:AMPI-ErrorBound} 
	& $\frac{2 \gamma^k}{1 - \gamma} \norm{\Vopt - V_0}_\infty$
	& $\frac{2 \gamma \errvm + \errpim }{(1 - \gamma)^2}$
	\\
OS-VI (PE)~\eqref{eq:OSVI-ErrorBound-PE} 
	& $\gamma'^k \norm{\Vpi - V_0 }_{\infty}$
	& $\frac{\errvm}{1 - \gamma'} $
	\\
OS-VI (Control)~\eqref{eq:OSVI-ErrorBound-Control}
	& $\frac{2\gamma'^k}{1 - \gamma'} \norm{\Vopt - V_0}_\infty$
	& $\frac{2\gamma' \errvm }{(1 - \gamma')^2} +
	\frac{\smallnorm{\errpi{k}}_\infty}{1 - \gamma'}$
	\\
    \bottomrule
  \end{tabular}
\end{table}


\todo{There are some papers here that can be cited. I am commenting them out for now. -AMF}
\subsection{Matrix splitting, Jacobi, and Gauss-Seidel iterations for dynamic programming}
\label{sec:RelatedWork-Extended-OtherMS-Method}

\citet{KushnerKleinman1971} is one of the earliest paper we could find that mentions the Jacobi and Gauss-Seidel procedures for computing the value function. The focus of that work, however, is to  propose accelerated variants of the Jacobi and Gauss-Seidel procedures through an over-relaxation procedure (cf. Section 3.1 of~\citealt{Varga2000}).

\citet{BaconPrecup2016} provide a matrix splitting perspective on planning with options. Their use of planning does not refer to the problem of Control (finding the optimal policy), but refers to the PE problem given a set of options that are consistent with the policy that is evaluated.
They show that the computation of the value function using a given set of options can be interpreted as a particular choice of matrix splitting. The splitting depends on the dynamics, intra-option policies, the policy over options, and the termination probability of options. They show that decreasing the probability of termination, which corresponds to longer execution of options, leads to faster convergence of the planning.
Although this is one of a few work that makes the connection between a dynamic programming-based approach and matrix splitting in numerical linear algebra explicit, it is fundamentally different from ours.
They use matrix splitting to shed light on what planning with option does, but do not suggest a new algorithm. Their studied algorithm (VI-like procedures using options) does not benefit from the existence of an approximate $\PKernelhat$ to accelerate. The source of acceleration is the multi-step behaviour of an option.
On a more detailed note, the matrix splitting in their work is of the \emph{regular splitting} type, which has nice properties but is not suitable for the analysis of the splitting in this work. 

The connection between multi-step models and matrix splitting is further developed in Chapter 4 of \citet{Bacon2018}.
He starts from the $n$-step model, and its corresponding Bellman-like equation for policy evaluation, which would be $\Vpi = \sum_{t = 0}^{n - 1} (\gamma \PKernelpi)^t \rpi + (\gamma \PKernelpi)^n \Vpi$ (when $n = 1$, this is the usual Bellman equation).
The value of $n$ determines the number of unrolling steps.
When $n$ is randomly selected through a process that at each step decides whether to terminate or continue the unrolling with a probability determined by a function $\lambda: \XX \times \XX \ra [0,1]$, where $\lambda(x,x')$ depends on two consecutive states $x$ and $x'$, this leads to the so-called $\lambda$-models. This is closely related to the $\beta$-models of~\citep{Sutton1995}.
A $\lambda$-model leads to a generalized Bellman equation. Bacon interprets the generalized Bellman equation as a particular choice of matrix splitting. The termination function $\lambda$ leads to a matrix splitting $\Mpi(\lambda)$ and $\Npi(\lambda)$. This in turn determines the convergence rate of the iterative VI-like procedure for the computation of the value function, as the convergence rate depends on the spectral radius of $\Mpi(\lambda)^{-1} \Npi(\lambda)$.
Similar remarks as the case of options applies: 
\citet[Chapter 4]{Bacon2018} sheds light to why already existing methods work, but it does not introduce a new algorithm; the analyzed algorithms do not benefit from an existence of an approximate model $\PKernelhat$.

\citet{Porteus1975} propose several transformations to the reward and the probability transition matrix with the goal of improving the computational cost of solving the transformed MDP.
One of the transformations, called \emph{pre-inverse transform}, has some similarities with the operator splitting of this work. The end result, however, is different.
That work considers a matrix $W^\pi$ and define $\tilde{r}^\pi = (\Id - W^\pi)^{-1} \rpi$ and
$\tilde{P}^\pi = (\Id - W^\pi)^{-1} (\PKernelpi - W^\pi)$.
It requires that for any $\pi$, the matrix $W^\pi$ be a lower triangular and be dominated by $\PKernelpi$ as $0 \leq W^\pi \leq \PKernelpi$ ($W^\pi$ does not need to be a stochastic matrix). The paper then suggests performing one step of the Value Iteration as
\[
	V_{k} \leftarrow \argmax_{\pi} (\Id - W^\pi)^{-1} \left[ \rpi + (\PKernelpi - W^\pi) V_{k-1} \right].
\]
If $W^\pi$ was $\PKernelpihat$, this would be the same as~\eqref{eq:OSVI-Control}. But we consider a probabilistic model $\PKernelpihat$, which does not satisfy the setup of that work, including being a lower triangular or dominated by $\PKernelpi$.
That paper in fact considers $W^\pi$ to be the lower triangular part of $\PKernelpi$ (i.e., $[W^\pi]_{x,x'} = [\PKernelpi]_{x,x'}$ for $1 \leq x' \leq x \leq | \XX |$) and zero otherwise), and then benefits from the lower triangularity of $W^\pi$ to re-derive the Gauss-Seidel variant of VI.

Porteus referred to \citet{Varga1962} to motivate another variant of pre-inverse transformation, in which $W^\pi$ is not only dominated by $\PKernelpi$, but also is diagonal. In that case, 
larger $W^\pi$ leads to smaller spectral radius, which determines the convergence rate. If $W^\pi$ is selected to be the diagonal part of $\PKernelpi$ (i.e., $[W^\pi]_{x,x} = [\PKernelpi]_{x,x}$, and zero for other elements), one retrieves the Jacobi variant of VI.

Although it is difficult to be sure why the condition $0 \leq W^\pi \leq \PKernelpi$ was imposed, the paper's reference to \citet{Varga1962} suggests that he was influenced by the concept of regular splitting, which is satisfied under the aforementioned condition.\todo{Maybe can be removed or commented out, as I cannot be sure about someone's intention 47 years ago. -AMF}

\fi

\begin{ack}
We would like to thank the anonymous reviewers for their comments that helped us to improve the clarity of the paper, as well as other members of the Adaptive Agents Lab who provided feedback on a draft of this paper.
AMF acknowledges the funding from the Canada CIFAR AI Chairs program, as well as the
support of the Natural Sciences and Engineering Research Council of Canada (NSERC) through the Discovery Grant program.
AR was partially supported by Borealis AI through the Borealis AI Global Fellowship Award.
Resources used in preparing this research were provided, in part, by the Province of Ontario, the Government of Canada through CIFAR, and companies sponsoring the Vector Institute.
\end{ack}

\newpage

\end{document}
